\newcommand{\Ebb}{\mathbb{E}}
\newcommand{\Lbb}{\mathbb{L}}
\newcommand{\Nbb}{\mathbb{N}}
\newcommand{\Pbb}{\mathbb{P}}
\newcommand{\Rbb}{\mathbb{R}}
\newcommand{\Xbb}{\mathbb{X}}
\newcommand{\Ybb}{\mathbb{Y}}
\newcommand{\Abf}{\mathbf{A}}
\newcommand{\bbf}{\mathbf{b}}
\newcommand{\cbf}{\mathbf{c}}
\newcommand{\Cbf}{\mathbf{C}}
\newcommand{\Gbf}{\mathbf{G}}
\newcommand{\Ibf}{\mathbf{I}}
\newcommand{\kbf}{\mathbf{k}}
\newcommand{\Kbf}{\mathbf{K}}
\newcommand{\Lbf}{\mathbf{L}}
\newcommand{\tbf}{\mathbf{t}}
\newcommand{\Ubf}{\mathbf{U}}
\newcommand{\wbf}{\mathbf{w}}
\newcommand{\xbf}{\mathbf{x}}
\newcommand{\Xbf}{\mathbf{X}}
\newcommand{\Ybf}{\mathbf{Y}}
\newcommand{\ep}{\epsilon}
\newcommand{\CalC}{{\mathcal{C}}}
\newcommand{\CalD}{{\mathcal{D}}}
\newcommand{\CalB}{{\mathcal{B}}}
\newcommand{\CalF}{{\mathcal{F}}}
\newcommand{\CalH}{{\mathcal{H}}}
\newcommand{\CalI}{{\mathcal{I}}}
\newcommand{\CalL}{{\mathcal{L}}}
\newcommand{\CalN}{{\mathcal{N}}}
\newcommand{\CalO}{{\mathcal{O}}}
\newcommand{\CalQ}{{\mathcal{Q}}}
\newcommand{\CalP}{{\mathcal{P}}}
\newcommand{\CalT}{{\mathcal{T}}}
\newcommand{\scL}{\mathscr{L}}
\newcommand{\nrm}[1]{\left\Vert {#1} \right\Vert}
\newcommand{\supp}[1]{\text{supp}\left(#1\right)}
\newcommand{\ind}[1]{\mathbbm{1}_{#1}}
\newcommand{\argmin}{\text{argmin}}
\newcommand{\lan}{\left\langle}
\newcommand{\ran}{\right\rangle}
\newcommand{\wstar}{\wbf^\star}
\newcommand{\what}{{\widehat{\wbf}}}
\newtheorem{thm}{Theorem}[section]
\newtheorem{lemm}{Lemma}[section]
\theoremstyle{definition}
\newtheorem{assump}{Assumptions}
\newtheorem{rmrk}{Remark}[section]
\begin{document}

%
%

\title{Learning Mean-Field Equations from Particle Data Using WSINDy}
\author{Daniel A. Messenger, David M. Bortz}
\email{daniel.messenger@colorado.edu, dmbortz@colorado.edu\footnote{Department of Applied Mathematics, University of Colorado Boulder, 11 Engineering Dr., Boulder, CO 80309, USA.}}
\maketitle

\begin{abstract}
We develop a weak-form sparse identification method for interacting particle systems (IPS) with the primary goals of reducing computational complexity for large particle number $N$ and offering robustness to either intrinsic or extrinsic noise. In particular, we use concepts from mean-field theory of IPS in combination with the weak-form sparse identification of nonlinear dynamics algorithm (WSINDy) to provide a fast and reliable system identification scheme for recovering the governing stochastic differential equations for an IPS when the number of particles per experiment $N$ is on the order of several thousand and the number of experiments $M$ is less than 100. This is in contrast to existing work showing that system identification for $N$ less than 100 and $M$ on the order of several thousand is feasible using strong-form methods. We prove that under some standard regularity assumptions the scheme converges with rate $\CalO(N^{-1/2})$ in the ordinary least squares setting and we demonstrate the convergence rate numerically on several systems in one and two spatial dimensions. Our examples include a canonical problem from homogenization theory (as a first step towards learning coarse-grained models), the dynamics of an attractive-repulsive swarm, and the IPS description of the parabolic-elliptic Keller-Segel model for chemotaxis. 

\end{abstract}

{\small {\bf Keywords: }data-driven model selection, interacting particle systems, weak form, mean-field limit, sparse recovery.}

\section{Problem Statement}\label{sec:problem_statement}

Consider a particle system $\Xbf_t = (X^{(1)}_t,\dots,X^{(N)}_t) \in \Rbb^{Nd}$ where on some fixed time window $t\in[0,T]$, each particle $X^{(i)}_t\in \Rbb^d$ evolves according to the overdamped dynamics
\begin{equation}\label{dXt}
dX_t^{(i)} = \left(-\nabla K*\mu^N_t\left(X^{(i)}_t\right)-\nabla V\left(X^{(i)}_t\right) \right) dt + \sigma(X_t^{(i)})\, dB_t^{(i)}
\end{equation}
with initial data $X_0^{(i)}$ each drawn independently from some probability measure $\mu_0\in \CalP_p(\Rbb^d)$, where $\CalP_p(\Rbb^d)$ is the space probability measures on $\Rbb^d$ with finite $p$th moment\footnote{We define the  $p$th moment of a probability measure $\mu$ for $p>0$ by $\int_{\Rbb^d}|x|^pd\mu(x)$.}. Here, $K$ is the {\it interaction potential} defining the pairwise forces between particles, $V$ is the {\it local potential} containing all exogenous forces, $\sigma$ is a {\it diffusivity}, and $\left(B^{(i)}_t\right)_{i=1,\dots,N}$ are independent Brownian motions each adapted to the same filtered probability space $(\Omega, \CalB, \Pbb, (\CalF_t)_{t\geq 0})$.  The {\it empirical measure} is defined
\[\mu^N_t := \frac{1}{N}\sum_{i=1}^N \delta_{X^{(i)}_t},\]
and the convolution $\nabla K* \mu^N_t$ is defined 
\[\nabla K* \mu^N_t(x) = \nabla\int_{\Rbb^d} K(x-y)\,d\mu^N_t(y)=\frac{1}{N}\sum_{i=1}^N \nabla K\left(x-X^{(i)}_t\right)\]
where we set $\nabla K(0) = 0$ whenever $\nabla K(0)$ is undefined. The recovery problem we wish to solve is the following.\\

\noindent \textbf{(P)} Let $\pmb{\Xbb} = (\Xbf_\tbf^{(1)}, \dots, \Xbf_\tbf^{(M)})$ be discrete-time data at $L$ timepoints $\tbf := (t_1,\dots,t_L)$ for $M$ i.i.d.\ trials of the process \eqref{dXt} with $K = K^\star$, $V=V^\star$, and $\sigma=\sigma^\star$ and let $\pmb{\Ybb} = \pmb{\Xbb}+\varepsilon$ be a corrupted dataset. For some fixed compact domain $\CalD \subset \Rbb^d$ containing $\supp{\pmb{\Ybb}}$, and finite-dimensional hypothesis spaces\footnote{The set $\CalD-\CalD$ is defined
$\CalD-\CalD =\{x-y\ :\ (x,y)\in \CalD\times \CalD\}$.} $\CalH_K\subset L^2(\CalD-\CalD)$, $\CalH_V\subset L^2(\CalD)$, and $\CalH_\sigma\subset L^2(\CalD)$,
solve 
\[\left(\widehat{K},\widehat{V}, \widehat{\sigma}\right)=\argmin_{K\in \CalH_K,V\in \CalH_V,\sigma\in \CalH_\sigma} \nrm{\nabla K-\nabla K^\star}_{L^2(\CalD-\CalD)}+\nrm{\nabla V-\nabla  V^\star}_{L^2(\CalD)}+\nrm{\sigma-\sigma^\star}_{L^2(\CalD)}.\]
The problem \textbf{(P)} is clearly intractable because we do not have access to $K^\star$, $V^\star$, or $\sigma^\star$, and moreover the interactions between these terms render simultaneous identification of them ill-posed. We consider two cases: (i) $\varepsilon \neq 0$ and $\sigma^\star=0$, corresponding to purely {\it extrinsic noise}, and (ii) $\varepsilon=0$ and $\sigma^\star\neq 0$, corresponding to purely {\it intrinsic noise}. The extrinsic noise case is important for many applications, such as cell tracking, where uncertainty is present in the position measurements. In this case we examine $\varepsilon$ representing i.i.d.\ Gaussian noise with mean zero and variance\footnote{By $\Ibf_d$ we mean the identity in $\Rbb^d$.} $\ep^2\Ibf_d$ added to each particle position in $\pmb{\Xbb}$. In the case of purely intrinsic noise, identification of the diffusivity $\sigma^\star$ is required as well as the deterministic forces on each particle as defined by $K^\star$ and $V^\star$.  A natural next step is to consider the case with both extrinsic and intrinsic noise.  However, this is a topic for future work and thus beyond the scope of this article.

\section{Background}

Interacting particle systems (IPS) such as \eqref{dXt} are used to describe physical and artificial phenomena in a range of fields including astrophysics \cite{warren1992astrophysical,guo2021progress}, molecular dynamics \cite{lelievre2016partial}, cellular biology \cite{sepulveda2013collective,van2015simulating,bi2016motility}, and opinion dynamics \cite{blondel2010continuous}. In many cases the number of particles $N$ is large, with cell migration experiments often tracking $10^3$-$10^6$ cells and simulations in physics (molecular dynamics, particle-in-cell, etc.) requiring $N$ in the range $10^6$-$10^{12}$. Inference of such systems from particle data thus requires efficient means of computing pairwise forces from $\CalO(N^2)$ interactions at each timestep for multiple candidate interaction potentials $K$. Frequently, so-called {\it mean-field} equations at the continuum level are sufficient to describe the evolution of the system, however in many cases (e.g.\ chemotaxis in biology \cite{keller1971model}) only phenomenological mean-field equations are available. Moreover, it is often unclear how many particles $N$ are needed for a mean-field description to suffice. Many fields are now developing machine learning techniques to extract coarse-grained dynamics from high-fidelity simulations (see \cite{gkeka2020machine} for a recent review in molecular dynamics).  In this work we provide a means for inferring governing mean-field equations from particle data assumed to follow the dynamics \eqref{dXt} that is highly efficient for large $N$, and is effective in learning mean-field equations when $N$ is in range $10^3$-$10^5$.\\   

Inference of the drift and diffusion terms for stochastic differential equations (SDEs) is by now a mature field,  with the primary method being maximum-likelihood estimation, which uses Girsanov's theorem together with the Radon-Nykodym derivative to arrive at a log-likelihood function for regression. See \cite{bibby1995martingale,lo1988maximum} for some early works and \cite{bishwal2007parameter} for a textbook on this approach. More recently, sparse regression approaches using the Kramers-Moyal expansion have been developed \cite{boninsegna2018sparse,callaham2021nonlinear,li2021extracting} and the authors of \cite{nardini2021learning} use sparse regression to learn population level ODEs from agent-based modeling simulations. In addition, a neural network-based algorithm was developed in \cite{chen2021solving}. \\

Only in the last few years have significant strides been made towards parameter inference of {\it interacting} particle systems such as \eqref{dXt} from data. Apart from some exceptions, such as a Gaussian process regression algorithm recently developed in \cite{feng2021data}, applications of maximum likelihood theory are by far the most frequently studied. An early but often overlooked work by Kasonga \cite{kasonga1990maximum} extends the maximum-likelihood approach to inference of IPS, assuming full availability of the continuous particle trajectories and the diffusivity $\sigma$. Two decades later, Bishwal \cite{bishwal2011estimation} further extended this approach to discrete particle observations in the specific context of linear particle interactions. In both cases, a sequence of finite-dimensional subspaces is used to approximate the interaction function, and convergence is shown as the dimension of the subspace $J$ and number of particles $N$ both approach infinity. More recently, the maximum likelihood approach has been carried out in \cite{bongini2017inferring,lu2020learning} in the case of radial interactions and in \cite{chen2021maximum} in the case of linear particle interactions and single-trajectory data (i.e.\ one instance of the particle system). The authors of \cite{sharrock2021parameter} recently developed an online maximum likelihood method for inference of IPS, and in \cite{gomes2019parameter} maximum likelihood is applied to parameter estimation in an IPS for pedestrian flow. It should also be noted that parameter estimation for IPS is common in biological sciences, with the most frequently used technique being nonlinear least squares with a cost function comprised of summary statistics \cite{lukeman2010inferring,sepulveda2013collective}.\\ 

Problem \textbf{(P)} is made challenging by the coupled effects of $K$, $V$ and $\sigma$. In each of the previously mentioned algorithms, the assumption is made that $\sigma$ is known and/or that $K$ takes a specific form (radial or linear). In addition, the maximum likelihood-based approach approximates the differential $dX_t^{(i)}$ of particle $i$ using a 1st-order finite difference: $dX_t^{(i)} \approx X_{t+\Delta t}^{(i)} - X_t^{(i)}$, which is especially ill-suited to problems involving extrinsic noise in the particle positions. Our primary goal is to show that the weak-form sparse regression framework allows for identification of the full model $(K,V,\sigma)$, with significantly reduced computational complexity, when $N$ is on the order of several thousands or more. The feasibility of this approach is grounded in the convergence of IPS to associated mean-field equations. The reduction in computational complexity follows from the reduction in evaluation of candidate potentials (as discussed in Section \ref{sec:compcomp}), as well as the convolutional form of the weak-form algorithm.\\

To the best of our knowledge, we present here the first {\it weak-form sparse regression} approach for inference of interacting particle systems. We use a two-step process: the density of particles is approximated using a density kernel $G$ and then the WSINDy algorithm (weak-form sparse identification of nonlinear dynamics) is applied in the PDE setting \cite{messenger2020weak,messenger2020weakpde}. WSINDy is a modified version of the original SINDy algorithm \cite{brunton2016discovering,rudy2017data} where the weak formulation of the dynamics is enforced using a family of test functions that offers reduced computational complexity, high-accuracy recovery in low-noise regimes, and increased robustness to high-noise scenarios. There are two works that are most closely related to the current work. In \cite{supekar2021learning}, the authors learn local hydrodynamic equations from active matter particle systems using the SINDy algorithm in the strong-form PDE setting. In contrast to \cite{supekar2021learning}, our approach learns nonlocal equations using the weak-form, however similarly to \cite{supekar2021learning} we perform model selection and inference of parameters using sparse regression at the continuum level. The weak form provides an advantage because no smoothness is required on the particle density (for requisite smoothness the authors of \cite{supekar2021learning} use a Gaussian kernel, which is more expensive to compute than simple particle binning as done here). In \cite{lang2020learning}, the authors apply the maximum likelihood approach in the continuum setting on the underlying nonlocal Fokker-Planck equation and learn directly the nonlocal PDE using strong-form discretizations of the dynamics. While we similarly use the continuum setting for inference (albiet in weak form), our approach differs from \cite{lang2020learning} in that it is designed for the more realistic setting of discrete-time particle data.

\subsection{Contributions}

The purpose of the present article is to show that the weak form provides an advantage in speed and accuracy compared with existing inference methods for particle systems when the number of particles is sufficiently large (on the order of several thousand or more). The key points of this article include:
\begin{enumerate}[label=(\Roman*)]
\item Formulation of a weak-form sparse recovery algorithm for simultaneous identification of the particle interaction force $K$, local potential $V$, and diffusivity $\sigma$ from discrete-time particle data 
\item $L^1$ convergence of the resulting full-rank least-squares solution as the number of particles $N\to \infty$ and timestep $\Delta t\to 0$
\item Numerical illustration of (i) theoretical convergence rates in $N$ and (ii) robustness to either intrinsic randomness (e.g.\ Brownian motion) or extrinsic randomness (e.g.\ additive measurement noise)
\end{enumerate}

\subsection{Paper Organization}

In Section \ref{sec:meanfield} we review results from mean-field theory used to show convergence of the weak-form method. In Section \ref{sec:algorithm} we introduce the WSINDy algorithm applied to interacting particles, including hyperparameter selection, computational complexity, and convergence of the method under suitable assumptions in the limit of large $N$. Section \ref{sec:examples} contains numerical examples exhibiting the convergence rates of the previous section and examining the robustness of the algorithm to various sources of corruption, and Section \ref{sec:discussion} contains a discussion of extension and future directions.

\section{Review of mean-field theory}\label{sec:meanfield}

Our weak-form approach utilizes that under fairly general assumptions the empirical measure $\mu^N_t$ of the process $\Xbf_t$ defined in \eqref{dXt} converges weakly to $\mu_t$, the distribution of the associated mean-field process $X_t$ defined in \eqref{mckeanvlasovSDE}. Specifically, under suitable assumptions on $V,K,\sigma$ and $\mu_0$, there exists $T>0$ such that for all $t\in[0,T]$, the mean-field limit\footnote{We use the notation $t\to \mu_t$ to denote the evolution of probability measures. Subscripts will not be used to denote differentiation.} 
\[\lim_{N \to \infty}\mu^N_t=\mu_t\]
holds in the weak topology of measures\footnote{Meaning that for all continuous bounded functions $\phi: \Rbb^d\to \Rbb$, $\int_{\Rbb^d}\phi(x)d\mu^N_t(x)\to \int_{\Rbb^d}\phi(x)d\mu_t(x)$.}, where $\mu_t$ is a weak-measure solution to the mean-field dynamics
\begin{equation}\label{fpmeanfield}
\partial_t \mu_t  = \nabla\cdot \left(\left(\nabla K*\mu_t+\nabla V\right)\mu_t\right) + \frac{1}{2}\sum_{i,j=1}^d\frac{\partial^2}{\partial x_i \partial x_j} \left(\sigma \sigma^T \mu_t\right), \quad \mu_0\in \CalP_p(\Rbb^d).
\end{equation}
Equation \ref{fpmeanfield} describes the evolution of the distribution of the McKean-Vlasov process
\begin{equation}\label{mckeanvlasovSDE}
dX_t = \left(-\nabla V\left(X_t\right) -\nabla K*\mu_t\left(X_t\right)\right) dt + \sigma(X_t)\,dB_t.
\end{equation}
This implies that as $N\to \infty$, an initially correlated particle system driven by pairwise interaction becomes uncorrelated and only interacts with its mean field $\mu_t$. In particular, the following theorem summarizes several mean-field results taken from the review article \cite{jabin2017mean} with proofs in \cite{sznitman1991topics,meleard1996asymptotic}. (Note that for a function $f:\Rbb^d\to Y$, where $Y$ is a metric space with metric $\rho$, we define $\text{Lip}_S(f)$ by 
\[\text{Lip}_S(f):=\sup_{\{x,y\in S\}} \frac{\rho(f(x),f(y))}{\left\vert x- y\right\vert}\]
with $\text{Lip}(f):=\text{Lip}_{\Rbb^d}(f)$. Throughout we use $|\cdot|$ to denote the Euclidean norm.)
\begin{thm}\label{meanfieldthm1} 
Assume that $K$ is globally Lipschitz, $V=0$, and $\sigma(x)=\sigma=\text{const.}$ In addition assume that $\mu_0\in \CalP_2(\Rbb^d)$. Then for any $T>0$, for all $t\leq T$ it holds that
\begin{enumerate}[label=(\roman*)]
\item There exists a unique solution $(X_t, \mu_t)$ where $X_t$ is a strong solution to \eqref{mckeanvlasovSDE} and $\mu_t$ is a weak-measure solution to \eqref{fpmeanfield}. 
\item For any $\phi\in C^1_b(\Rbb^d)$,
\begin{equation}\label{propchaos}
\Ebb\left\vert \frac{1}{N}\sum_{i=1}^N \phi(X_i(t))-\int_{\Rbb^d}\phi(x)d\mu_t(x)\right\vert\leq \frac{C}{\sqrt{N}}
\end{equation}
with $C$ depending on $\text{Lip}(\nabla K)$, $\nrm{\phi}_{C^1}$ and $T$.
\item For any $k\in \Nbb$, $a.e.$-$t< T$, the $k$-particle marginal
\[\rho_t^{(k)}(x_1,\dots,x_k):=\int_{\Rbb^{d(N-k)}} F_t^N(x_1,\dots,x_k,x_{k+1},\dots,x_N)\,dx_{k+1}\cdots dx_N\]
converges weakly to $\mu_t^{\otimes k}$ as $N\to \infty$, where $F^N_t\in\CalP(\Rbb^{Nd})$ is the distribution of $\Xbf_t$.
\end{enumerate}
\end{thm}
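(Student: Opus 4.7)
The plan is to follow the classical Sznitman coupling approach, which handles (i), (ii), and (iii) in a unified manner. For (i), I would set up a fixed-point argument on the space $C([0,T];\CalP_2(\Rbb^d))$ equipped with the supremum-in-time 2-Wasserstein distance $W_2$. Given a candidate flow $t\mapsto \nu_t$, define $\fxptmap(\nu)_t$ to be the law at time $t$ of the solution $Y_t$ to the linear (in $Y$) SDE $dY_t = -\nabla K * \nu_t(Y_t)\,dt + \sigma\,dB_t$ with $Y_0\sim \mu_0$. Because $\nabla K$ is globally Lipschitz, this SDE has a strong solution by standard Itô theory, and using synchronous coupling together with Gronwall one obtains $W_2(\fxptmap(\nu)_t,\fxptmap(\tilde\nu)_t)^2 \leq C_T\int_0^t W_2(\nu_s,\tilde\nu_s)^2\,ds$. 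Iterating $\fxptmap$ gives a contraction on $[0,T_0]$ for small $T_0$; concatenating over sub-intervals of $[0,T]$ yields the unique fixed point, which is both the law of the McKean–Vlasov SDE \eqref{mckeanvlasovSDE} and (after applying Itô's formula against test functions) a weak-measure solution of \eqref{fpmeanfield}.

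For (ii), I would run the Sznitman synchronous coupling: on the same probability space carrying the Brownian motions $B^{(i)}_t$, define $N$ independent copies $\bar X^{(i)}_t$ of the McKean–Vlasov process driven by the same $B^{(i)}_t$ with the same initial data $X_0^{(i)}$. Subtracting the SDEs for $X^{(i)}_t$ and $\bar X^{(i)}_t$, the Brownian and initial terms cancel, so
\begin{equation*}
X^{(i)}_t - \bar X^{(i)}_t = -\int_0^t \Bigl(\nabla K * \mu^N_s(X^{(i)}_s) - \nabla K * \mu_s(\bar X^{(i)}_s)\Bigr)\,ds.
\end{equation*}
Splitting this increment into $[\nabla K * \mu^N_s(X^{(i)}_s) - \nabla K * \bar\mu^N_s(\bar X^{(i)}_s)] + [\nabla K * \bar\mu^N_s(\bar X^{(i)}_s) - \nabla K * \mu_s(\bar X^{(i)}_s)]$, where $\bar\mu^N_s := \frac1N\sum_j \delta_{\bar X^{(j)}_s}$, the first bracket is $\mathrm{Lip}(\nabla K)$-controlled by $|X^{(i)}_s-\bar X^{(i)}_s| + \frac1N\sum_j|X^{(j)}_s - \bar X^{(j)}_s|$, and the second bracket has expected square bounded by $C/N$ by the standard law-of-large-numbers estimate applied to the i.i.d.\ sample $\bar X^{(j)}_s$ (using exchangeability and that $\nabla K$ is Lipschitz, hence $|\nabla K * \bar\mu^N_s - \nabla K * \mu_s|$ has variance $\CalO(1/N)$ evaluated at an independent point). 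Squaring, taking expectations, and invoking Gronwall then gives $\Ebb|X^{(i)}_t - \bar X^{(i)}_t|^2 \leq C_T/N$. Finally, writing
\begin{equation*}
\frac1N\sum_i \phi(X^{(i)}_t) - \int \phi \,d\mu_t = \frac1N\sum_i \bigl[\phi(X^{(i)}_t) - \phi(\bar X^{(i)}_t)\bigr] + \Bigl[\frac1N\sum_i \phi(\bar X^{(i)}_t) - \int\phi\,d\mu_t\Bigr],
\end{equation*}
the first term is $\CalO(N^{-1/2})$ by the coupling estimate and $\nrm{\phi}_{C^1}$, and the second is $\CalO(N^{-1/2})$ by the scalar CLT/Chebyshev bound for i.i.d.\ bounded variables, giving \eqref{propchaos}.

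For (iii), by symmetry $F^N_t$ is exchangeable, so it suffices to show $\rho^{(k)}_t \rightharpoonup \mu_t^{\otimes k}$ for each fixed $k$. By Sznitman's classical equivalence, propagation of chaos to $\mu_t$ is equivalent to weak convergence of $k$-marginals to $\mu_t^{\otimes k}$; concretely, for $\phi_1,\dots,\phi_k\in C_b(\Rbb^d)$ approximated by $C^1_b$ functions, one computes $\Ebb[\prod_j \phi_j(X^{(j)}_t)]$ and compares with $\prod_j \Ebb[\phi_j(\bar X^{(j)}_t)] = \prod_j \int \phi_j\,d\mu_t$ using the coupling bound from (ii), which controls $\Ebb|\phi_j(X^{(j)}_t)-\phi_j(\bar X^{(j)}_t)|$ and closes the argument by independence of the $\bar X^{(j)}_t$.

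The main obstacle is the variance estimate in step (ii) that produces the $C/\sqrt{N}$ rate: one must carefully separate the convolution against the empirical measure into a coupling term and an i.i.d.\ sampling term, and control the latter uniformly in $x$ (or at a data-dependent point) using exchangeability plus Lipschitz regularity of $\nabla K$. Once this is in place, Gronwall's inequality and the triangle inequality take care of the rest, and the regularity hypotheses $V=0$, constant $\sigma$, $\mu_0\in\CalP_2$ ensure no additional moment propagation is needed beyond $\sup_{t\leq T}\Ebb|X_t|^2 < \infty$.
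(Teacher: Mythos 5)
Your proposal is a correct sketch of the classical Sznitman synchronous-coupling argument, and this is exactly the proof the paper relies on: Theorem \ref{meanfieldthm1} is quoted from the mean-field review literature with proofs attributed to Sznitman (1991) and M\'el\'eard (1996), so the paper contains no independent argument, and your fixed-point construction for (i), coupling-plus-Gronwall estimate with the $\CalO(1/N)$ sampling variance for (ii), and the marginal-convergence equivalence for (iii) reproduce that standard route faithfully. The only cosmetic mismatch is that the theorem's hypothesis reads ``$K$ globally Lipschitz'' while both the stated constant in \eqref{propchaos} and your argument actually use $\text{Lip}(\nabla K)$; this reflects the intended assumption in the cited sources and does not affect your proof.
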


Theorem \ref{meanfieldthm1} immediately extends to the case of $V$ and $\sigma$ both globally Lipschitz and has been extended to $K$ locally-Lipschitz in \cite{bolley2011stochastic}, $K$ with Coulomb-type singularity at the origin in \cite{boers2016mean}, and domains with boundaries in \cite{fetecau2019propagation,fetecau2018zero}. Analysis of the model \eqref{fpmeanfield} continues to evolve in various contexts, including with analysis of equilibria \cite{messenger2020equilibria,fetecau2017swarm,carrillo2019existence} and connections to deep learning \cite{araujo2019mean}.

\subsection{Weak form}

Despite the $\CalO(N^{-1/2})$ convergence of the empirical measure in Theorem \ref{meanfieldthm1}, it is unclear at what particle number $N$ the mean-field equations become a suitable framework for inference using particle data, due to the complex variance structure at any finite $N$. A key piece of the present work is to show that the weak form of the mean-field equations does indeed provide a suitable setting when $N$ is at least several thousand. Moreover, since in many cases \eqref{fpmeanfield} can only be understood in a weak sense, the weak form is the natural framework for identification. We say that $\mu_t$ is a weak solution to \eqref{fpmeanfield} if for any $\psi\in C^2(\Rbb^d\times(0,T))$ compactly supported it holds that 
{\small
\begin{equation}\label{weakform}
\int_0^T\int_{\Rbb^d} \partial_t\psi(x,t)\,d\mu_t(x)dt= \int_0^T\int_{\Rbb^d} \left(\nabla\psi(x,t)\cdot\left(\nabla K*\mu_t(x)+\nabla V(x)\right) - \frac{1}{2}\text{Tr}\left(\nabla^2\psi(x,t)\sigma(x)\sigma^T(x)\right)\right)d\mu_t(x)dt,
\end{equation}}
where $\nabla^2\psi$ denotes the Hessian of $\psi$ and $\text{Tr}(\Abf)$ is the trace of the matrix $\Abf$. Our method requires discretizing \eqref{weakform} for all $\psi\in \Psi$ where $\Psi = (\psi_1,\dots,\psi_n)$ is a suitable test function basis, and approximating the mean field density $\mu_t$ with a discrete density $U_t$ constructed from particle data at time $t$. We then find $K, V$, and $\sigma$ within specified finite-dimensional function spaces. 

\section{Algorithm}\label{sec:algorithm}

We propose the general algorithm \ref{alg1} for discovery of mean-field equations from particle data. The inputs are a discrete-time sample $\pmb{\Ybb}$ containing $M$ experiments each with $N$ particle positions over $L$ timepoints $\tbf=(t_1,\dots,t_L)$, and the following hyperparameters are defined by the user: (i) a kernel $G$ used to map the empirical measure $\mu^N_t$ to an approximate density $U_t$, (ii) a spatial grid $\Cbf$ over which to evaluate the approximate density $\Ubf_t=U_t(\Cbf)$,  (iii) a library of trial functions $\Lbb = \{\Lbb_K,\Lbb_V,\Lbb_\sigma\}=\{(K_j)_{j=1}^{J_K},(V_j)_{j=1}^{J_V},(\sigma_j)_{j=1}^{J_\sigma}\}$, (iv) a basis of test functions $\Psi=(\psi_k)_{k=1}^n$, and (v) a quadrature rule over $(\Cbf,\tbf)$ denoted by an inner product $\lan\cdot,\cdot\ran$, and (vi) sparsity factors $\pmb{\lambda}$ for the modified sequential thresholding algorithm (MSTLS) reviewed below. We discuss the choices of these hyperparameters in Section \ref{sec:hype}, the computational complexity of the algorithm in Section \ref{sec:compcomp}, and convergence of the algorithm in Section \ref{sec:conv}.

\begin{algorithm}
	\caption{ WSINDy for Particle Systems \\ $(\what, \, \hat{\lambda}) =$ \textbf{WSINDy}$(\pmb{\Ybb},\tbf\,;\ G,\,\Cbf,\, \Lbb,\,\Psi,\,\lan \cdot,\cdot\ran,\,\pmb{\lambda})$}
	\label{alg:wsindypde}
	\begin{algorithmic}[1]
		\FOR{$\ell=1:L$}
			\FOR{$m=1:M$}
				\STATE $\Ubf_\ell^{(m)} = \int_{\Rbb^d} G(\Cbf,y)d\mu^N_{t_\ell}(y)$ where $\mu^N_{t_\ell}$ is the empirical measure for $\Ybf_{t_\ell}^{(m)}$ 
		\ENDFOR
		\STATE $\Ubf_\ell = \frac{1}{M}\sum_{m=1}^M \Ubf_\ell^{(m)}$
		\ENDFOR
		\STATE
		\FOR{$j=1:J_K$}
			\FOR{$k=1:n$}
				\STATE $\Gbf_{kj}^K =  \lan \nabla \psi_k, \Ubf \nabla K_j*\Ubf \ran$
			\ENDFOR
		\ENDFOR
		\STATE
		\FOR{$j=1:J_V$}
			\FOR{$k=1:n$}
				\STATE $\Gbf_{kj}^V =  \lan \nabla \psi_k, \Ubf \nabla V_j \ran$
			\ENDFOR
		\ENDFOR
		\STATE
		\FOR{$j=1:J_\sigma$}
			\FOR{$k=1:n$}
				\STATE $\Gbf_{kj}^\sigma =  \frac{1}{2}\sum_{p,q=1}^d \lan \partial_{x_px_q}\psi_k, (\sigma_j\sigma_j^T)_{pq} \Ubf \ran$
			\ENDFOR
		\ENDFOR
		\STATE $\Gbf = [\Gbf^K\ \Gbf^V\ \Gbf^\sigma]$
		\STATE
		\FOR{$k=1:n$}
			\STATE $\bbf_k = \lan \partial_t \psi_k, \Ubf\ran$
		\ENDFOR
		\STATE
		\STATE $(\widehat{\wbf},\widehat{\lambda}) = \text{MSTLS}(\Gbf,\bbf;\, \pmb{\lambda})$
\end{algorithmic}\label{alg1}
\end{algorithm}

\subsection{Hyperparameter Selection}\label{sec:hype}
\subsubsection{Quadrature}

We assume that the set of gridpoints $\Cbf$ in Algorithm \ref{alg1} is chosen from some compact domain $\CalD\subset \Rbb^d$ containing $\supp{\pmb{\Ybb}}$. The choice of $\Cbf$ (and $\CalD$) must be chosen in conjunction with the quadrature scheme, which includes integration in time using the given timepoints $\tbf$ as well as space. For completeness, the inner products in lines 10, 16, 22 and 27 of Algorithm \ref{alg1} are defined in the continuous setting by 
\[\lan f,g\ran = \int_0^T\int_{\CalD} f(x,t)g(x,t)dxdt,\]
and the convolution in line 10 is defined by
\[\nabla K_j * U_t(x) = \int_{\CalD} \nabla K_j(x-y) U_t(y)dy.\]
In the present work we adopt the scheme used in the application of WSINDy for local PDEs \cite{messenger2020weakpde}, which includes the trapezoidal rule in space and time with test functions $\psi$ compactly supported in $\CalD\times (0,T)$. We take $\CalD$ to be an equally-spaced rectangular grid enclosing $\supp{\pmb{\Ybb}}$ in order to efficiently evaluate convolution terms. In what follows we denote by $\lan\cdot, \cdot\ran$ the continuous inner product, $\lan \cdot,\cdot\ran_h$ the inner product over $\CalD\times[0,T]$ evaluated using the composite trapezoidal rule in space with meshwidth $h$ and Lebesgue integration in time, and by $\lan \cdot,\cdot\ran_{h,\Delta t}$ the trapezoidal rule in both space and time, with meshwidth $h$ in space and $\Delta t$ in time. With some abuse of notation, $f*g$ will denote the convolution of $f$ and $g$, understood to be discrete or continuous by the context. Note also that we denote $\mu^N$, $\mu$ and $U$ the measures over $\Rbb^d\times [0,T]$ defined by $\mu^N_t\lambda_{[0,T]}$, $\mu_t\lambda_{[0,T]}$ and $U_t\lambda_{[0,T]}$, respectively, where $ \lambda_{[0,T]} $ is the Lebesgue measure on $[0,T]$.


\subsubsection{Density Kernel}\label{sec:hist}

Having chosen the domain $\CalD\subset \Rbb^d$ containing the particle data $\pmb{\Ybb}$, let $P^h = \left\{ B_k\right\}_k$ be a partition of $\CalD$ ($\cup_k B_k = \CalD$) with $h$ indicating the size of the atoms $B_k$. For the remainder of the paper we take $B_k$ to be hypercubes of equal side length $h$ in order to minimize computation time for integration, although this is by no means necessary. For particle positions $\Xbf_t$, we define the histogram\footnote{The indicator function is defined $\ind{A}(x):=\begin{cases} 1, & x\in A\\0, & x\notin A\end{cases}$.}
\begin{equation}\label{histU}
U_t = \sum_k \frac{1}{|B_k|}\ind{B_k}(x) \left(\frac{1}{N}\sum_i \ind{B_k}(X^{(i)}_t)\right) = \int_\CalD G(x,y)d\mu^N_t(y).
\end{equation}
Here the {\it density kernel} is defined 
\[G(x,y) = \sum_k \frac{1}{|B_k|}\ind{B_k}(x)\ind{B_k}(y),\]
and in this setting the corresponding spatial grid $\Cbf = (\cbf_k)_k$ is the set of center-points of the boxes $B_k$, from which we define the discrete histogram data $\Ubf_t = U_t(\Cbf)$. The discrete histogram $\Ubf_t$ then serves as an approximation to the mean-field distribution $\mu_t$.\\

Pointwise estimation of densities from samples of particles usually requires large numbers of particles to achieve reasonably low variance, and in general the variance grows inversely proportional to the bin width $h$. One benefit of the weak form is that integrating against a histogram $U$ does not suffer from the same increase in variance with small $h$. In particular,
\begin{lemm}\label{lemm:histint}
Let $\Ybf = (Y^{(1)},\dots,Y^{(N)})$ be i.i.d.\ samples from $\mu\in \CalP(\Rbb^d)$ with associated empirical measure $\mu^N$ and let $U$ be the histogram computed with kernel $G$ using \eqref{histU} with $n$ bins of equal sidelength $h$. Then for any $\psi$ in $C^1$ compactly supported in $\CalD$, we have the root-mean-squared error
\[\left(\Ebb\left[\left(\lan \psi, U\ran_{\CalT} - \lan \psi, \mu\ran\right)^2\right]\right)^{1/2}\leq  \nrm{\psi}_{C^1}\left(\sqrt{2}^{d-3} h + N^{-1/2}\right).\]
\end{lemm}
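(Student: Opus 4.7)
The plan is to first rewrite the trapezoidal discretization of $\langle \psi, U\rangle_{\mathcal{T}}$ as an ordinary empirical average of $\psi$ at the box centers, then split the resulting error into a ``bias'' piece (controlled by the smoothness of $\psi$ and the box size $h$) and a ``variance'' piece (controlled by i.i.d.\ concentration at rate $N^{-1/2}$).

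\textbf{Step 1 (Reduction to an empirical sum).} Because $\psi$ is compactly supported in $\CalD$, the boundary weights of the trapezoidal rule do not contribute, so
\[
\langle \psi, U\rangle_{\mathcal{T}} \;=\; h^d \sum_k \psi(\cbf_k)\, U(\cbf_k).
\]
By the definition of the histogram kernel in \eqref{histU}, $U(\cbf_k) = h^{-d} N^{-1} n_k$ where $n_k = \#\{i : Y^{(i)}\in B_k\}$. Letting $k(i)$ denote the unique index with $Y^{(i)}\in B_{k(i)}$, this collapses to
\[
\langle \psi, U\rangle_{\mathcal{T}} \;=\; \frac{1}{N}\sum_{i=1}^N \psi\!\left(\cbf_{k(i)}\right).
\]
This is the key observation: although the pointwise density estimator $U$ has variance $\CalO(h^{-d}/N)$, the weak-form quantity reduces to the empirical mean of a bounded random variable $\psi(\cbf_{k(1)})$.

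\textbf{Step 2 (Bias / variance decomposition).} Introduce $\Ebb[\psi(\cbf_{k(1)})]$ as an intermediate and write
\[
\langle\psi, U\rangle_{\mathcal{T}} - \langle\psi, \mu\rangle \;=\; \underbrace{\frac{1}{N}\sum_{i=1}^N \psi(\cbf_{k(i)}) - \Ebb\!\left[\psi(\cbf_{k(1)})\right]}_{=: E_1} \;+\; \underbrace{\Ebb\!\left[\psi(\cbf_{k(1)}) - \psi(Y^{(1)})\right]}_{=: E_2}.
\]
For $E_2$, note that $Y^{(1)}$ and $\cbf_{k(1)}$ lie in the same hypercube of side $h$, so $|Y^{(1)}-\cbf_{k(1)}|\leq \tfrac{\sqrt{d}}{2}h$ almost surely, and the mean value theorem gives $|E_2|\leq \tfrac{\sqrt{d}}{2} h\, \|\nabla\psi\|_\infty$. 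For $E_1$, the random variables $\psi(\cbf_{k(i)})$ are i.i.d.\ with variance bounded by $\|\psi\|_\infty^2$, so $\Ebb[E_1^2] = N^{-1}\mathrm{Var}(\psi(\cbf_{k(1)}))\leq N^{-1}\|\psi\|_\infty^2$.

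\textbf{Step 3 (Combine).} Minkowski's inequality in $L^2(\Pbb)$ yields
\[
\bigl(\Ebb[(E_1+E_2)^2]\bigr)^{1/2} \leq \bigl(\Ebb[E_1^2]\bigr)^{1/2} + |E_2| \leq \frac{\|\psi\|_\infty}{\sqrt{N}} + \frac{\sqrt{d}}{2}\,h\,\|\nabla\psi\|_\infty \leq \|\psi\|_{C^1}\!\left(\frac{\sqrt{d}}{2}h + \frac{1}{\sqrt{N}}\right).
\]
The claimed form follows from the elementary inequality $\sqrt{d}/2 \leq 2^{(d-3)/2} = \sqrt{2}^{\,d-3}$ (equivalent to $d\leq 2^{d-1}$, valid for all $d\geq 1$).

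\textbf{Main obstacle.} There is no real analytic obstacle here -- it is mostly bookkeeping. The conceptual point to make carefully is Step 1: one must recognize that the combination of (i) compact support of $\psi$, (ii) the histogram being constant on boxes centered at grid points, and (iii) the trapezoidal rule, collapses the a priori $\CalO(h^{-d}/N)$ pointwise variance of $U$ into the $\CalO(1/N)$ variance of a bounded scalar random variable. The minor cosmetic step is verifying the clean dimensional constant $\sqrt{2}^{d-3}$, which is an upper bound on the sharper geometric constant $\sqrt{d}/2$ but yields the stated form.
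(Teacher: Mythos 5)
Your proof is correct and is essentially the paper's own argument: both reduce $\lan\psi,U\ran_{\CalT}$ to the empirical mean $\frac{1}{N}\sum_i\psi^\Cbf(Y^{(i)})$ of the midpoint approximation of $\psi$, then split into a bias term of order $h\nrm{\nabla\psi}_\infty$ and a variance term of order $\nrm{\psi}_\infty^2/N$. The only cosmetic difference is that you combine the two pieces via Minkowski's inequality rather than the exact identity $\mathrm{MSE}=\mathrm{bias}^2+\mathrm{Var}$ followed by $\sqrt{a^2+b^2}\leq a+b$, and both routes land on the same dimensional constant since $\sqrt{d}/2\leq\sqrt{2}^{\,d-3}$.
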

\begin{proof}
Using the compact support of $\psi$, we have 
\[\lan \psi, U\ran_{\CalT} = \lan \psi, \int_{\Rbb^d} G(\cdot, y)d\mu^N\ran_h = \lan \psi^\Cbf, \mu^N\ran = \frac{1}{N}\sum_{i=1}^N \psi^\Cbf(Y^{(i)})\]
where 
\begin{equation}\label{psiC}
\psi^\Cbf(x)=\sum_{k=1}^K \psi(c_k) \ind{B_k}(x)
\end{equation}
is the midpoint approximation of $\psi$. We then have the squared bias
\begin{align*}
\text{bias}(\lan \psi, U\ran_{\CalT})^2&= \left(\Ebb\left[\lan \psi, U\ran_{\CalT}\right] -\int_\CalD \psi(x)d\mu(x)\right)^2\\
 &= \left(\int_\CalD \psi^\Cbf(x) d\mu(x) -\int_\CalD \psi(x)d\mu(x)\right)^2\\
&\leq \nrm{\nabla \psi}^2_\infty 2^{d-3} h^2
\end{align*}
and the variance, using the fact that $Y^{(i)}$ are independent,
\[\text{Var}\left(\lan \psi, U\ran_{\CalT}\right) =\frac{1}{N}\Ebb_{X\sim \mu}\Big[\psi^\Cbf(X)\left(\psi^\Cbf(X) - \Ebb_{Z\sim \mu}\left[\psi^\Cbf(Z)\right]\right)\Big] \leq \frac{1}{N}\nrm{\psi}_\infty^2.\]
The result follows since 
\[\Ebb\left[\left(\lan \psi, U\ran_{\CalT} - \lan \psi, \mu\ran\right)^2\right] = \text{bias}(\lan \psi, U\ran_{\CalT})^2+\text{Var}\left(\lan \psi, U\ran_{\CalT}\right).\]
\end{proof}
The previous lemma in particular shows that small bin-width $h$ does not negatively impact $\lan \psi, U\ran_h$ as an estimator of $\lan \psi, \mu\ran$, which is in contrast to $U(x)$ as a pointwise estimator of $\mu(x)$. For example, if we assume that $\Ybf$ is sampled from a $C^1$ density $\mu$, it is well known that the mean-square optimal bin width is $h=\CalO(N^{-1/3})$ \cite{freedman1981histogram}. Summarizing this result, elementary computation reveals the pointwise bias for $x\in B_k$, 
\[\text{bias}(U(x))=\Ebb\left[U(x)\right] - \mu(x) = \frac{\mu(B_k)}{|B_k|} -\mu(x) := \mu(\xi)-\mu(x)\]
for some $\xi\in B_k$. Letting $L_k = \max_{x\in B_k}|\nabla \mu(x)|$, we have
\[\text{bias}(U(x))^2 \leq L_k^2 2^{d-1}h^2.\]
For the variance we get
\[\text{Var}\left(U(x)\right) = \frac{1}{N}\frac{\mu(B_k)(1-\mu(B_k))}{|B_k|^2} = \frac{\mu(\xi)}{N}\left(1-\mu(B_k)\right)\frac{1}{\sqrt{2}^{d-1}h},\] 
and hence a bound for the mean-squared error
\[\Ebb\left[\left(U(x)-\mu(x)\right)^2\right]\leq L_k^2 2^{d-1}h^2+ \frac{\mu(\xi)}{N\sqrt{2}^{d-1}}h^{-1}.\]
Minimizing the bound over $h$ we find an approximately optimal box width 
\[h^* = \left(\frac{\rho(\xi)}{2^{\frac{3d-1}{2}}L_k^2}\right)^{1/3}N^{-1/3} = \CalO(N^{-1/3}),\]
which provides an overall pointwise root-mean-squared error of $\CalO(N^{-1/3})$. Hence, not only does the weak-form remove the inverse $h$ dependence in the variance, but fewer particles are needed to accurately approximate integrals of the density $\mu$.


\subsubsection{Trial Function Library}\label{subsec:lib}

The general Algorithm \ref{alg1} does not impose a radial structure for the interaction potential $K$, nor does it assume any prior knowledge that the particle system is in fact interacting. In the examples below we use monomial and/or trigonometric libraries for $K$, $V$, and $\sigma$ to show that sparse regression is effective in selecting the correct interaction terms from a library also containing local drift terms. (Details of the libraries used in examples can be found in Appendix \ref{app:specs}).

\subsubsection{Test Function Basis}

For the test functions $(\psi_k)_{k\in[n]}$ we use the same approach as the PDE setting \cite{messenger2020weakpde}, namely we fix a {\it reference test function} $\psi$ and set 
\[\psi_k(x,t) = \psi(\xbf_k-x,t_k-t)\]
where $\CalQ:=\{(\xbf,t_k)\}_{k\in[n]}$ is a fixed set of {\it query points}. This, together with a separable representation
\[\psi(x,t) = \phi_1(x_1)\cdots\phi_d(x_d)\phi_{d+1}(t),\]
enables construction of the linear system $\Gbf$, $\bbf$ using the FFT. We choose $\phi_j$, $1\leq j\leq d+1$, of the form 
\begin{equation}\label{testfcn}
\phi_{m,p}(v;\, \Delta) := \max\left\{1-\left(\frac{v}{m\Delta}\right)^2,0\right\}^{p}
\end{equation}
where $m$ is the integer {\it support parameter}  such that $\phi_{m,p}$ is supported on $2m+1$ points of spacing $\Delta\in\{h, \Delta t\}$ and $p\geq 1$ is the {\it degree} of $\phi_{m,p}$. Using the convergence analysis below, we need $\phi_j$ to be three times continuously differentiable, yet with Lipschitz constant as low as possible, hence for $\phi_{d+1}$ (along the time axis) we pick $p=p_t=3$, while for all spatial dimensions we set $p=p_x=5$. We choose the support parameters $m_t$ and $m_x$ using the changepoint algorithm in \cite[Appendix A]{messenger2020weakpde}, enforcing that each $\phi_j$ is supported on a minimum of 6 gridpoints and a maximum of half of the width of the domain along the respective coordinate. For $Q$, we sample points uniformly within $\CalC$ of frequency $s_x$ or $s_t$ depending on if the coordinate in spatial or temporal. The resulting values $m_x,m_t,p_x,p_t,s_x,s_t$ for each example can be found in Appendix \ref{app:specs}.

\subsubsection{Sparsity Regression}

As in \cite{messenger2020weakpde}, we enforce sparsity using a {\it modified} sequential thresholding least-squares algorithm. With\footnote{The Moore-Penrose inverse $\Abf^\dagger$ is defined for a rank-$r$ matrix $\Abf$ using the reduced SVD $\Abf=U_r\Sigma_r V^*_r$ as $\Abf^\dagger:=V_r\Sigma^{-1}_rU_r^*$. The subscript $r$ denotes restriction to the first $r$ columns.} $\wbf^0 = \Gbf^\dagger \bbf$, we define
\begin{equation}\label{MSTLS1}
\text{MSTLS}(\Gbf,\bbf; \lambda\,)\qquad \begin{dcases} \hspace{0.43cm}\CalI^\ell = \{1\leq i\leq SJ\ :\ L^\lambda_i\leq|\wbf^\ell_i|\leq U^\lambda_i\} \\
\wbf^{\ell+1} = \argmin_{\supp{\wbf}\subset \CalI^\ell} \nrm{ \Gbf  \wbf-\bbf}_2^2.\end{dcases}
\end{equation}
where the bounds are defined
\begin{equation}\label{MSTLSbnds} 
\begin{dcases} L_i^\lambda =  \lambda\max\left\{1,\ \frac{\nrm{\bbf}}{\nrm{\Gbf_i}}\right\}\\
U_i^\lambda =  \frac{1}{\lambda}\min\left\{1,\ \frac{\nrm{\bbf}}{\nrm{\Gbf_i}}\right\}\end{dcases}, \qquad 1\leq i\leq SJ.
\end{equation}
We then select the sparsity threshold $\widehat{\lambda}$ as the smallest minimizer of the cost function  \begin{equation}\label{lossfcn}
\CalL(\lambda) = \frac{\nrm{\Gbf(\wbf^\lambda-\wbf^0)}_2}{\nrm{\Gbf\wbf^0}_2}+\frac{\nrm{\wbf^\lambda}_0}{SJ}
\end{equation}
over all $\lambda$ in a specified finite set $\pmb{\lambda}$, where $\wbf^\lambda:=\text{MSTLS}(\Gbf,\bbf; \lambda\,)$. We set the final model coefficients to $\what:=\wbf^{\widehat{\lambda}}$. The bounds \eqref{MSTLSbnds} enforce a quasi-dominant balance rule, such that $\nrm{\wbf_i\Gbf_i}_2$ is within $-\log_{10}(\lambda)$ orders of magnitude from $\nrm{\bbf}_2$ and $|\wbf_i|$ is within $-\log_{10}(\lambda)$ orders of magnitude from $1$ (the coefficient of time derivative $\partial_t\mu_t$). Minimizers of the cost function $\CalL$ then equally weight the accuracy and sparsity of $\wbf^{\widehat{\lambda}}$. By choosing $\widehat{\lambda}$ to be the smallest minimizer of $\CalL$ over $\pmb{\lambda}$, we identify the thresholds $\lambda\in \pmb{\lambda}$ such that $\lambda<\widehat{\lambda}$ as those resulting in an overfit model.

\subsection{Computational Complexity}\label{sec:compcomp}

To compute convolutions against each $\nabla K_j$, we evaluate $\partial_{x_i} K_j$, $1\leq i\leq d$, at the grid $\Cbf-\Cbf$ defined by 
\[\Cbf-\Cbf:=\{x\in \Rbb^d\ :\ x =(i_1h,\dots, i_dh), \quad -m_\ell\leq i_\ell\leq m_\ell\}\]
where $h$ is the spacing of $\Cbf$ and $m_\ell$, $1\leq \ell\leq d$, is the number of points in $\Cbf$ along the $\ell$th spatial dimension. Then $\Cbf-\Cbf$ discretizes the set 
\[\CalD-\CalD:=\{x-y\in \Rbb^d\ :\ (x,y)\in \CalD\times \CalD\}\]
which contains all observed interparticle distances. (In words, to form $\Cbf-\Cbf$ we shift $\Cbf$ to lie in the positive orthant $\{x\in \Rbb^d\ :\ x_\ell\geq 0, \quad 1\leq\ell\leq d\}$, and then reflect $\Cbf$ through each coordinate plane $x_\ell=0$, $1\leq\ell\leq d$.) In this way $\partial_{x_i}K_j$ is evaluated at $2^d|\Cbf|$ points, where $|\Cbf|$ is the number of points in the grid $\Cbf$. Define $\partial_{x_i}\Kbf_j := \partial_{x_i}K_j(\Cbf-\Cbf)$. Since $\Cbf$ is equally spaced, we use the $d$-dimensional FFT to compute the convolutions
\[ \partial_{x_i}\Kbf_j * \Ubf_t \approx \partial_{x_i}K_j*U_t(\Cbf), \quad t\in \tbf,\]
where only entries corresponding to interactions within $\Cbf$ need to be retained. For $d=1$ this amounts to $\CalO(|\Cbf|\log|\Cbf|)$ flops per timestep. For $d=2$ and higher dimensions, the $d$-dimensional FFT is considerably slower unless one of the arrays is separable. Trial interaction potentials $K_j$ can be chosen to be a sum of separable functions,
\[K(x) = \sum_{q=1}^Q k_{1,q}(x_1)\cdots k_{d,q}(x_d),\]
in which case only a series of one-dimensional FFTs are needed, and again the cost is $\CalO(|\Cbf|\log|\Cbf|)$ per timestep. When $K$ is not separable, we propose using a low-rank approximation 
\[\partial_{x_i}\Kbf_j = \sum_{q=1}^Q\sigma_{q} \kbf_{1,q}\otimes \cdots \otimes \kbf_{d,q}\]
to exploit the efficiency of FFT in one dimension. For $d=2$, this is accomplished using the truncated SVD, while for higher dimensions there does not exist a unique {\it best} rank-$Q$ tensor approximation, although several efficient algorithms are available to compute a sufficiently accurate decomposition \cite{malik2018low,sun2020low,jang2020d} (and the field of fast tensor decompositions is advancing rapidly). In the examples below we consider only $d=1$ and $d=2$, and leave extension to higher dimensions to future work.\\

Using low-rank approximations, the mean-field approach provides a significant reduction in computational complexity compared to direct evaluations of particle trajectories when $N$ is sufficiently large. A particle-level computation of the nonlocal force in weak-form requires evaluating terms of the form
\[\sum_{\ell=1}^L\left(\frac{1}{N^2}\sum_{i=1}^N\sum_{j=1}^N\partial_x\psi(X^{(i)}_{t_\ell},t_\ell)\partial_x K(X^{(i)}_{t_\ell}- X^{(j)}_{t_\ell})\right)\Delta t.\]
For a single candidate interaction potential $K$, a collection of $J$ test functions $\psi$, and $M$ experiments, this amounts to $JMNL+MLN^2$ function evaluations in $\Rbb^d$ and $\CalO(MJLN^2)$ flops. If we use the proposed method, exploiting the convolutional structure of integration against a separable reference test function $\psi$ and a rank $Q$ approximation of $\partial_x \Kbf$, we instead evaluate  
\[\partial_x\psi *\left(U(\partial_x K*U)\right)\]
using $2^d|\Cbf|$ evaluations of $\partial_x K$, reused at each of the $L$ timepoints, and $\CalO(L|\Cbf|Q\log(|\Cbf|))$ flops\footnote{Neglecting the cost of computing the histogram $\Ubf$ and evaluating $\psi(\Cbf)$, amounting to an additional $\CalO(NML+|\Cbf|)$ flops, as these terms are reused in each column of $\Gbf$ and $\bbf$}. Figure \ref{speedup} provides a visualization of the reduction in function evaluations for $L=100$ timepoints and $M=10$ experiments over a range of $N$ and $|\Cbf|^{1/d}$ (points along each spatial dimension when $|\Cbf|$ is a hypercube) in $d=2$ and $d=3$ spatial dimensions. Table \ref{specs} in Appendix \ref{app:specs} lists walltimes for the examples below, showing that with $N=64,000$ particles the full algorithm implemented in MATLAB runs in under 10 seconds with all computations in serial on a laptop with an AMD Ryzen 7 pro 4750u processor and 38.4 GB of RAM. The dependence on $N$ is only through the $\CalO(N)$ computation of the histrogram, hence this approach may find applications in physical coarse-graining (e.g.\ of molecular dynamics or plasma simulations).\\

\begin{figure}
\centering
\begin{tabular}{cc}
\hspace{-0.5cm}	\includegraphics[trim={0 5 25 0},clip,width=0.4\textwidth]{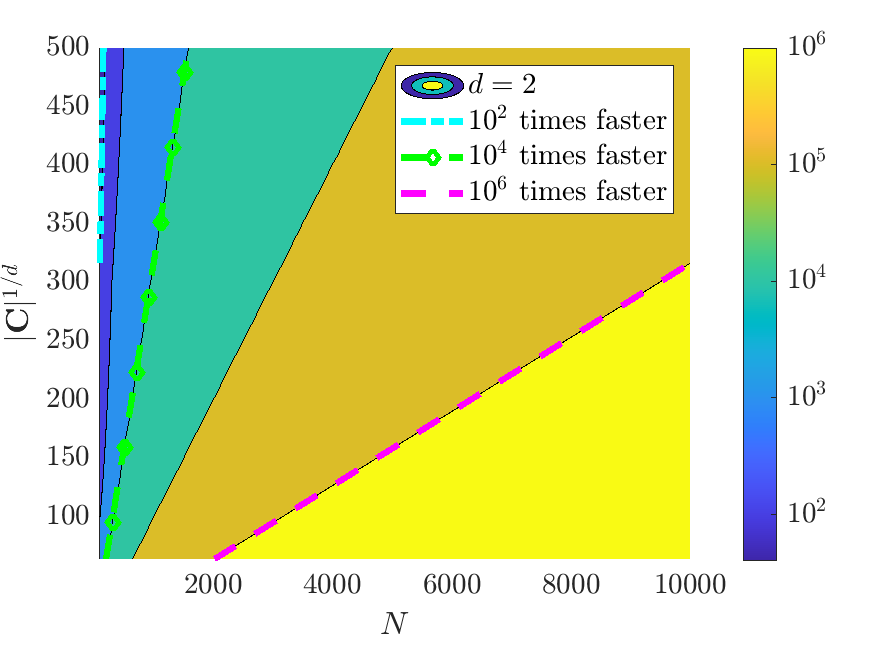}&
\hspace{-0.5cm}	\includegraphics[trim={0 5 25 0},clip,width=0.4\textwidth]{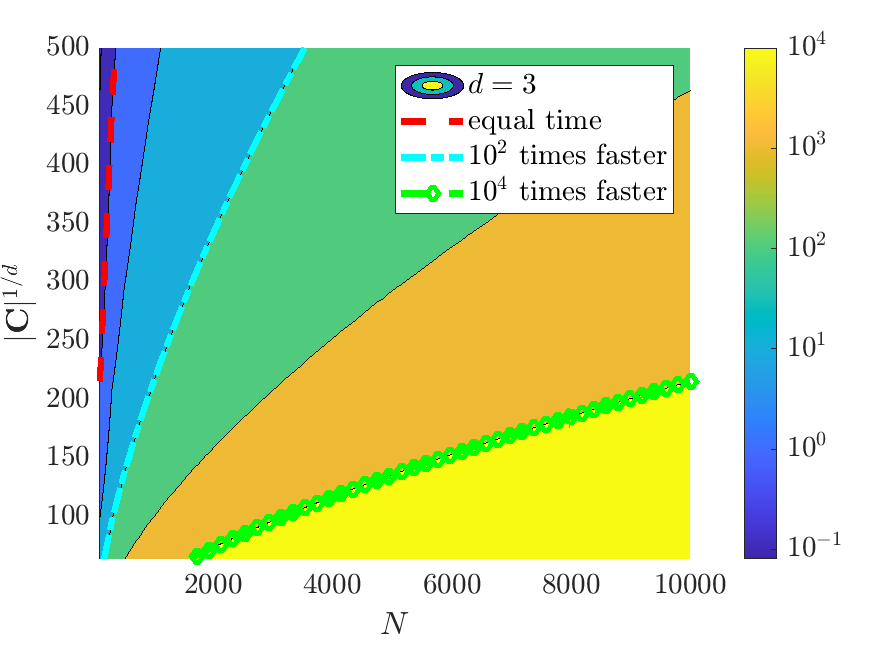}
\end{tabular}
\caption{Factor by which the mean-field evaluation of interaction forces using histograms reduces total function evaluations as a function of particle number $N$ and average gridpoints per coordinate $|\Cbf|^{1/d}$ for data with $M=10$ experiments each with $L=100$ timepoints. For example, with $d=2$ spatial dimensions (left) and $N>2000$ particles, the number of function evaluations is reduced by at least a factor of $10^4$.}
\label{speedup}
\end{figure}

\subsection{Convergence}\label{sec:conv}

We now show that the estimators $\widehat{K}$, $\widehat{V}$, and $\widehat{\sigma}$ of the weak-form method converge with a rate $\CalO(h+N^{-1/2}+\Delta t^\eta)$ when ordinary least squares is used (i.e.\ $\pmb{\lambda}=0$) and only $M=1$ experiment is available. Here $\eta>0$ is the H\"older exponent of the sample paths of the process $\Xbf_t$. We assume that $\CalD$, $\Cbf$, $G$, $P^h$ and the resulting histogram $\Ubf=(\Ubf_t)_{t\leq T}$ are as in Section \ref{sec:hist}. We make the following assumptions on the true model and resulting linear system throughout this section.

\begin{assump}\label{assump1}
Let $p\geq 1$ be fixed. 
\begin{enumerate}[label=(\Roman*)]
\item For each $N\geq 2$, $\Xbf_t=(X^{(1)}_t,\dots,X^{(N)}_t)$ is a strong solution to \eqref{dXt} for $t\in[0,T]$, and for some $\eta>0$ the sample paths $t\to X^{(i)}_t(\omega)$ are almost-surely $\eta$-H\"older continuous, i.e.\
\[|X^{(i)}_t(\omega)-X^{(i)}_s(\omega)|\leq C_\eta|t-s|^\eta, \quad \forall 0\leq s\leq t\leq T,\quad  \forall i\in [N],\quad \text{for a.e. }\omega\in \Omega.\]
\item The initial particle distribution $\mu_0$ satisfies the moment bound 
\[\int_{\Rbb^d}|x|^pd\mu_0(x):= M_p<\infty.\]
\item $\nabla K^\star$ and $\nabla V^\star$ satisfy for some $C_p>0$ the growth bound:
\[|\nabla V^\star(x) -\nabla V^\star(y)|+|\nabla K^\star(x) -\nabla K^\star(y)|\leq C_p|x-y|(1+\max\{|x|,|y|\}^{p-1}), \quad x,y\in \Rbb^d\]
\item For the same constant $C_p>0$, it holds that\footnote{For $\Abf\in \Rbb^{d\times d}$ the Frobenius norm is defined $\nrm{\Abf}_F=\sqrt{\text{Tr}(\Abf^T\Abf)}$}
\[\nrm{\sigma^\star(x)-\sigma^\star(y)}_F\leq C_p|x-y|^{1/2}(1+\max\{|x|,|y|\}^{p/2-1/2}), \quad x,y\in \Rbb^d\]
\item The test functions $(\psi_k)_{k\in[n]}\subset C^3(\Rbb^d\times(0,T))$ are compactly supported and together with the library $\Lbb$ are such that $\Gbf$ has full column rank. Moreover $\nrm{\Gbf^\dagger}_1\leq C_\Gbf$ almost surely, where $\nrm{\Gbf^\dagger}_1$ is the induced matrix 1-norm of $\Gbf^\dagger$.
\item The true functions $K^\star$, $V^\star$ and $\sigma^\star$ are in the span of $\Lbb$.
\end{enumerate}
\hrule
\end{assump}
\begin{rmrk} 
Some consequences of Assumption \ref{assump1} are the following: $\eta$-H\"older continuous sample paths implies that for each $t\in[0,T]$,
\[\int_{\Rbb^d}|x|^pd\mu^N_t=\frac{1}{N}\sum_{i=1}^N|X^{(i)}_t|^p\leq \frac{2^p}{N}\sum_{i=1}^N|X^{(i)}_0|^p+C_\eta 2^pt^{p\eta}.\]
Together with the $p$th moment bound on $\mu_0$, this implies 
\begin{equation}\label{momentbound}
\Ebb\left[\sup_{t\leq T}\int_{\Rbb^d}|x|^pd\mu^N_t\right]\leq 2^p(M_p+C_\eta T^{p\eta}),
\end{equation}
independent of $N$. The growth bounds on $\nabla K^\star$, $\nabla V^\star$ and $\sigma^\star$ imply that for some $C>0$,
\begin{equation}\label{growthbound}
|\nabla K^\star(x)|+|\nabla V^\star(x)|+\nrm{\sigma^\star(x)(\sigma^\star(x))^T}_F\leq C(1+|x|^p),
\end{equation}
where $\nrm{\cdot}_F$ is the Frobenius norm. 
\end{rmrk}
 We will now define some notation and prove some lemmas. Define the weak-form operator
\begin{equation}
\scL(\rho,\psi,\lan\cdot, \cdot\ran):= \lan \partial_t\psi - \nabla \psi \cdot \nabla K^\star*\rho - \nabla \psi \cdot\nabla V^\star + \frac{1}{2}\text{Tr}\left(\nabla^2 \psi \sigma^\star(\sigma^\star)^T\right), \rho\ran, 
\end{equation}
where $\rho=(\rho_t)_{t\leq T}$ is a curve in $\CalP_p(\Rbb^d)$, $\psi$ is a $C^2$ function compactly supported over $\Rbb^d\times (0,T)$, and $\lan \cdot, \cdot\ran$ is an inner product over $\Rbb^d\times (0,T)$. If $\rho = (\mu_t)_{t\leq T}$ is a weak solution to \eqref{fpmeanfield} and $\lan\cdot,\cdot\ran$ is the $L^2(\Rbb^d)$ inner product then $\scL(\rho,\psi,\lan\cdot, \cdot\ran)=0$. If instead $\rho=(\mu^N_t)_{t\leq T}$, then by It\^o's formula $\scL(\rho,\psi,\lan\cdot, \cdot\ran)$ takes the form of an It\^o integral, and we have the following:
\begin{lemm}\label{itobound}
There exists a constant $C>0$ independent of $N$, such that
\[\Ebb\left[\left\vert\scL(\mu^N,\psi, \lan \cdot,\cdot\ran)\right\vert\right]\leq \frac{C}{\sqrt{N}}.\]
\end{lemm}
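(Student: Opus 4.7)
The plan is to use It\^o's formula to rewrite $\scL(\mu^N,\psi,\lan\cdot,\cdot\ran)$ as (minus) the average of $N$ It\^o integrals driven by the independent Brownian motions $B^{(i)}$, and then bound the $L^2$-norm by It\^o isometry together with the orthogonality coming from the independence of these driving noises.

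Applying It\^o's formula to $\psi(X^{(i)}_t,t)$ along the SDE \eqref{dXt} gives
\[
d\psi(X^{(i)}_t,t) = \left[\partial_t\psi - \nabla\psi\cdot\nabla K^\star\!*\mu^N_t - \nabla\psi\cdot\nabla V^\star + \tfrac{1}{2}\mathrm{Tr}\!\left(\nabla^2\psi\,\sigma^\star(\sigma^\star)^T\right)\right]dt + \nabla\psi(X^{(i)}_t,t)\cdot\sigma^\star(X^{(i)}_t)\,dB^{(i)}_t.
\]
Since $\psi$ is compactly supported in $\Rbb^d\times(0,T)$, integrating in $t$ over $[0,T]$ kills the left-hand side. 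Averaging over $i\in[N]$, the time-integrated bracketed drift is precisely $\scL(\mu^N,\psi,\lan\cdot,\cdot\ran)$, so
\[
\scL(\mu^N,\psi,\lan\cdot,\cdot\ran) = -\frac{1}{N}\sum_{i=1}^N M^{(i)}, \qquad M^{(i)} := \int_0^T \nabla\psi(X^{(i)}_t,t)\cdot\sigma^\star(X^{(i)}_t)\,dB^{(i)}_t.
\]

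For the second-moment bound, the key observation is that although the integrands of $M^{(i)}$ and $M^{(j)}$ are correlated (they both depend on the whole configuration through $\mu^N$), the stochastic integrals are driven by independent Brownian motions; hence the quadratic covariation $[M^{(i)},M^{(j)}]\equiv 0$ and $\Ebb[M^{(i)}M^{(j)}]=0$ for $i\neq j$. By It\^o isometry together with the elementary estimate $|(\sigma^\star)^T\nabla\psi|^2 \leq \nrm{\sigma^\star(\sigma^\star)^T}_F\,|\nabla\psi|^2$, combined with the compact spatial support of $\psi$ (which confines $X^{(i)}_t$ to a fixed ball whenever $\nabla\psi\neq 0$) and the growth bound \eqref{growthbound}, we obtain $\Ebb[(M^{(i)})^2]\leq C_\psi T$ uniformly in $i$ and $N$. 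Summing the orthogonal second moments then yields $\Ebb[\scL^2] \leq C_\psi T/N$, and Jensen's inequality $\Ebb[|\scL|]\leq \Ebb[\scL^2]^{1/2}$ delivers the claimed $N^{-1/2}$ bound.

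The only subtle point in this outline is the cross-term vanishing $\Ebb[M^{(i)}M^{(j)}]=0$: the interaction $\nabla K^\star\!*\mu^N$ introduces strong pathwise dependence between particles, but this coupling appears entirely inside the finite-variation drift of the It\^o decomposition, so that after the cancellation in It\^o's formula only the martingale parts remain, and these are orthogonal simply because the driving Brownian motions are independent. Everything else reduces to standard It\^o isometry and the growth/support assumptions in Assumptions \ref{assump1}.
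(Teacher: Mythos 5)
Your proposal is correct and follows essentially the same route as the paper: apply It\^o's formula to $\psi(X^{(i)}_t,t)$, use the compact support in time to cancel the boundary terms so that $\scL(\mu^N,\psi,\lan\cdot,\cdot\ran)$ equals an average of $N$ stochastic integrals, invoke independence of the $B^{(i)}$ and the It\^o isometry to get an $\CalO(N^{-1})$ second moment, and finish with Jensen. The only (immaterial) difference is that you bound the integrand via the compact spatial support of $\nabla\psi$ while the paper routes the estimate through the moment bound \eqref{momentbound}; both are valid.
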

\begin{proof}
Applying It\^o's formula to the process $\frac{1}{N}\sum_{i=1}^N\psi(X^{(i)},t)$, we get that
\[\scL(\mu^N,\psi, \lan \cdot,\cdot\ran) = \frac{1}{N}\sum_{i=1}^N\int_0^T\nabla \psi(X^{(i)}_t,t)^T\sigma^\star(X^{(i)}_t) dB^{(i)}_t.\]
Note that each integral on the right-hand side is a local martingale, since \eqref{growthbound} ensures boundedness of $\nabla \psi(x,t)^T\sigma^\star(x)$ over any compact set in $\Rbb^d$, hence has mean zero. By independence of the Brownian motions $B^{(i)}_t$, exchangeability of $X^{(i)}_t$, the moment bound \eqref{momentbound}, and the growth bounds on $\sigma$, the It\^o isometry gives us
\begin{align*}
\Ebb\left[\scL(\mu^N,\psi, \lan \cdot,\cdot\ran)^2 \right] &= \frac{1}{N}\int_0^T\Ebb_{X\sim \rho^{(1)}}\left[\left\vert\nabla \psi(X,t)^T\sigma^\star(X)\right\vert^2\right] dt \\
&=\frac{1}{N}\int_0^T\Ebb\left[\int_{\Rbb^d} \left\vert\nabla \psi(x,t)^T\sigma^\star(x)\right\vert^2 d\mu^N_t(x)\right] dt \\
&\leq \frac{C'}{N}\nrm{\nabla\psi}^2_{2,\infty}\int_0^T\Ebb\left[1+\int_{\Rbb^d}|x|^pd\mu^N_t(x)\right]dt\\
&\leq CN^{-1}
\end{align*}
where $C$ depends on $M_p$, $C_p$, $T$ and $\psi$. 
Above, $\rho^{(1)}$ is the $X^{(1)}_t$-marginal of the process $\Xbf_t\in \Rbb^{dN}$, and $\nrm{f(x)}_{p,q}$ for vector-valued functions $f$ denotes the $L^q$ norm over $x$ of the $\ell^p$ norm of $f(x)$. The result follows from Jensen's inequality.
\end{proof}

With the following lemma, we can relate the histogram $U$ to the empirical measure $\mu^N$ through $\scL$ using the inner product $\lan\cdot,\cdot\ran_h$ defined by trapezoidal-rule integration in space and continuous integration in time. 


\begin{lemm}\label{diffcont}
For $C$ independent of $N$ and $h$, it holds that
\[\Ebb\left[|\scL(U,\psi, \lan \cdot,\cdot\ran_h)-\scL(\mu^N, \psi, \lan\cdot,\cdot\ran)|\right]\leq Ch.\]
\end{lemm}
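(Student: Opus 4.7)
The plan is to decompose $\scL(U,\psi,\lan\cdot,\cdot\ran_h) - \scL(\mu^N,\psi,\lan\cdot,\cdot\ran)$ into the four pieces corresponding to the four terms of $\scL$---time derivative, local drift, nonlocal interaction, and diffusion---and bound each by $Ch$ in expectation. The workhorse identity, already used in the proof of Lemma~\ref{lemm:histint}, is
\[
\lan f, U\ran_h = \lan f^\Cbf, \mu^N\ran
\]
for any bounded measurable $f$, where $f^\Cbf(x,t):=\sum_k f(c_k,t)\ind{B_k}(x)$ is the cell-wise midpoint projection from \eqref{psiC}. Every ``local'' term then reduces to a bias estimate $\lan f^\Cbf - f, \mu^N\ran$, which I would control via the modulus of continuity of $f$ on the compact support of $\psi$ combined with Assumption~\ref{assump1} and the moment bound \eqref{momentbound}.

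For the three local terms I would set $f_1 := \partial_t\psi$, $f_2 := \nabla\psi\cdot\nabla V^\star$, and $f_3 := \tfrac{1}{2}\text{Tr}(\nabla^2\psi\,\sigma^\star(\sigma^\star)^T)$. The $C^3$-regularity of $\psi$ gives $\|f_1^\Cbf - f_1\|_{L^\infty(\supp{\psi})} \leq Ch$; and for $f_2$, $f_3$, the compact support of $\psi$ confines attention to a bounded set on which Assumption~\ref{assump1}(III)--(IV) supply polynomial-weight Lipschitz-type control, yielding pointwise bounds of the form $|f_i^\Cbf(x,t) - f_i(x,t)| \leq Ch(1 + |x|^p)$. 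Taking expectation and absorbing the moment factor through \eqref{momentbound} delivers $\Ebb|\lan f_i^\Cbf - f_i, \mu^N\ran| \leq Ch$ with $C$ independent of $N$ and $h$.

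For the nonlocal interaction I would telescope as
\begin{align*}
&\lan \nabla\psi\cdot\nabla K^\star*U,\, U\ran_h - \lan \nabla\psi\cdot\nabla K^\star*\mu^N,\, \mu^N\ran \\
&\qquad= \lan \nabla\psi\cdot(\nabla K^\star*U - \nabla K^\star*\mu^N),\, U\ran_h + \bigl(\lan g, U\ran_h - \lan g, \mu^N\ran\bigr),
\end{align*}
where $g(x,t):=\nabla\psi(x,t)\cdot(\nabla K^\star*\mu^N_t)(x)$. For the first summand, writing $U_t = \int G(\cdot,y)\,d\mu^N_t(y)$ and invoking the local Lipschitz control of $\nabla K^\star$ from Assumption~\ref{assump1}(III) yields the pointwise estimate
\[
|\nabla K^\star*U_t(x) - \nabla K^\star*\mu^N_t(x)| \leq Ch\Bigl(1 + |x|^{p-1} + \textstyle\int|y|^{p-1}\,d\mu^N_t(y)\Bigr),
\]
which after expectation and \eqref{momentbound} contributes $O(h)$. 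For the second summand the midpoint-projection identity again reduces it to $\lan g^\Cbf - g, \mu^N\ran$, and the Lipschitz constant of $g$ on $\supp{\nabla\psi}$ is controlled by a polynomial in the $p$th moment of $\mu^N_t$, giving $O(h)$ after expectation. Assembling the four pieces yields the claim.

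The main obstacle is the diffusion piece $f_3$: Assumption~\ref{assump1}(IV) directly supplies only $\tfrac{1}{2}$-H\"older continuity of $\sigma^\star$, which by the product rule transfers to $\sigma^\star(\sigma^\star)^T$ and would na\"ively yield only $O(h^{1/2})$ from a pointwise modulus argument. Attaining the full $O(h)$ rate therefore requires either sharpening Assumption~\ref{assump1}(IV) to a Lipschitz-on-bounded-sets bound on $\sigma^\star$ (which holds for all of the paper's numerical examples and makes the contribution $O(h)$ immediately), or else extracting an additional half-power from the cell-wise bias by a more delicate argument that uses the $C^3$-smoothness of $\psi$ and the symmetry of $B_k$ about $c_k$ so that the leading cell-wise error couples to $\sigma^\star(\sigma^\star)^T(x) - \sigma^\star(\sigma^\star)^T(c_k)$ only through an already-$h$-small multiplier.
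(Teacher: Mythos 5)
Your proposal follows essentially the same route as the paper: decompose the difference into midpoint-projection bias terms $\lan f^\Cbf - f,\mu^N\ran$ for the local pieces and a corresponding double-integral bias for the interaction piece, bound each pointwise by $Ch(1+|x|^p)$ using the weighted Lipschitz-type growth assumptions, and close with the moment bound \eqref{momentbound}. Your telescoping of the nonlocal term into $\lan \nabla\psi\cdot(\nabla K^\star*U-\nabla K^\star*\mu^N),U\ran_h$ plus a projection bias is a cosmetic variant of the paper's single bound on $\left\vert\nabla\psi(\cbf_k)\cdot\nabla K^\star(\cbf_k-\cbf_j)-\nabla\psi(x)\cdot\nabla K^\star(x-y)\right\vert$, and both yield the same $\CalO(h)$ estimate. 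The obstacle you flag for the diffusion term is real and is in fact unaddressed in the paper: the proof there simply asserts $|\text{Tr}(\nabla^2\psi(x)\sigma^\star(x)(\sigma^\star(x))^T)-\text{Tr}(\nabla^2\psi(\cbf_k)\sigma^\star(\cbf_k)(\sigma^\star(\cbf_k))^T)|\leq C'h(1+|x|^p)$, which requires $\sigma^\star(\sigma^\star)^T$ to be locally Lipschitz with polynomial weight, whereas Assumption \ref{assump1}(IV) only supplies $1/2$-H\"older control on $\sigma^\star$ (and hence, in general, only $1/2$-H\"older control on $\sigma^\star(\sigma^\star)^T$, giving $\CalO(h^{1/2})$); the intended reading is evidently that $\sigma^\star(\sigma^\star)^T$ itself satisfies a weighted Lipschitz bound, which holds for all of the paper's examples, and your first proposed fix (strengthening the assumption) is the appropriate one.
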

\begin{proof}
Using the notation $f^\Cbf$ from Lemma \ref{lemm:histint} to denote piecewise constant approximation of a function $f$ over the domain $\CalD$ using the grid $\Cbf$, we have
\begin{align*}
\scL(U,\psi, \lan \cdot,\cdot\ran_h)-\scL(\mu^N, \psi, \lan\cdot,\cdot\ran) &= -\underbrace{\Big(\lan (\nabla \psi\cdot((\nabla K^\star)^\Cbf*\mu^N))^\Cbf,\mu^N\ran-\lan \nabla \psi\cdot\nabla K^\star*\mu^N,\mu^N\ran\Big)}_{E_{\text{interact}}}\\
&\qquad + \lan \partial_t \psi^\Cbf - \partial_t \psi, \mu^N\ran -\lan ((\nabla \psi\cdot\nabla V^\star)^\Cbf-\nabla \psi\cdot\nabla V^\star,\mu^N\ran \\
&\qquad + \frac{1}{2}\lan \text{Tr}\left(\nabla^2 \psi\sigma^\star (\sigma^\star)^T\right)^\Cbf-\text{Tr}\left(\nabla^2\psi\sigma^\star (\sigma^\star)^T\right), \mu^N\ran\\
&= E_{\text{interact}}+E_{\text{linear}}.
\end{align*}
The right-hand side includes an interaction error $E_{\text{interact}}$ followed by a sum $E_{\text{linear}}$ of terms that are linear in the difference between a locally Lipschitz function and its piecewise constant approximation. Hence, we can bound $E_{\text{linear}}$ using smoothness of $\psi$, the moment assumptions on $\mu^N_t$ and the growth assumptions on $V$ and $\sigma$. Specifically, for $x\in B_k$ with center $\cbf_k$, the growth assumptions imply
\begin{align*}
|\nabla \psi(x)\cdot \nabla V^\star(x)-\nabla \psi(\cbf_k)\cdot \nabla V^\star(\cbf_k)|&\leq Ch\Big((\nrm{\nabla\psi}_{2,\infty}+\text{Lip}(\nabla \psi))(1+|x|^p)\Big)\\
|\text{Tr}\left(\nabla^2 \psi(x)\sigma^\star(x) (\sigma^\star(x))^T\right)-\text{Tr}\left(\nabla^2 \psi(\cbf_k)\sigma^\star(\cbf_k) (\sigma^\star(\cbf_k))^T\right)|&\leq C'h\Big((\nrm{\nabla^2\psi}_{F,\infty}+\text{Lip}(\nabla^2\psi))(1+|x|^p)\Big)
\end{align*}
for $C,C'$ depending on $p,d$ and $C_p$, hence
\begin{equation}\label{elin}
|E_{\text{linear}}| \leq C''\sup_{|\alpha|\leq 3}\text{Lip}(\partial^\alpha\psi)\left(T+\int_0^T\int_{\Rbb^d}|x|^pd\mu^N_tdt \right)h.
\end{equation}
Similarly, for the interaction error we use that for $x\in B_k$ and $y\in B_j$ with centers $\cbf_k$ and $\cbf_j$, we have 
\begin{align*}
\left\vert\nabla \psi(\cbf_k)\cdot\nabla K^\star(\cbf_k-\cbf_j)-\nabla \psi(x)\cdot \nabla K^\star(x-y)\right\vert &\leq |\nabla \psi(\cbf_k)|\left\vert \nabla K^\star(\cbf_k-\cbf_j)-\nabla K^\star(x-y)\right\vert\\
&\qquad +\left\vert \nabla\psi(\cbf_k)-\nabla \psi(x)\right\vert\left\vert \nabla K^\star(x-y)\right\vert\\
&\leq C'''h\left(\nrm{\nabla \psi}_{2,\infty}+\text{Lip}(\nabla \psi)\right)(1+|x-y|^p)
\end{align*}
with $C'''$ also depending on $p$, $d$, and $C_p$. From this we have
\begin{equation}\label{einteract}
|E_\text{interact}|\leq C''''\left(T+\int_0^T\int_{\Rbb^d}\int_{\Rbb^d}|x-y|^pd\mu^N_t(y)d\mu^N_t(x)dt\right)h.
\end{equation}
The result follows from taking expectation and using the moment bound \eqref{momentbound}, where the final constant $C$ depends on $p,d,C_p,M_p,T,\eta$ and $\psi$.

\end{proof}



To incorporate discrete effects, we consider the difference between $\scL(U,\psi,\lan\cdot,\cdot\ran_h)$ and $\scL(U,\psi,\lan\cdot,\cdot\ran_{h,\Delta t})$, where recall that $\lan\cdot,\cdot\ran_{h,\Delta t}$ denotes trapezoidal rule integration in space with meshwidth $h$ and in time with sampling rate $\Delta t$.


\begin{lemm}\label{discreteeffects}
For $C$ independent of $N$, $h$ and $\Delta t$, it holds that
\[\Ebb\left[|\scL(U,\psi, \lan \cdot,\cdot\ran_h)-\scL(U, \psi, \lan\cdot,\cdot\ran_{h,\Delta t})|\right]\leq C(h+\Delta t^\eta).\]
\end{lemm}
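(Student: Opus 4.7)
The plan is to realize the two quantities being compared as the continuous-time integral and the composite-trapezoidal approximation of the same function
\[
g(t) \;:=\; \lan \partial_t\psi(\cdot,t)-\nabla\psi(\cdot,t)\cdot\nabla K^\star*U_t-\nabla\psi(\cdot,t)\cdot\nabla V^\star+\tfrac{1}{2}\text{Tr}\bigl(\nabla^2\psi(\cdot,t)\,\sigma^\star(\sigma^\star)^T\bigr),\,U_t\ran_h,
\]
so that
\[
\scL(U,\psi,\lan\cdot,\cdot\ran_h)-\scL(U,\psi,\lan\cdot,\cdot\ran_{h,\Delta t}) \;=\; \int_0^T g(t)\,dt - \Delta t\sum_{\ell}{}' g(t_\ell),
\]
with ${}'$ denoting trapezoidal weights. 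The strategy is to show that $g$ is ``nearly $\eta$--H\"older'' in $t$, with modulus $C(|t-s|^\eta+h)$, and then apply the standard trapezoidal estimate for H\"older continuous integrands to obtain $CT(\Delta t^\eta+h)$.

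The first step is a bookkeeping identity already used in Lemma \ref{lemm:histint}: because $\psi$ (and hence each factor built from derivatives of $\psi$) is compactly supported and $U_t$ is piecewise constant on the cells $B_k$, we have $\lan f(\cdot,t),U_t\ran_h = \frac{1}{N}\sum_{i=1}^N f^\Cbf(X^{(i)}_t,t)$ for each of the three linear terms, and an analogous double-sum expression
\[
\lan \nabla\psi\cdot\nabla K^\star*U_t,U_t\ran_h = \tfrac{1}{N^2}\sum_{i,j}\bigl(\nabla\psi(\cdot,t)\cdot(\nabla K^\star)^\Cbf(\cdot-\cbf_{\cdot})\bigr)^\Cbf(X^{(i)}_t)\dots
\]
for the interaction term. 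In particular $g(t)$ is a bounded functional of $\mu^N_t\otimes\mu^N_t$ with explicit, particle-level dependence on $t$.

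Step two is the key H\"older estimate. For any $0\le s\le t\le T$ and $x\in B_k$ with center $\cbf_k$, the triangle inequality gives
\[
|f^\Cbf(X^{(i)}_t,t)-f^\Cbf(X^{(i)}_s,s)| \le |f(X^{(i)}_t,t)-f(X^{(i)}_s,s)| + 2\,\sup_{B_k}|f-f^\Cbf|
\le \text{Lip}(f)\bigl(|t-s|+C_\eta|t-s|^\eta\bigr) + C\,\text{Lip}(f)\,h,
\]
where the local Lipschitz constant of $f$ on the support of $\mu^N_t$ is controlled using the growth bounds in Assumption \ref{assump1} together with the uniform moment estimate \eqref{momentbound}. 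For the interaction term one applies the same argument to the product $\nabla\psi\cdot(\nabla K^\star)^\Cbf$, splitting the difference exactly as in \eqref{einteract} so that both factors inherit local Lipschitz bounds controlled by $\int|x-y|^p\,d\mu^N_t(y)\,d\mu^N_t(x)$, whose expectation is finite by \eqref{momentbound}. Averaging over $i$ (and $i,j$ in the interaction case) then yields the pointwise bound
\[
|g(t)-g(s)| \;\le\; C_\psi\bigl(1+\Lambda^N\bigr)\bigl(|t-s|^\eta+h\bigr),
\]
where $\Lambda^N:=\sup_{t\le T}\int|x|^p\,d\mu^N_t+\sup_{t\le T}\iint|x-y|^p\,d\mu^N_t(x)\,d\mu^N_t(y)$ has finite expectation independent of $N$.

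Step three is the standard trapezoidal-rule estimate: on each subinterval $[t_\ell,t_{\ell+1}]$, the error is bounded by $\Delta t\cdot \sup_{t\in[t_\ell,t_{\ell+1}]}|g(t)-g(t_\ell)|+\Delta t\cdot \sup_{t\in[t_\ell,t_{\ell+1}]}|g(t)-g(t_{\ell+1})|$, which by the previous step is at most $C\Delta t(\Delta t^\eta+h)$. Summing the $L=T/\Delta t$ intervals gives $CT(\Delta t^\eta+h)(1+\Lambda^N)$. Taking expectation and using the moment bound to dominate $\Ebb[\Lambda^N]$ by a constant independent of $N$, $h$, and $\Delta t$ yields the desired inequality.

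The main obstacle is the interaction term, because it is quadratic in $U_t$ and the integrand involves the piecewise constant surrogate $(\nabla K^\star)^\Cbf$. Showing that $t\mapsto \lan \nabla\psi\cdot\nabla K^\star*U_t,U_t\ran_h$ is nearly $\eta$--H\"older requires splitting the difference between the two time slices into one factor varying in $\psi$ and another in the convolution, controlling the local Lipschitz constants via the $p$-growth assumption, and controlling the resulting double moments uniformly in $N$ — essentially repeating the interaction bookkeeping of Lemma \ref{diffcont} but one time slice at a time.
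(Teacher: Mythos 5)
Your proposal is correct and follows essentially the same route as the paper's proof: both reduce the spatial trapezoidal sums to integration of the piecewise-constant surrogate $\varphi^\Cbf$ against the empirical measure, bound $|\varphi^\Cbf(x)-\varphi^\Cbf(y)|\leq C(h+|x-y|)(1+\max\{|x|,|y|\}^p)$ via the growth assumptions, invoke the a.s.\ $\eta$-H\"older continuity of the sample paths, and close with the uniform moment bound \eqref{momentbound}. The only cosmetic difference is that you package the per-subinterval error as the standard trapezoidal estimate for a nearly-H\"older integrand $g(t)$, whereas the paper writes out the same bound termwise via the weighted integrals $I_1$ and $I_2$.
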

\begin{proof}
Again rewriting the spatial trapezoidal-rule integration in the form $\int_{\Rbb^d}\varphi^\Cbf(x)d\mu^N_t$, we see that 
\begin{equation}\label{lemm34diff}
\scL(U,\psi, \lan \cdot,\cdot\ran_h)-\scL(U, \psi, \lan\cdot,\cdot\ran_{h,\Delta t})
\end{equation}
reduces to four terms of the form 
\[A(\varphi):=\frac{1}{N}\sum_{i=1}^N\left(\int_0^T \varphi^\Cbf(X^{(i)}_t)dt - \frac{\Delta t}{2}\sum_{\ell=1}^L \left(\varphi^\Cbf(X^{(i)}_{t_{\ell+1}})+\varphi^\Cbf(X^{(i)}_{t_\ell})\right)\right),\]
for $\varphi\in\left\{\partial_t\psi, \nabla \psi\cdot\nabla V^\star, \text{Tr}(\nabla^2\psi\sigma^\star(\sigma^\star)^T), \nabla\psi\cdot\nabla K^\star*\mu^N_t\right\}$. Similarly to the bounds derived for $|\varphi(x)-\varphi^\Cbf(x)|$ in Lemma \ref{diffcont}, the growth bounds on $V^\star,K^\star$ and $\sigma^\star$ imply in general that 
\[|\varphi(x)-\varphi(y)|\leq C|x-y|\left(1+\max\{|x|,|y|\}^p\right).\] 
Rewriting the summands in $A(\varphi)$,
\[\int_0^T \varphi^\Cbf(X^{(i)}_t)dt - \frac{\Delta t}{2}\sum_{\ell=1}^L \left(\varphi^\Cbf(X^{(i)}_{t_{\ell+1}})+\varphi^\Cbf(X^{(i)}_{t_\ell})\right)\]
\[=\sum_{\ell=1}^L\underbrace{\int_{t_\ell}^{t_{\ell+1}} \left(\frac{t-t_\ell}{\Delta t}\right)(\varphi^\Cbf(X^{(i)}_t)-\varphi^\Cbf(X^{(i)}_{t_{\ell+1}}))dt}_{I_1}+ \underbrace{\int_{t_\ell}^{t_{\ell+1}} \left(\frac{t_{\ell+1}-t}{\Delta t}\right)(\varphi^\Cbf(X^{(i)}_t)-\varphi^\Cbf(X^{(i)}_{t_{\ell}}))dt}_{I_2},\]
and using 
\[|\varphi^\Cbf(x)-\varphi^\Cbf(y)|\leq |\varphi(x)-\varphi(\cbf_k)|+|\varphi(x)-\varphi(y)|+|\varphi(y)-\varphi(\cbf_\ell)|\leq C(2h+|x-y|)(1+\max\{|x|,|y|\}^p)\]
where $x\in B_k$ and $y\in B_\ell$, we see that for $I_1$,
\[\left\vert\int_{t_\ell}^{t_{\ell+1}} \left(\frac{t-t_\ell}{\Delta t}\right)(\varphi^\Cbf(X^{(i)}_t)-\varphi^\Cbf(X^{(i)}_{t_{\ell+1}}))dt\right\vert\]
\[\leq \int_{t_\ell}^{t_{\ell+1}} \left(\frac{t-t_\ell}{\Delta t}\right)C(2h+|X_t^{(i)}-X^{(i)}_{t_{\ell+1}}|)(1+\max\{|X^{(i)}_t|,|X^{(i)}_{t_{\ell+1}}|\}^p)dt\]
\[\leq \int_{t_\ell}^{t_{\ell+1}} \left(\frac{t-t_\ell}{\Delta t}\right)C'(2h+|t_{\ell+1}-t|^\eta|)(1+\max\{|X^{(i)}_t|,|X^{(i)}_{t_{\ell+1}}|\}^p)dt.\]
Taking expectation on both sides and using the moment bound \eqref{momentbound}, we get
\[\Ebb\left[\left\vert\int_{t_\ell}^{t_{\ell+1}} \left(\frac{t-t_\ell}{\Delta t}\right)(\varphi^\Cbf(X^{(i)}_t)-\varphi^\Cbf(X^{(i)}_{t_{\ell+1}}))dt\right\vert\right]\leq C\left(\Delta t h+\Delta t^{1+\eta}\right).\]
We get the same bound for $I_2$. Summing over $\ell$, and taking the average in $i$, we then get 
\[\Ebb\left[|A(\varphi)|\right]\leq C(h+\Delta t^\eta),\]
which implies the desired bound on the difference \eqref{lemm34diff}.
\end{proof}
The previous estimates directly lead to the following bound on the model coefficients $\what$:


\begin{thm}\label{thm:w}
Let $\what=\Gbf^\dagger\bbf$ be the learned model coefficients and $\wstar$ the true model coefficients. For $C$ independent of $N,h$ and $\Delta t$ it holds that
\[\Ebb\left[\nrm{\what-\wstar}_1\right] \leq C\left(h+N^{-1/2}+\Delta t^\eta\right).\]
\end{thm}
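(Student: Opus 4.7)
The plan is to recognize that the theorem is essentially a linear-algebra consequence of the three preceding lemmas, once one identifies the residual $\bbf - \Gbf \wstar$ with the weak-form operator $\scL$ applied to the histogram $U$.

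First, I would observe that by construction of $\Gbf$ and $\bbf$ in Algorithm \ref{alg1}, and by Assumption \ref{assump1}(VI) (that $K^\star, V^\star, \sigma^\star$ lie in the library), the $k$-th component of the residual at the true weights is exactly
\[
(\bbf - \Gbf\wstar)_k \;=\; \scL\!\left(U,\psi_k, \lan\cdot,\cdot\ran_{h,\Delta t}\right).
\]
Since $\Gbf$ has full column rank almost surely, $\what - \wstar = \Gbf^\dagger(\bbf - \Gbf\wstar)$, and using the induced matrix $1$-norm bound from Assumption \ref{assump1}(V) gives
\[
\nrm{\what-\wstar}_1 \;\le\; \nrm{\Gbf^\dagger}_1\,\nrm{\bbf-\Gbf\wstar}_1 \;\le\; C_\Gbf \sum_{k=1}^n \left\vert\scL(U,\psi_k,\lan\cdot,\cdot\ran_{h,\Delta t})\right\vert.
\]

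Next, for each $k$ I would insert the intermediate objects $\scL(U,\psi_k,\lan\cdot,\cdot\ran_h)$ and $\scL(\mu^N,\psi_k,\lan\cdot,\cdot\ran)$ and apply the triangle inequality, splitting each summand into three pieces:
\[
\left\vert\scL(U,\psi_k,\lan\cdot,\cdot\ran_{h,\Delta t})\right\vert \;\le\; E_k^{(1)} + E_k^{(2)} + E_k^{(3)},
\]
where $E_k^{(1)}$ is the discretization-in-time gap controlled by Lemma \ref{discreteeffects} ($\lesssim h + \Delta t^\eta$), $E_k^{(2)}$ is the continuous-vs-histogram gap controlled by Lemma \ref{diffcont} ($\lesssim h$), and $E_k^{(3)} = |\scL(\mu^N,\psi_k,\lan\cdot,\cdot\ran)|$ is the It\^o-martingale term controlled by Lemma \ref{itobound} ($\lesssim N^{-1/2}$).

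Finally, taking expectation, using that $\nrm{\Gbf^\dagger}_1 \le C_\Gbf$ holds almost surely so the constant pulls out, and summing the three bounds over the finite collection $(\psi_k)_{k\in[n]}$, I obtain
\[
\Ebb\!\left[\nrm{\what-\wstar}_1\right] \;\le\; C_\Gbf \sum_{k=1}^n \Ebb\!\left[E_k^{(1)}+E_k^{(2)}+E_k^{(3)}\right] \;\le\; C\left(h + N^{-1/2} + \Delta t^\eta\right),
\]
with $C$ absorbing $C_\Gbf$, $n$, and the constants from the three lemmas (which depend on $p,d,C_p,M_p,T,\eta$ and the norms of the test functions $\psi_k$, all independent of $N,h,\Delta t$). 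The only subtlety — and the step I expect to warrant the most care — is the identification of $(\bbf - \Gbf\wstar)_k$ with $\scL(U,\psi_k,\lan\cdot,\cdot\ran_{h,\Delta t})$: one must verify that the bilinear $\Ubf$-dependence in the interaction columns $\Gbf^K_{kj}$ is consistent with the nonlinear $\nabla K^\star \ast U$ term in $\scL$ (this works because the true $K^\star$ lies in the library and $\scL$ is linear in its $K,V,\sigma$ arguments). Everything else is bookkeeping combined with the three prior lemmas.
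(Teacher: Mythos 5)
Your proposal is correct and follows essentially the same route as the paper's proof: identify the residual $(\bbf-\Gbf\wstar)_k$ with $\scL(U,\psi_k,\lan\cdot,\cdot\ran_{h,\Delta t})$ using Assumption \ref{assump1}(VI), split it into the three error pieces bounded by Lemmas \ref{itobound}, \ref{diffcont}, and \ref{discreteeffects}, and conclude via the almost-sure bound on $\nrm{\Gbf^\dagger}_1$. No substantive differences.
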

\begin{proof}
Using that $K^\star$, $V^\star$ and $\sigma^\star$ are in the span of $\Lbb$, we have that 
\[\bbf_k = \lan \partial_t\psi_k,\Ubf\ran_{h,\Delta t} = \scL(U,\psi_k, \lan \cdot,\cdot \ran_{h,\Delta t}) + \Gbf^T_k \wstar:=\Lbf_k +\Gbf^T_k\wstar,\]
where $\Gbf^T_k$ is the $k$th row of $\Gbf$. From the previous lemmas, we have
\[\Ebb\left[|\Lbf_k|\right]\leq \Ebb\left[|\scL(U,\psi_k, \lan \cdot,\cdot\ran_{h,\Delta t})-\scL(U, \psi_k, \lan\cdot,\cdot\ran_h)|\right]\]
\[+\Ebb\left[|\scL(U,\psi_k, \lan \cdot,\cdot\ran_h)-\scL(\mu^N, \psi_k, \lan\cdot,\cdot\ran)|\right]+\Ebb\left[|\scL(\mu^N, \psi_k, \lan\cdot,\cdot\ran)|\right]\]
\[\leq C'\left(h + N^{-1/2}+\Delta t^\eta\right).\]
Using that $\Gbf$ is full rank, it holds that $\what=\Gbf^\dagger\bbf=\Gbf^\dagger\Lbf +\wstar$, hence the result follows from the uniform bound on $\nrm{\Gbf^\dagger}_1$:
\[\Ebb\left[\nrm{\what-\wstar}_1\right]\leq \nrm{\Gbf^\dagger}_2\Ebb\left[\nrm{\Lbf}_1\right]\leq C'n\nrm{\Gbf^\dagger}_1\left(h + N^{-1/2}+\Delta t^\eta\right).\]
\end{proof}
Under the assumption that $K^\star, V^\star$ and $\sigma^\star$ are contained in the span of $\Lbb$, an immediate corollary is 
\[\Ebb\left[\nrm{K^\star- \widehat{K}}_{L^2(\CalD-\CalD)}+\nrm{V^\star - \widehat{V}}_{L^2(\CalD)}+\nrm{\nrm{\sigma^\star(\sigma^\star)^T-\widehat{\sigma}(\widehat{\sigma})^T}_F}_{L^2(\CalD)}\right]\leq C\left(h+N^{-1/2}+\Delta t^\eta\right).\]
Finally, setting $h = N^{-\alpha}$ for $\alpha>0$ will ensure convergence as $N\to\infty$ and $\Delta t\to 0$. We now make several remarks about the practical (algorithmic) implementation with respect to this theoretical convergence.

\begin{rmrk}
\textcolor{white}{blah}\\ \vspace{-0.5cm}
\begin{itemize}
\item An important case of Theorem \ref{thm:w} is $\sigma^\star=0$, in which case $\mu^N_t$ itself is a weak-measure solution to the mean-field equation \eqref{fpmeanfield} and $\nrm{\what-\wstar}_2 \leq C(h+\Delta t^\eta)$, with $\eta\geq 2$. Although the examples below only explore $\sigma^\star=0$ with nonzero extrinsic noise $\varepsilon$ (Figures \ref{qanrsig-Inf} and \ref{log2D_nu-Inf}), we note that when $\varepsilon=0$ and $\sigma^\star=0$ (not shown) Algorithm \ref{alg1} recovers systems to high accuracy similarly to WSINDy applied to local dynamical systems \cite{messenger2020weak,messenger2020weakpde}.  
\item Algorithm \ref{alg:wsindypde} in general implements sparse regression, yet Theorem \ref{thm:w} deals with ordinary least squares. In practice, sparse regression provides an improvement over least squares since $\Gbf$ typically has a high condition number (e.g.\ due to coupled effects of $K$,$V$, and $\sigma$). Since least squares is a common subroutine of many sparse regression algorithms (inluding the MSTLS algorithm used here), the result is still relevant to sparse regression. Lastly, the full-rank assumption on $\Gbf$ implies that as $N\to \infty$ sequential thresholding reduces to least squares. 
\item Theorem \ref{thm:w} assumes data from a single experiment ($M=1$), while the examples below show that $M>1$ experiments improves results. For any fixed $M>1$, the $N\to\infty$ limit results in convergence, however, the $N$-fixed and $M\to \infty$ limit does not result in convergence, as this does not lead to the mean-field equations\footnote{Note that the opposite convergence holds for the algorithm introduced in \cite{lu2020learning}: $N$-fixed, $M\to \infty$ results in recovery of $K$.}. The examples below show that using $M>1$ has a practical advantage. 
\item Many interesting examples have non-Lipschitz $\nabla K$, in particular a lack of smoothness at $x=0$. If $\nabla K$ has a jump discontinuity at $x=0$ and $\mu^N_t$ does not concentrate to a singular measure as $N\to \infty$, then the bound \eqref{einteract} may be modified to include another $\CalO(h)$ term coming from short-range interactions (i.e.\ within an $\CalO(h)$ distance). The examples below are chosen in part to show that $\CalO(N^{-1/2})$ convergence holds for $\nabla K$ with jumps at the origin.  
\end{itemize}
\end{rmrk}

\section{Examples}\label{sec:examples}

We now demonstrate the successful identification of several particle systems in one and two spatial dimensions as well as the $\CalO(N^{-1/2})$ convergence predicted in Theorem \ref{thm:w}. In each case we use Algorithm \ref{alg1} to discover a mean-field equation of the form \eqref{fpmeanfield} from discrete-time particle data. For each dataset we simulate the associated interacting particle system $\Xbf_t$ given by \eqref{dXt} using the Euler-Maruyama scheme (initial conditions and timestep are given in each example). We assess the ability of WSINDy to select the correct model using the \textit{true positivity ratio}\footnote{For example, identification of the true model ($\supp{\what} =\supp{\wstar}$) results in a TPR($\what)=1$, while identification of only half of the correct nonzero terms and no additional falsely identified terms results in TPR($\what)=0.5$.} 
\begin{equation}\label{tpr}
\text{TPR}(\what) = \frac{\text{TP}}{\text{TP}+\text{FN}+\text{FP}}
\end{equation}
where TP is the number of correctly identified nonzero coefficients, FN is the number of coefficients falsely identified as zero, and FP is the number of coefficients falsely identified as nonzero \cite{LagergrenNardiniMichaelLavigneEtAl2020ProcRSocA}. To demonstrate the $\CalO(N^{-1/2})$ convergence, for correctly identified models (i.e.\ TPR$(\what)=1$) we compute the relative $\ell_2$ error in the recovered interaction force $\nabla \widehat{K}$, local force $\nabla \widehat{V}$, and diffusivity $\widehat{\sigma}$ over $\Cbf-\Cbf$ and $\Cbf$, respectively. Results are averaged over 100 trials.\\

For the computational grid $\Cbf$ we first compute the sample standard deviation $s$ of $\pmb{\Ybb}$ and we choose $\CalD$ to be the rectangular grid extending $3s$ from the mean of $\pmb{\Ybb}$ in each direction. We then set $\Cbf$ to have 128 points in $x$ and $y$ for $d=2$ dimensions, and 256 points in $x$ for $d=1$, noting that these numbers are fairly arbitrary, and used to show that the grid need not be too large. We set the sparsity factors so that $\log_{10}(\pmb{\lambda})$ contains 100 equally spaced points from $-4$ to 0. More information on the specifications of each example can be found in Appendix \ref{app:specs}. 

\subsection{Two-Dimensional Local Model}\label{sec:cos2D}

The first system we examine is a constant advection / variable diffusivity model with mean-field equation\footnote{Since the model is local, \eqref{cos2Deq} is the Fokker-Planck equation for the distribution of each particle, rather than only in the limit of infinite particles.}
\begin{equation}\label{cos2Deq}
\partial_t\mu_t =-\partial_x\mu_t-\partial_y\mu_t + \Delta \left[\left(1+0.95\cos(\omega x)\cos(\omega y)\right)\mu_t\right].
\end{equation}
The purpose of this example is three-fold. First, we are interested in the ability of Algorithm \ref{alg1} to correctly identify a local model from a library containing both local and nonlocal terms. Next, we evaluate whether the $\CalO(N^{-1/2})$ convergence is realized. Lastly, we investigate whether for large $\omega$ the weak-form identifies the associated homogenized equation  (see e.g.\ \cite{weinan2011principles}) 
\begin{equation}\label{cos2Deq2}
\partial_t\mu_t =-\partial_x\mu_t-\partial_y\mu_t + \overline{\omega}\Delta \mu_t,
\end{equation}
where $\overline{\omega}$ is given by the harmonic mean of diffusivity:
\[\overline{\omega} = \left(\int_\CalD \frac{dxdy}{1+0.95\cos(x)\cos(y)}\right)^{-1}.\]

For $\omega\in\{1,20\}$ we evolve the particles from an initial Gaussian distribution with mean zero and covariance $\Ibf_2$ and record particle positions for $100$ timesteps with $\Delta t=0.02$ (subsampled from a simulation with timestep $10^{-4}$). We use a rectangular domain $\CalD$ of approximate sidelength $10$ and compute histograms with 128 bins in $x$ and $y$ for a spatial resolution of $\Delta x\approx 0.078$ (see Figure \ref{cos2Dsol} for solution snapshots), over which $\overline{\omega}\approx 0.62$. For $\omega=1$ we compare recovered equations with the full model \eqref{cos2Deq}, while for $\omega=20$ we compare with \eqref{cos2Deq2}, for comparison computing $\overline{\omega}$ over each domain $\CalD$ using MATLAB's \texttt{integral2}. Figure \ref{cos2Dresults} shows that as the particle number increases, we do in fact recover the desired equations, with TPR$(\what)$ approaching one as $N$ increases. For $\omega=1$ we observe $\CalO(N^{-1/2})$ convergence of the local potential $\widehat{V}$ and the diffusivity $\widehat{\sigma}$. For $\omega=20$, we observe approximate $\CalO(N^{-1/2})$ convergence of $\widehat{V}$, and $\widehat{\sigma}$ converging to within $2\%$ of $\sqrt{2\overline{\omega}}$, the homogenized diffusivity (higher accuracy can hardly be expected for $\omega=20$ since \eqref{cos2Deq2} is itself an approximation in the limit of infinite $\omega$). 

\begin{figure}
\centering
\begin{tabular}{ccc}
\hspace{-0.25cm} \includegraphics[trim={10 5 25 20},clip,width=0.44\textwidth]{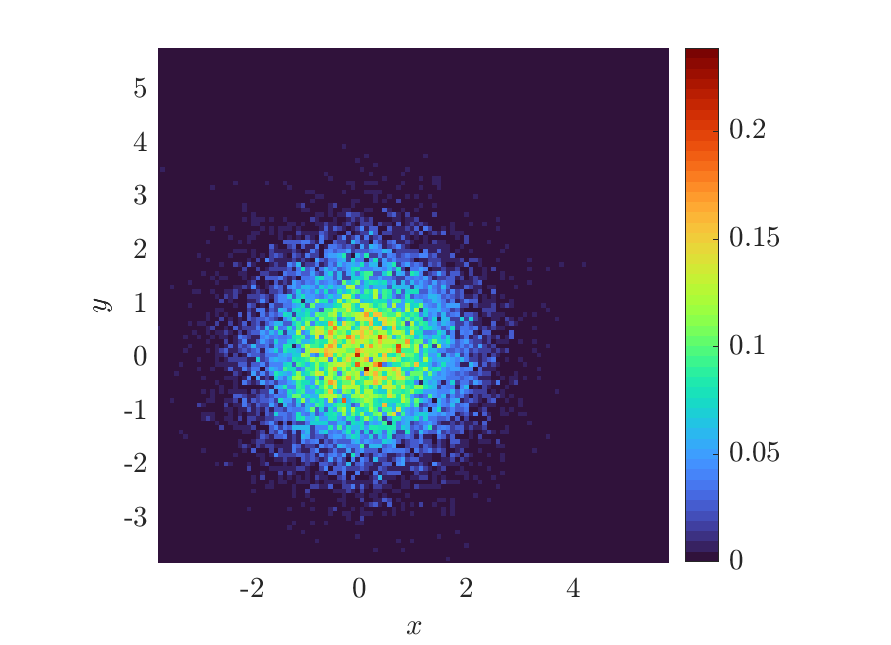} &
\hspace{-0.25cm} \includegraphics[trim={10 5 25 20},clip,width=0.44\textwidth]{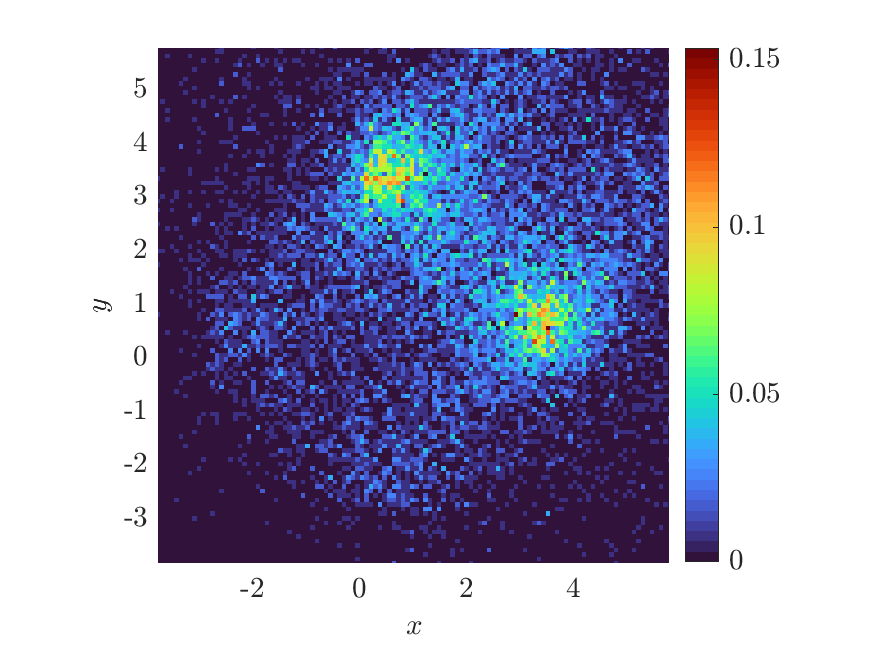} \\
\hspace{-0.25cm}	\includegraphics[trim={10 5 25 20},clip,width=0.44\textwidth]{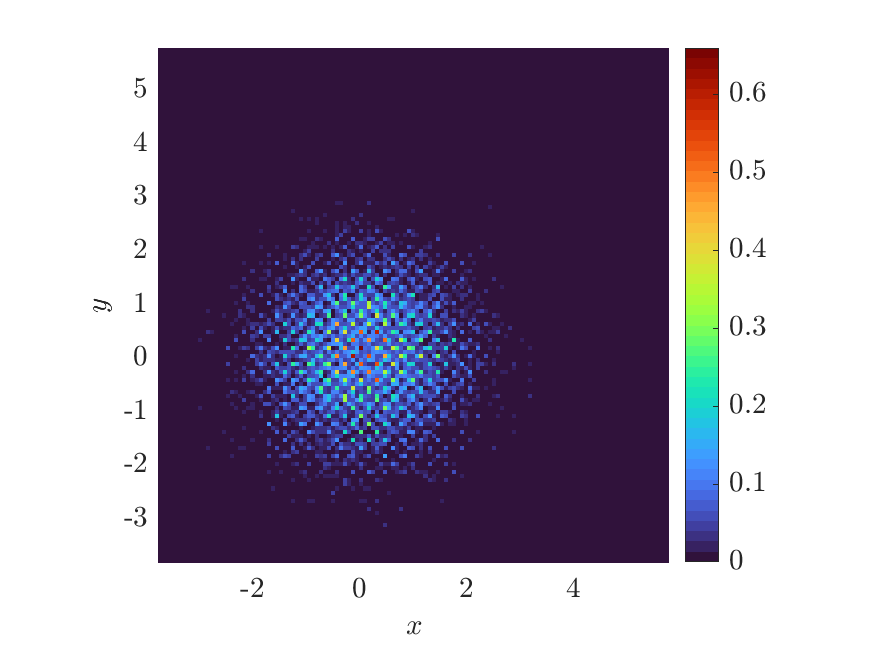}&
\hspace{-0.25cm}	\includegraphics[trim={10 5 25 20},clip,width=0.44\textwidth]{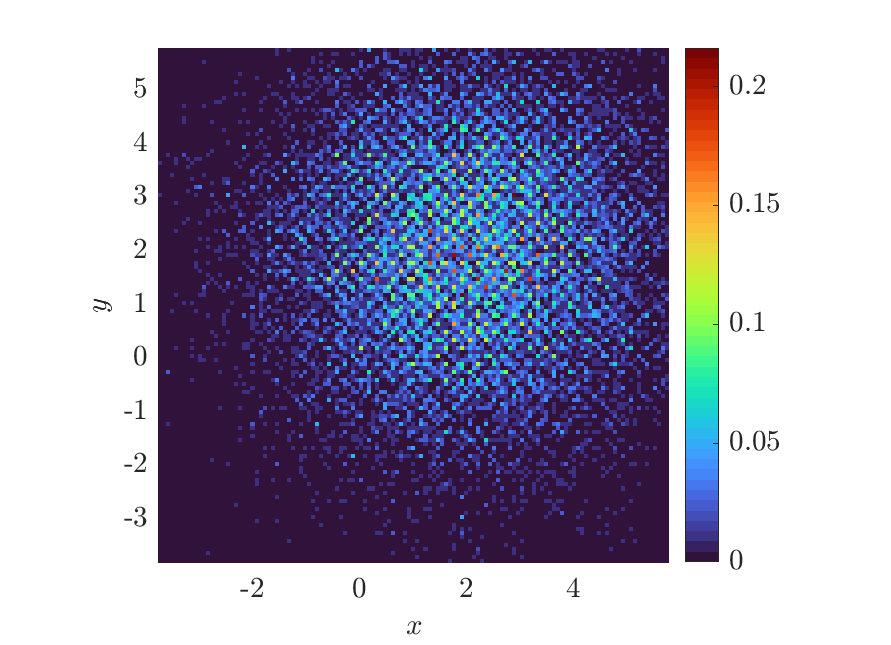}
\end{tabular}
\caption{Snapshots at time $t=2\Delta t=0.06$ (left) and $t=100\Delta t=2$ (right) of histograms computed with 128 bins in $x$ and $y$ from 16,384 particles evolving under \eqref{cos2Deq} with $\omega=1$ (top) and $\omega=20$ (bottom).}
\label{cos2Dsol}
\end{figure}

\begin{figure}
\centering
\begin{tabular}{ccc}
\hspace{-0.5cm}	\includegraphics[trim={0 5 25 10},clip,width=0.3\textwidth]{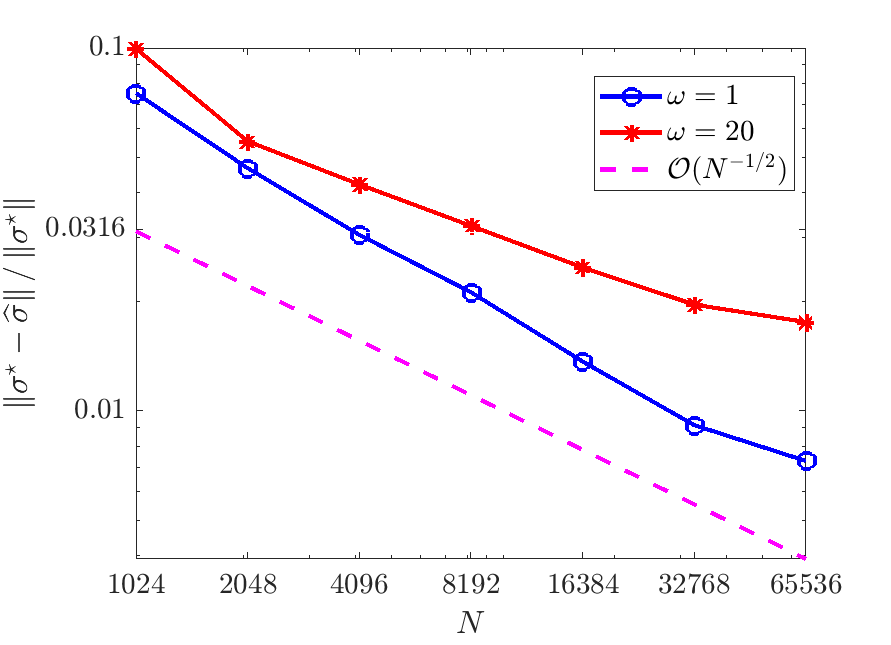}&
\hspace{-0.5cm}	\includegraphics[trim={0 5 25 10},clip,width=0.3\textwidth]{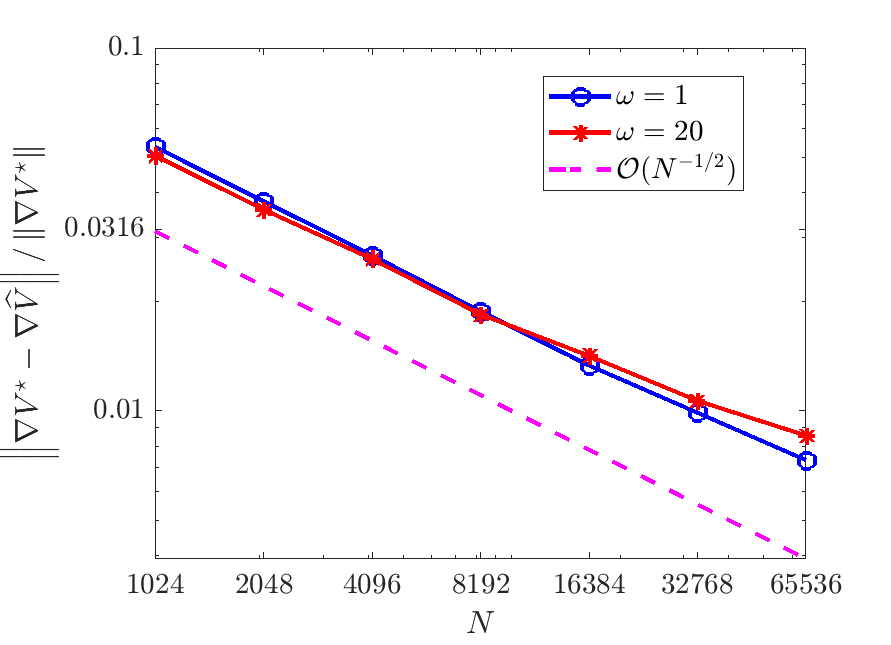}&
\hspace{-0.25cm}\includegraphics[trim={0 5 25 10},clip,width=0.3\textwidth]{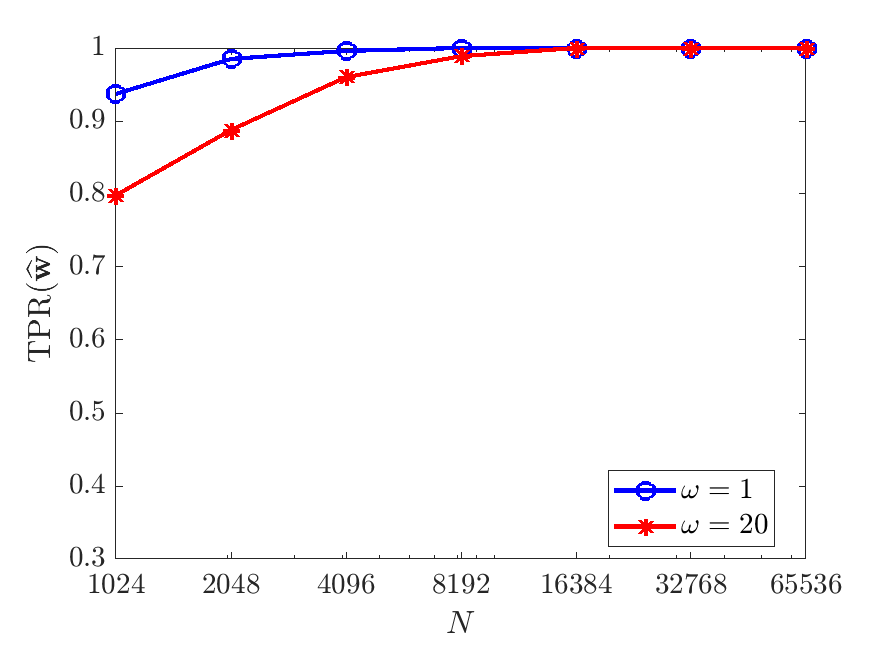} 
\end{tabular}
\caption{Convergence of $\widehat{\sigma}$ (left) and $\nabla \widehat{V}$ (middle) for \eqref{cos2Deq} with $\omega\in\{1,20\}$, as well as TPR$(\what)$ (right). For $\omega=1$, comparison is made with respect to the exact model \eqref{cos2Deq}, while for $\omega=20$ results are compared to \eqref{cos2Deq2}.}
\label{cos2Dresults}
\end{figure}

\subsection{One-Dimensional Nonlocal Model}\label{sec:qanr}

\begin{figure}
\centering
\begin{tabular}{ccc}
\hspace{-0.25cm}	\includegraphics[trim={10 5 25 20},clip,width=0.3\textwidth]{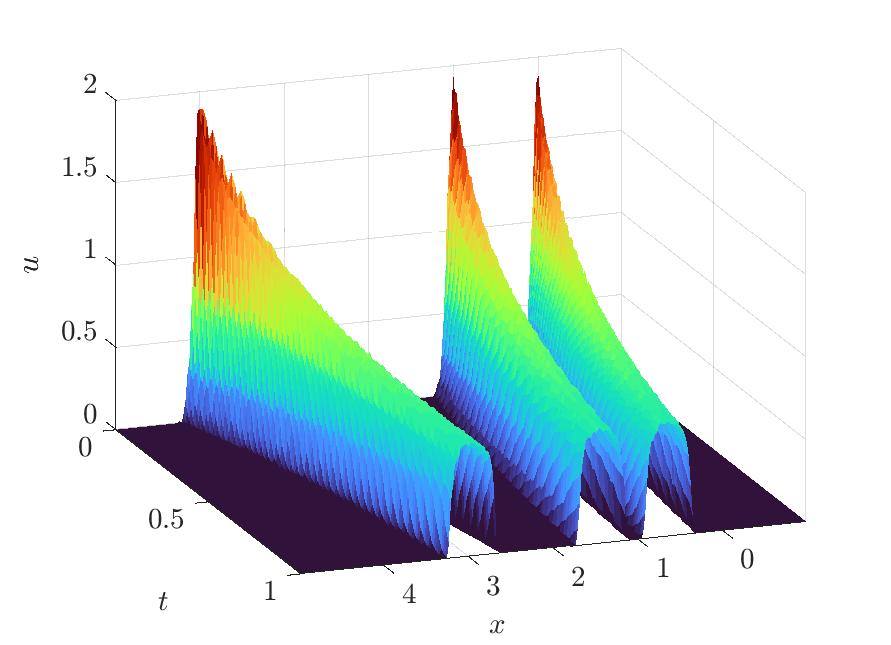} &
\hspace{-0.5cm}	\includegraphics[trim={10 5 25 20},clip,width=0.3\textwidth]{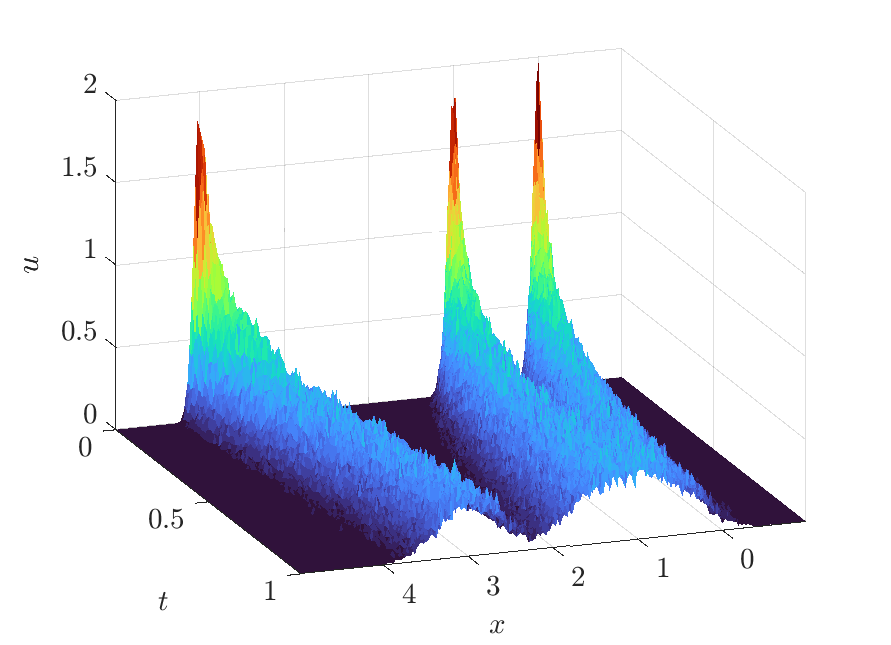}
&
\hspace{-0.5cm}	\includegraphics[trim={10 5 25 20},clip,width=0.3\textwidth]{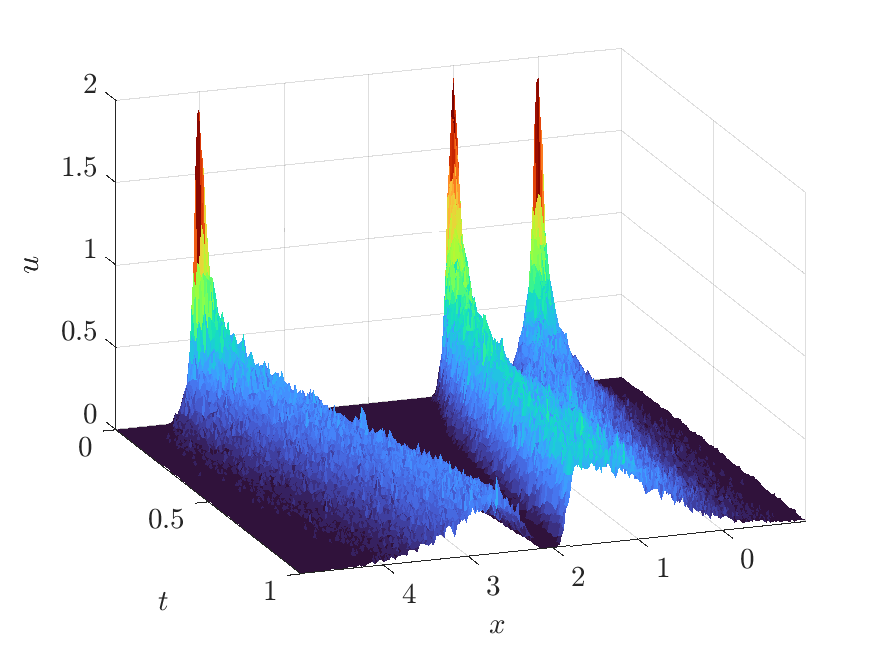}\\
\hspace{-0.5cm}	\includegraphics[trim={10 5 25 20},clip,width=0.3\textwidth]{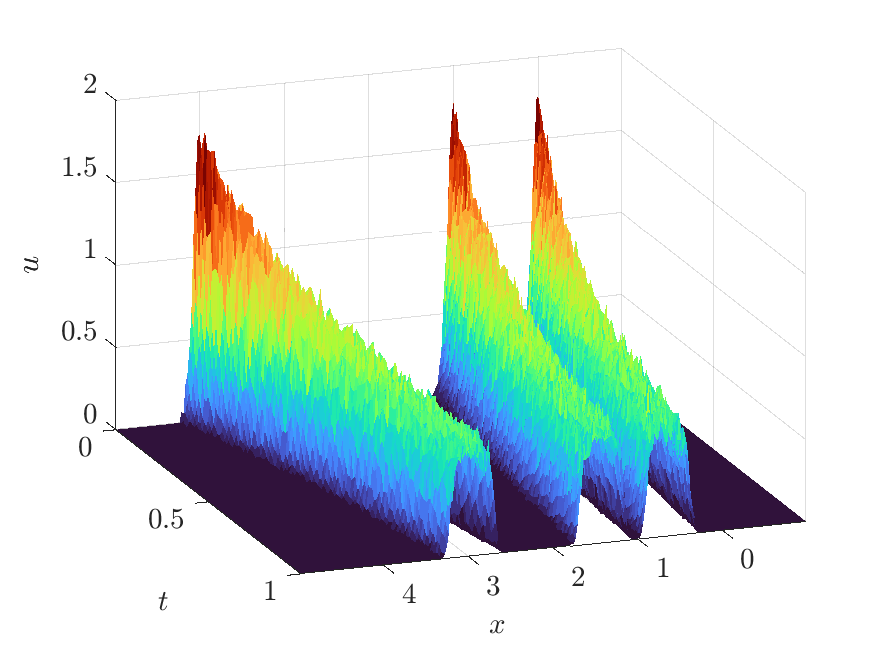} &
\hspace{-0.5cm}	\includegraphics[trim={10 5 25 20},clip,width=0.3\textwidth]{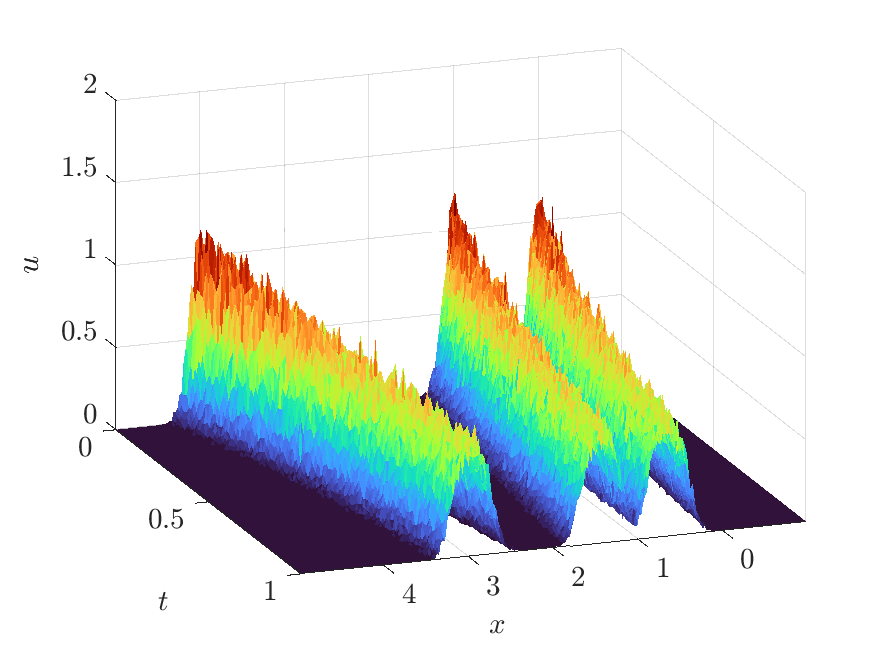}
&
\hspace{-0.5cm}	\includegraphics[trim={10 5 25 20},clip,width=0.3\textwidth]{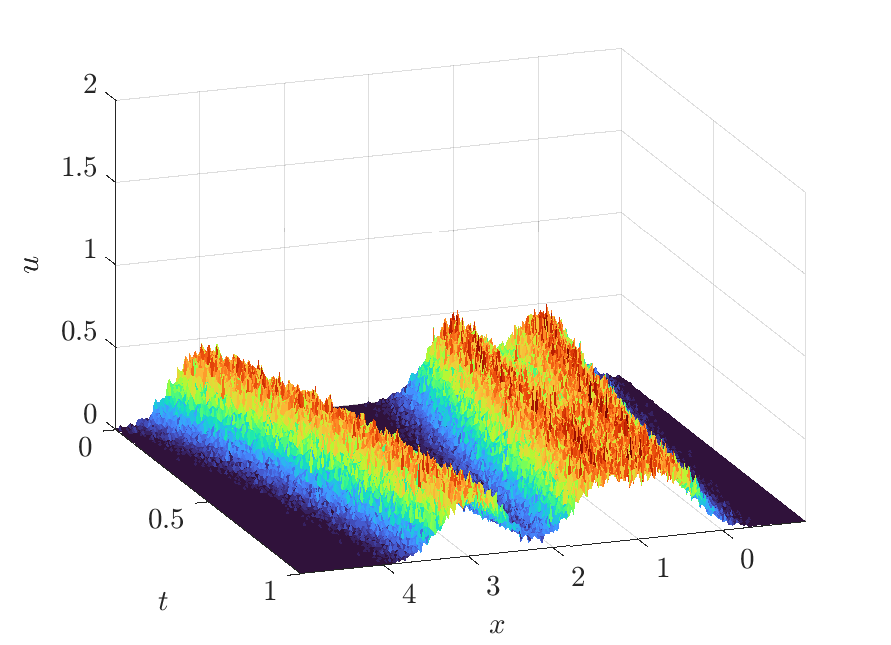}
\end{tabular}
\caption{Histograms computed with 256 bins width $h=0.0234$ from 8000 particles in 1D evolving under $K^\star=K_{\text{QANR}}(x)$  \eqref{qanr}. Top left to top right: $\sigma^\star(x)=0$, $\sigma^\star(x)=\sqrt{2(0.1)}$, $\sigma^\star(x) = \sqrt{2(0.1)}|x-2|$. Bottom: deterministic particles with i.i.d.\ Gaussian noise added to particle positions with resulting noise ratios (left to right) $\ep=0.0316, 0.1, 0.316$.}
\label{qanrsol}
\end{figure}

We simulate the evolution of particle systems under the quadratic attraction / Newtonian repulsion potential
\begin{equation}\label{qanr}
K_{\text{QANR}}(x) = \frac{1}{2} x^2 -|x|
\end{equation}
with no external potential $(V=0)$. The $-|x|$ portion of $K_{\text{QANR}}$ , leading to a discontinuity in $\nabla K$, is the one-dimensional free-space Green's function for $-\Delta$. For $d\geq 1$, when replaced by the corresponding Green's function in $d$ dimensions, the distribution of particles evolves under $K_{\text{QANR}}$ into the characteristic of the unit ball in $\Rbb^d$, which has implications for design and control of autonomous systems \cite{fetecau2011swarm}. We compare three diffusivity profiles, $\sigma(x)=0$ corresponding to zero intrinsic noise, $\sigma(x)=\sqrt{2(0.1)}$ leading to constant-diffusivity intrinsic noise, and $\sigma(x) = \sqrt{2(0.1)}|x-2|$ leading to variable-diffusivity intrinsic noise. With zero intrinsic noise ($\sigma(x)=0$), we examine the effect of extrinsic noise on recovery, and assume uncertainty in the particle positions due to measurement noise at each timestep, $\pmb{\Ybb} = \pmb{\Xbb} + \varepsilon$, 
for $\varepsilon \sim \CalN(0,\ep^2\nrm{\Xbf_\tbf}_{\text{RMS}}^2)$ i.i.d. and $\ep\in \{0.01,0.0316,0.1,0.316\}$. In this way $\ep$ is the {\it noise ratio}, such that $\nrm{\varepsilon}_F/\nrm{\pmb{\Xbb}}_F\approx\ep$ (computed with $\varepsilon$ and $\pmb{\Xbb}$ stretched into column vectors).\\

Measurement data consists of 100 timesteps at resolution $\Delta t =0.01$, coarsened from simulations with timestep $0.001$. Initial particle positions are drawn from a mixture of three Gaussians each with standard deviation $0.005$. Histograms are constructed with 256 bins of width $h=0.0234$. Typical histograms for each noise level are shown in Figure \ref{qanrsol} computed one experiment with $N=8000$ particles.\\

For the case of extrinsic noise (Figure \ref{qanrsig-Inf}), we use only one experiment ($M=1$) and examine the number of particles $N$ and the noise ratio $\ep$. We find that recovery is accurate and reliable for $\ep\leq 0.1$, yielding correct identification of $K_{\text{QANR}}$ with less than $1\%$ relative error in at least $98/100$ trials. Increasing $N$ from 500 to 8000 leads to minor improvements in accuracy for $\ep\leq 0.1$, but otherwise has little effect, implying that for low to moderate noise levels the mean field equations are readily identifiable even from smaller particle systems. For $\ep=10^{-1/2}\approx 0.3162$ (see Figure \ref{qanrsol} (bottom right) for an example histogram), we observe a decrease in TPR($\what$) (Figure \ref{qanrsig-Inf} middle panel) resulting from the generic identification of a linear diffusion term $\nu\partial_{xx}u$ with $\nu\approx 0.05$. Using that $\sqrt{2\nu}\approx\sqrt{2(0.05)}=\ep$, we can identify this as the best-fit {\it intrinsic} noise model. Furthermore, increases in $N$ lead to reliable identification of the drift term, as measured by TPR($\what_{drift}$) (rightmost panel Figure \ref{qanrsig-Inf}) which is the restriction of TPR to drift terms $\Lbb_K$ and $\Lbb_V$.\\

For constant diffusivity $\sigma(x)=\sqrt{2(0.1)}$ (Figure \ref{qanrsig-1}), the full model is recovered with less than $3\%$ errors in $\hat{K}$ and $\hat{\sigma}$ in at least 98/100 trials when the total particle count $NM$ is at least $8000$, and yields errors less than $1\%$ for $NM\geq$ 16,000. The error trends for $\hat{K}$ and $\hat{\sigma}$ in this case both strongly agree with the predicted $\CalO(N^{-1/2})$ rate. For non-constant diffusivity $\sigma(x)=\sqrt{2(0.1)}|x-2|$ (Figure \ref{qanrsig-sigx}), we also observe robust recovery (TPR$(\what)\geq 0.95$) for $NM\geq 8000$ with error trends close to $\CalO(N^{-1/2})$, although the accuracy in $\hat{K}$ and $\hat{\sigma}$ is diminished due to the strong order $\Delta t^{1/2}$ convergence of Euler-Maruyama applied to diffusivities $\sigma$ that are unbounded in $x$ \cite{milstein1994numerical}.

\begin{figure}
\centering
\begin{tabular}{ccc}
	\includegraphics[trim={0 0 25 15},clip,width=0.33\textwidth]{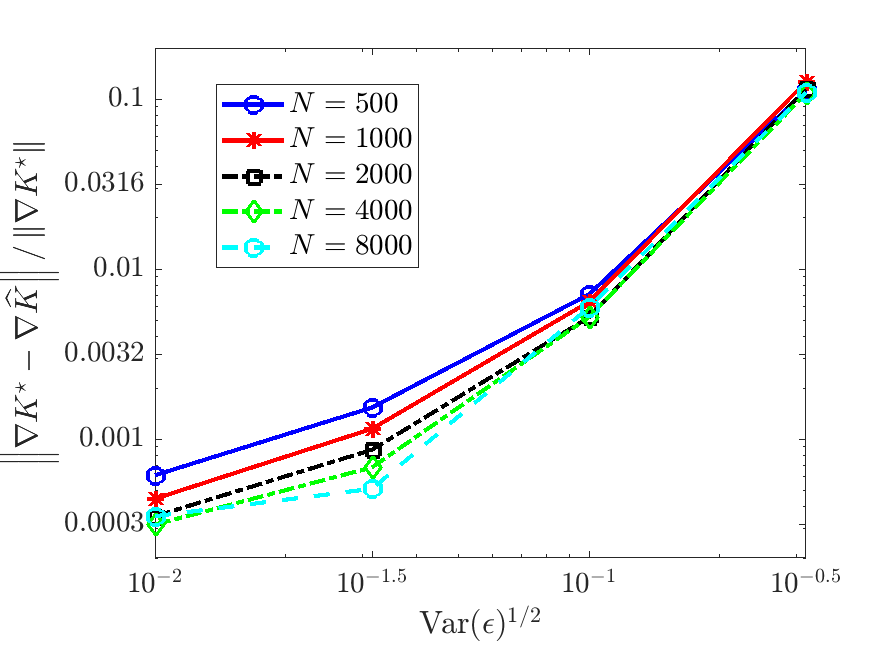} &
\hspace{-0.5cm}	\includegraphics[trim={0 0 25 15},clip,width=0.33\textwidth]{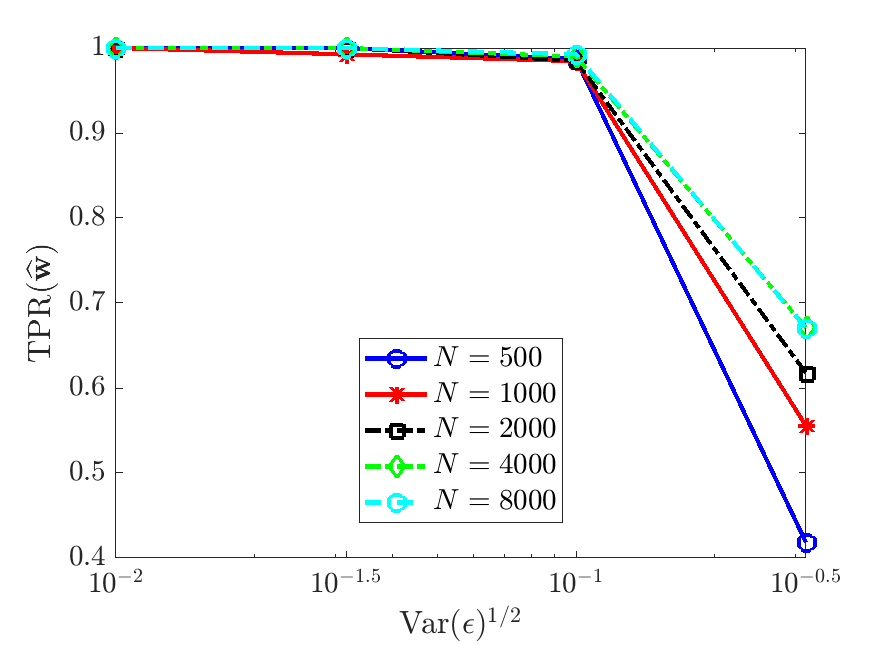} &
\hspace{-0.5cm}	\includegraphics[trim={0 0 25 15},clip,width=0.33\textwidth]{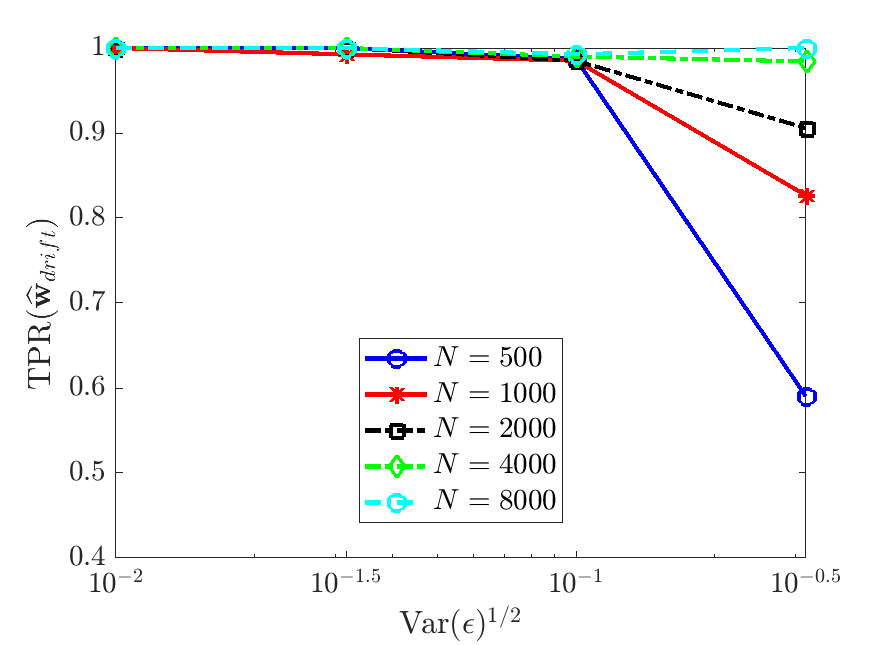}
\end{tabular}
\caption{Recovery of \eqref{fpmeanfield} in one spatial dimension for $K^\star=K_{\text{QANR}}$ and $\sigma^\star=0$ under different levels of observational noise $\ep$. Left: relative error in learned interaction kernel $\widehat{K}$. Middle: true positivity ratio for full model \eqref{fpmeanfield}. Right: true positivity ratio for drift term.}
\label{qanrsig-Inf}
\end{figure}

\begin{figure}
\centering
\begin{tabular}{ccc}
	\includegraphics[trim={0 0 25 15},clip,width=0.33\textwidth]{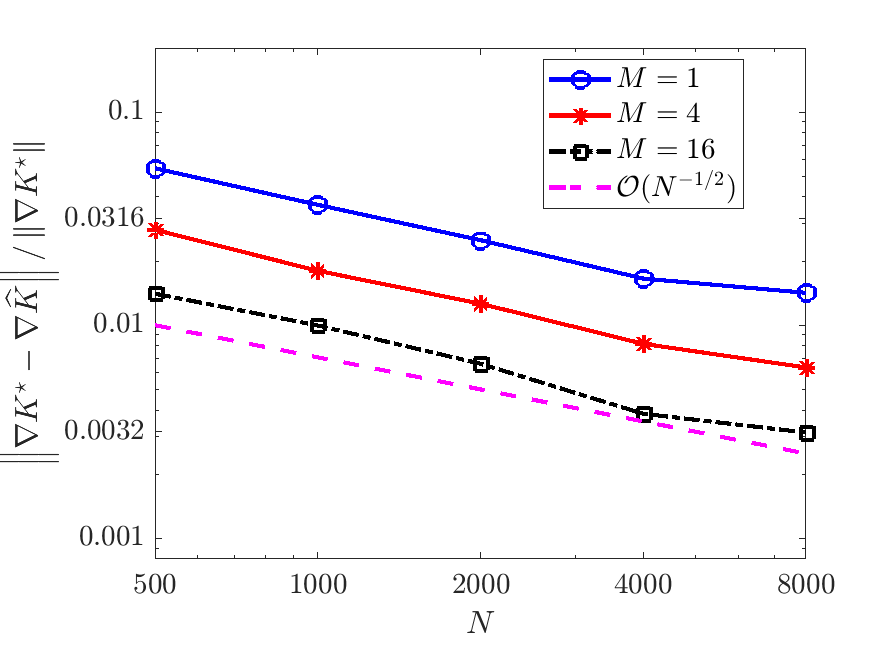} &
\hspace{-0.5cm}	\includegraphics[trim={0 0 25 15},clip,width=0.33\textwidth]{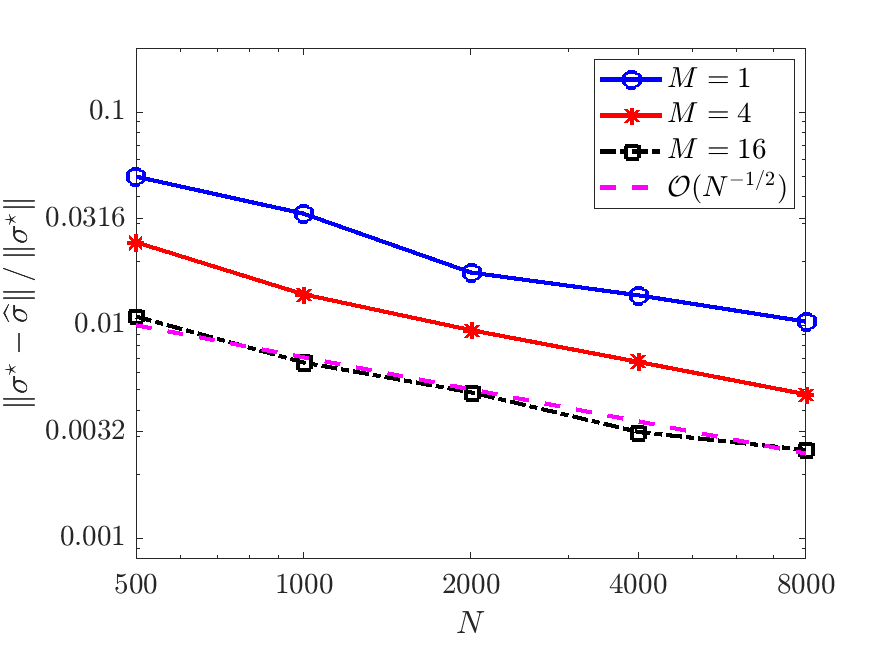} &
\hspace{-0.5cm}	\includegraphics[trim={0 0 25 15},clip,width=0.33\textwidth]{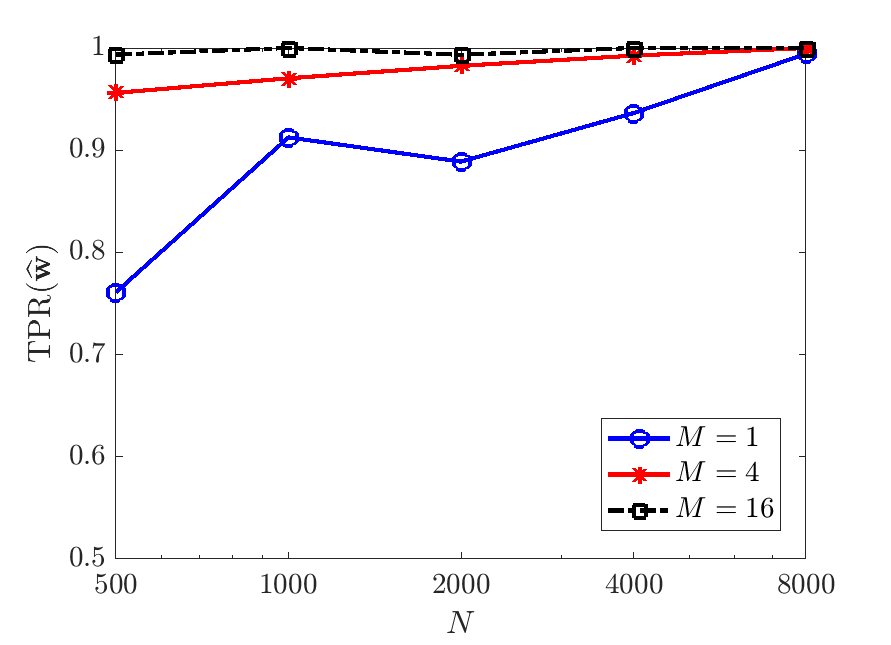}
\end{tabular}
\caption{Recovery of \eqref{fpmeanfield} in one spatial dimension for $K^\star=K_{\text{QANR}}$ and $\sigma^\star=\sqrt{2(0.1)}$}
\label{qanrsig-1}
\end{figure}

\begin{figure}
\centering
\begin{tabular}{ccc}
	\includegraphics[trim={0 0 25 15},clip,width=0.33\textwidth]{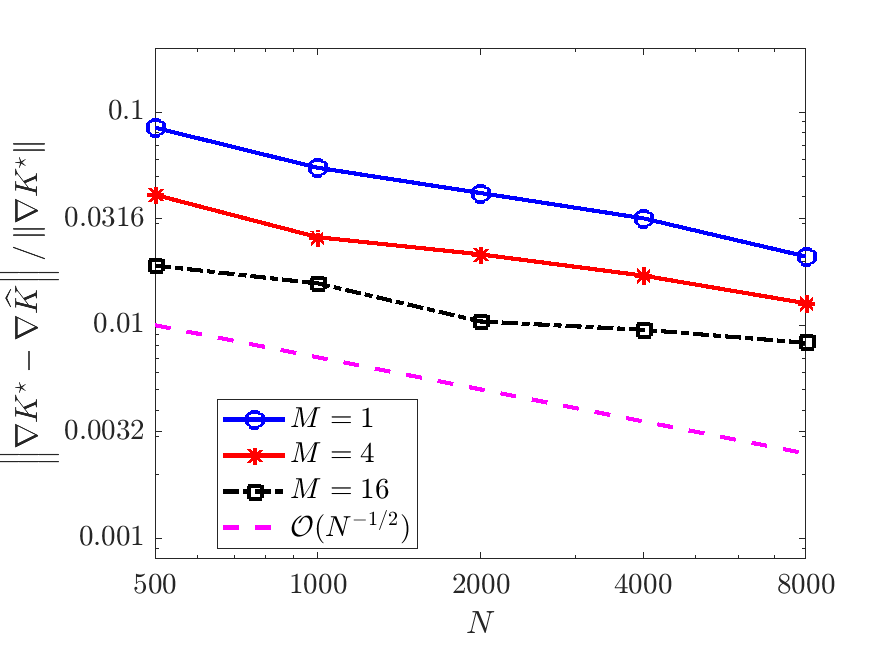} &
\hspace{-0.5cm}	\includegraphics[trim={0 0 25 15},clip,width=0.33\textwidth]{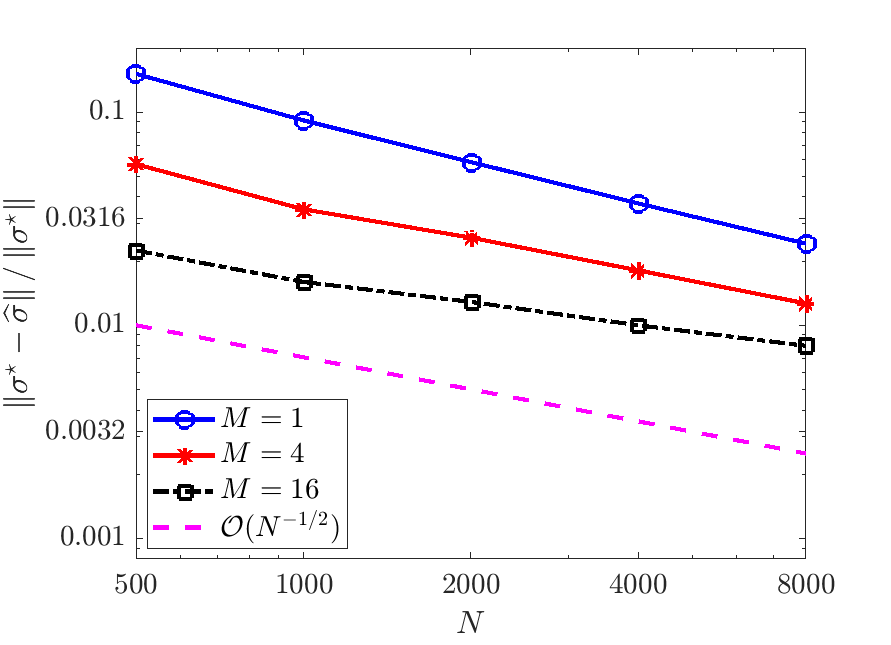} &
\hspace{-0.5cm}	\includegraphics[trim={0 0 25 15},clip,width=0.33\textwidth]{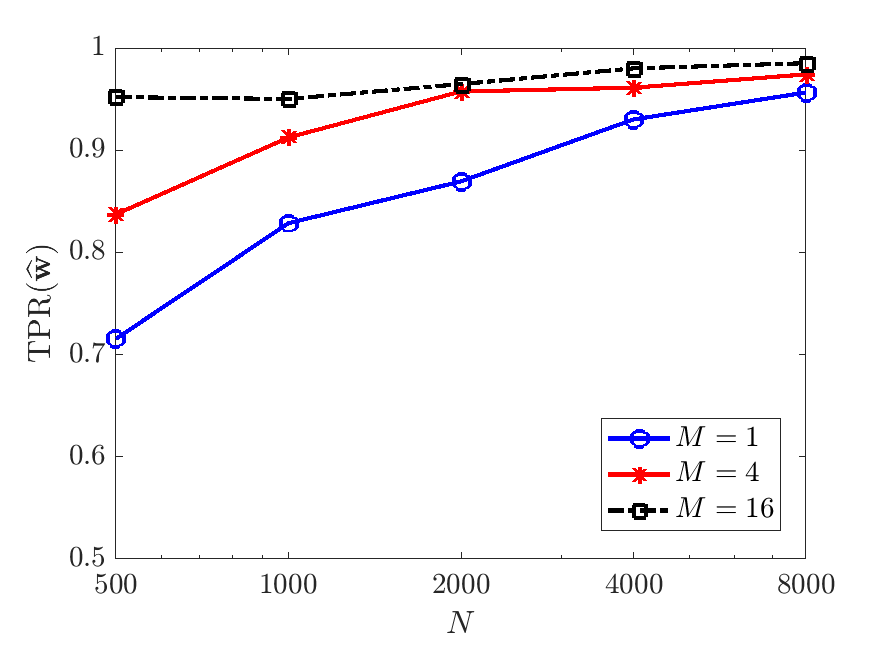}
\end{tabular}
\caption{Recovery of \eqref{fpmeanfield} in one spatial dimension for $K^\star=K_{\text{QANR}}$ and $\sigma^\star=\sqrt{2(0.1)|x-2|}$}
\label{qanrsig-sigx}
\end{figure}

\subsection{Two-Dimensional Nonlocal Model}\label{sec:log2D}

\begin{figure}
\centering
\begin{tabular}{ccc}
	\includegraphics[trim={38 5 40 20},clip,width=0.33\textwidth]{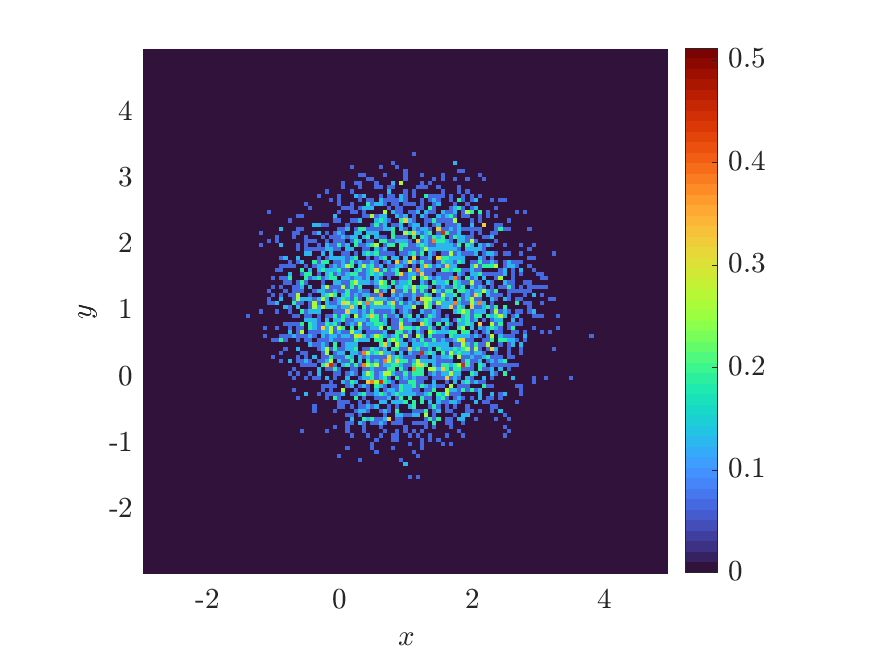} &
\hspace{-0.5cm}	\includegraphics[trim={38 5 40 20},clip,width=0.33\textwidth]{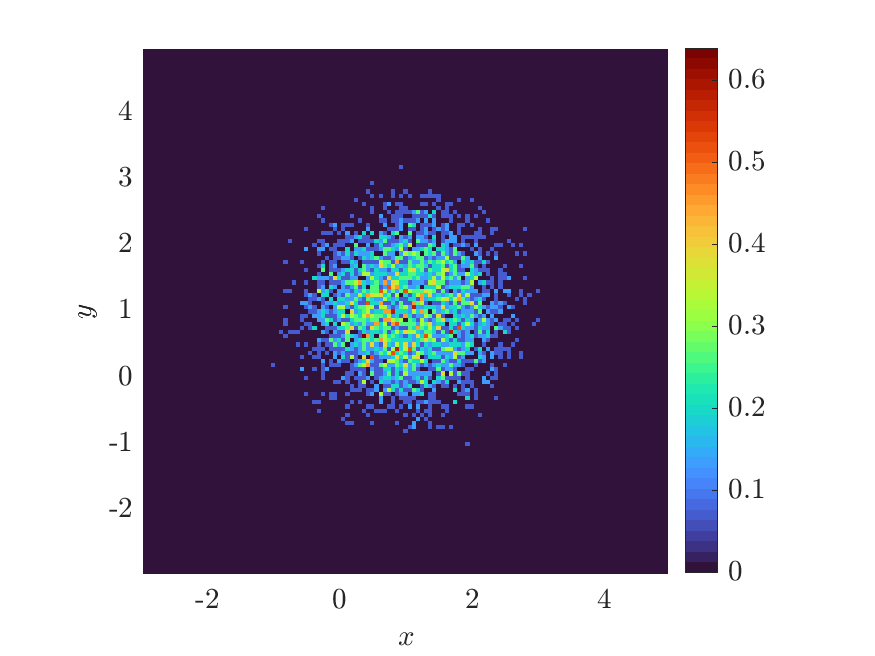} &
\hspace{-0.5cm}	\includegraphics[trim={38 5 40 20},clip,width=0.33\textwidth]{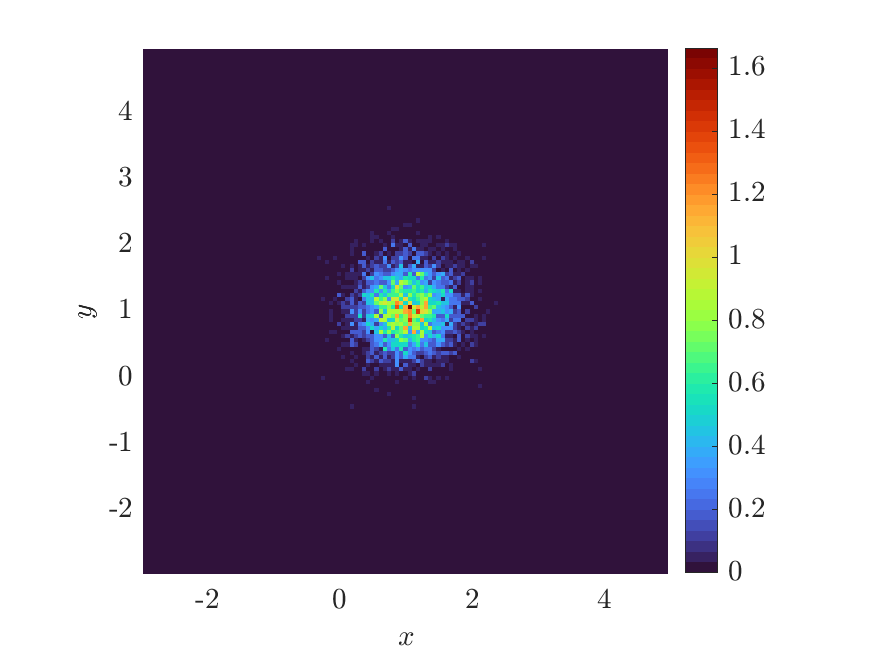} \\
	\includegraphics[trim={38 5 40 20},clip,width=0.33\textwidth]{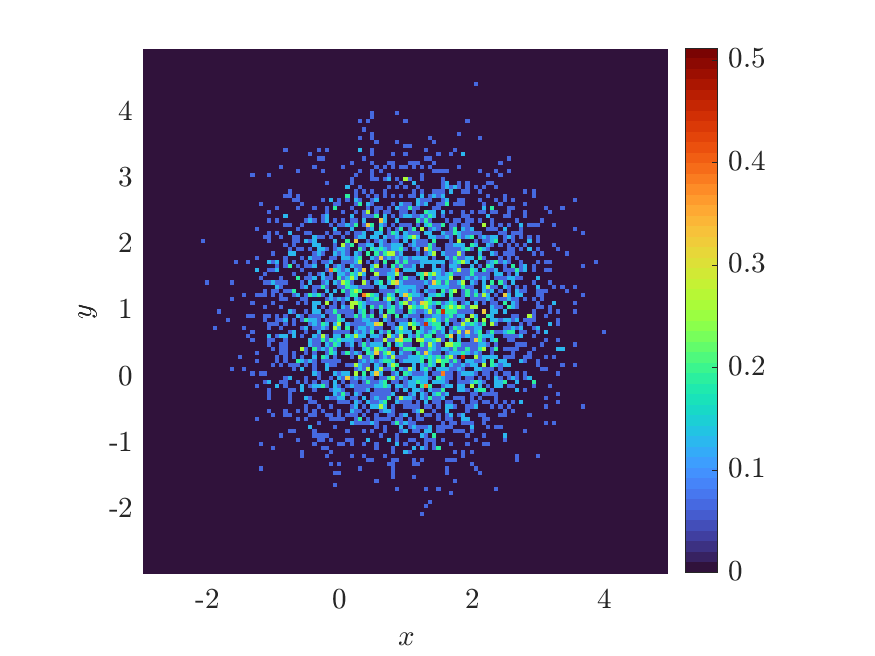} &
\hspace{-0.5cm}	\includegraphics[trim={38 5 40 20},clip,width=0.33\textwidth]{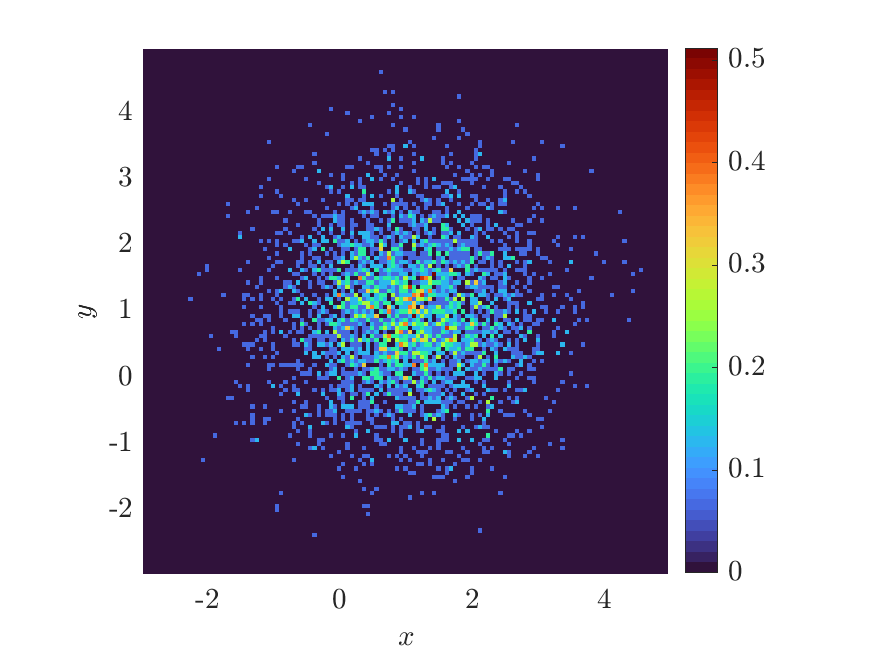} &
\hspace{-0.5cm}	\includegraphics[trim={38 5 40 20},clip,width=0.33\textwidth]{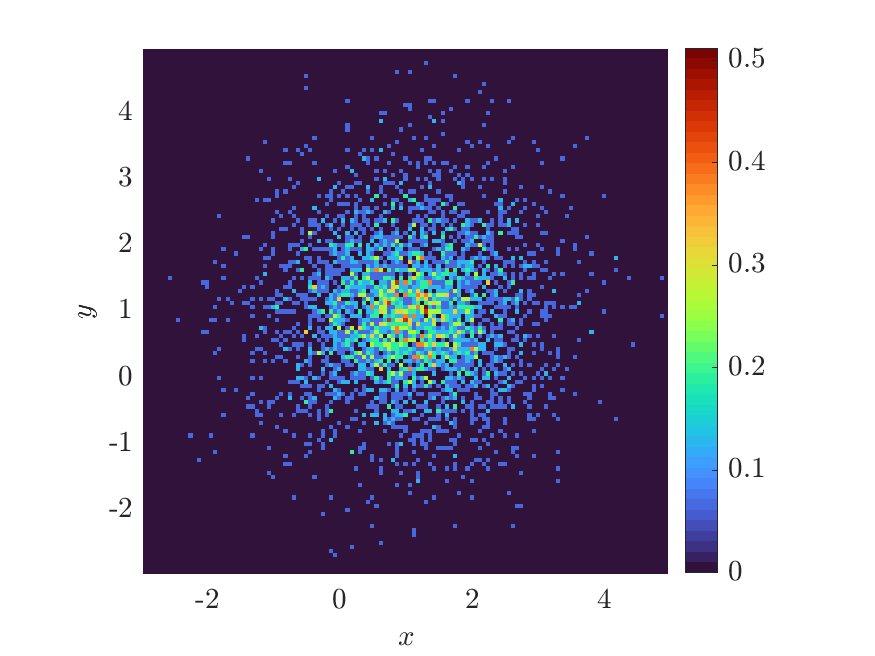}
\end{tabular}
\caption{Histograms created from 4000 particles evolving under logarithmic attraction (equation \eqref{logK}) with varying noise levels at times (left to right) $t=4$, $t=8$, and $t=12$. Top: $\ep= 0.316$, $\sigma=0$ (extrinsic only). Bottom: $\ep=0$, $\sigma = (4\pi)^{-1/2}\approx 0.28$ (intrinsic only).}
\label{log2D_sol}
\end{figure}

We now discuss an example of singular interaction in two spatial dimensions using the logarithmic potential
\begin{equation}\label{logK}
K(x) = \frac{1}{2\pi}\log|x|
\end{equation}
with constant diffusivity $\sigma(x)=\sigma\in\{0,\frac{1}{\sqrt{4\pi}}\}$. This example corresponds to the parabolic-elliptic Keller-Segel model of chemotaxis, where $\sigma_c:= \frac{1}{\sqrt{4\pi}}$ is the critical diffusivity such that $\sigma>\sigma_c$ leads diffusion-dominated spreading of particles throughout the domain (vanishing particle density at every point in $\Rbb^2$) and $\sigma<\sigma_c$ leads to aggregation-dominated concentration of the particle density to the dirac-delta located at the center of mass of the initial particle density \cite{dolbeault2004optimal,carrillo2019existence}. For $\sigma=0$ we examine the affect of additive i.i.d.\ measurement noise $\varepsilon\sim \CalN(0,\ep^2\nrm{\Xbf_\tbf}_{\text{RMS}}^2)$ for $\ep\in \{0.01,0.0316,0.1,0.316,1\}$.\\

We simulate the particle system with a cutoff potential
\begin{equation}\label{logcutoff}
K_\delta(x) = \begin{dcases} \frac{1}{2\pi} \left(\log(\delta)-1+\frac{|x|}{\delta}\right), &|x|<\delta \\ \frac{1}{2\pi} \log|x|, &|x|\geq \delta\end{dcases}
\end{equation}
with $\delta  = 0.01$, so that $K_\delta$ is Lipschitz and $\nabla K_\delta$ has a jump discontinuity at the origin. Initial particle positions are uniformly distributed on a disk of radius 2 and the particle position data consists of $81$ timepoints recorded at a resolution $\Delta t=0.1$, coarsened from $0.0025$. Histograms are created with $128\times 128$ bins in $x$ and $y$ of sidelength $h=0.0469$ (see Figure \ref{log2D_sol} for histogram snapshots over time). We examine $M=2^0,\dots,2^6$ experiments with $N=2000$ or $N=4000$ particles.\\

\begin{figure}
\centering
\begin{tabular}{cc}
	\includegraphics[trim={0 0 25 15},clip,width=0.4\textwidth]{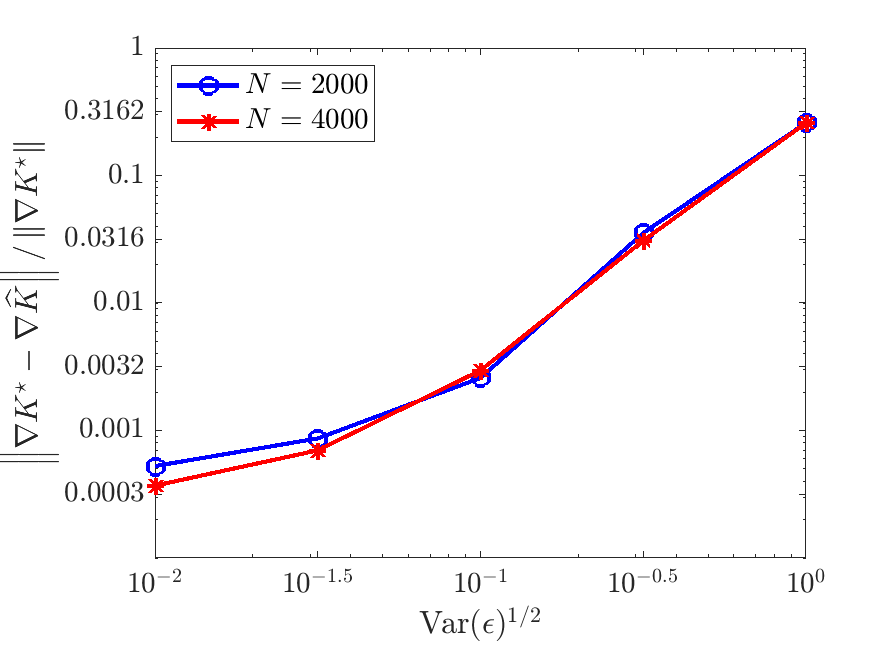} &
\hspace{-0.5cm}	\includegraphics[trim={0 0 25 15},clip,width=0.4\textwidth]{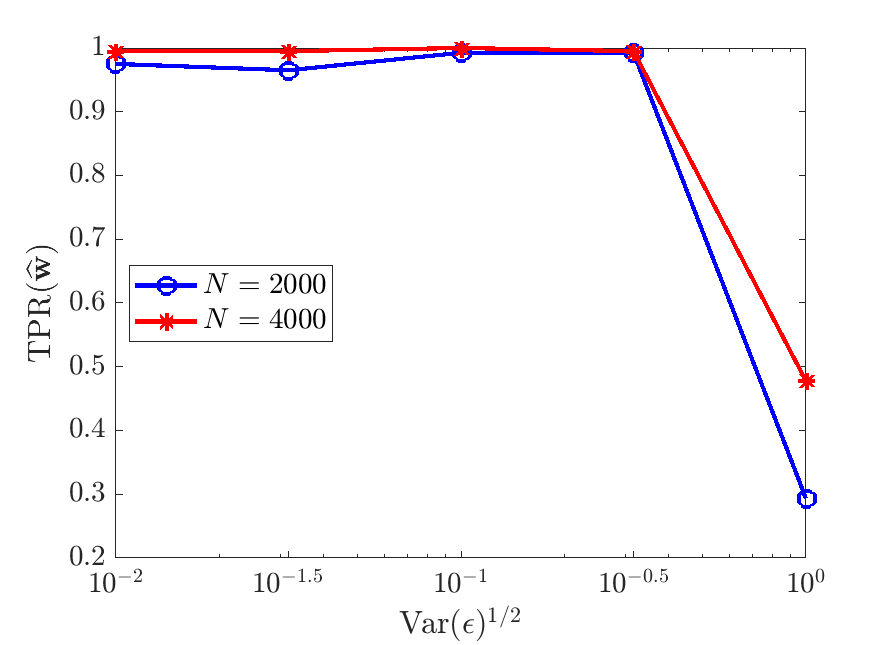}
\end{tabular}
\caption{Recovery of \eqref{fpmeanfield} in two spatial dimensions with $K^\star$ given by \eqref{logK} from deterministic particles ($\sigma^\star=0$) with extrinsic noise $\ep$.}
\label{log2D_nu-Inf}
\end{figure}

In Figure \ref{log2D_nu-Inf} we observe a similar trend in the $\sigma=0$ case as in the 1D nonlocal example, namely that recovery for $\ep\leq 0.1$ is robust with low errors in $\widehat{K}$ (on the order of $0.0032$), only in this case the full model is robustly recovered up to $\ep=0.316$. At $\ep=1$, with $N=4000$ the method frequently identifies a diffusion term $\nu \Delta u$ with $\nu\approx 0.5=\ep^2/2$, and for $N=2000$ the method occasionally identifies the backwards diffusion equation $\partial_t\mu_t= -\alpha \Delta \mu_t$, $\alpha>0$. This is easily prevented by enforcing positivity, which we leave as an extension for future work. \\

 \begin{figure}
\centering
\begin{tabular}{ccc}
	\includegraphics[trim={0 0 25 15},clip,width=0.33\textwidth]{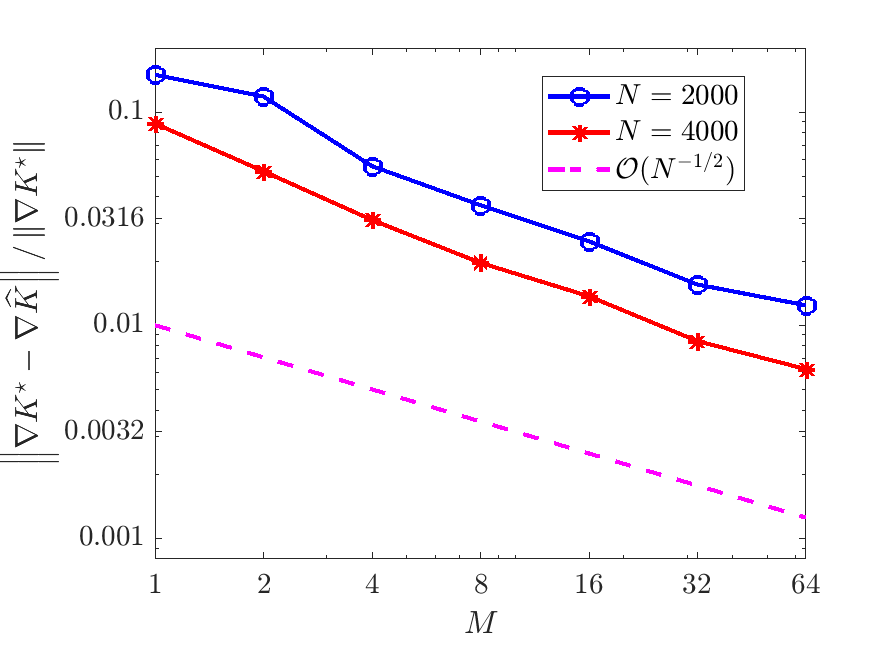} &
\hspace{-0.5cm}	\includegraphics[trim={0 0 25 15},clip,width=0.33\textwidth]{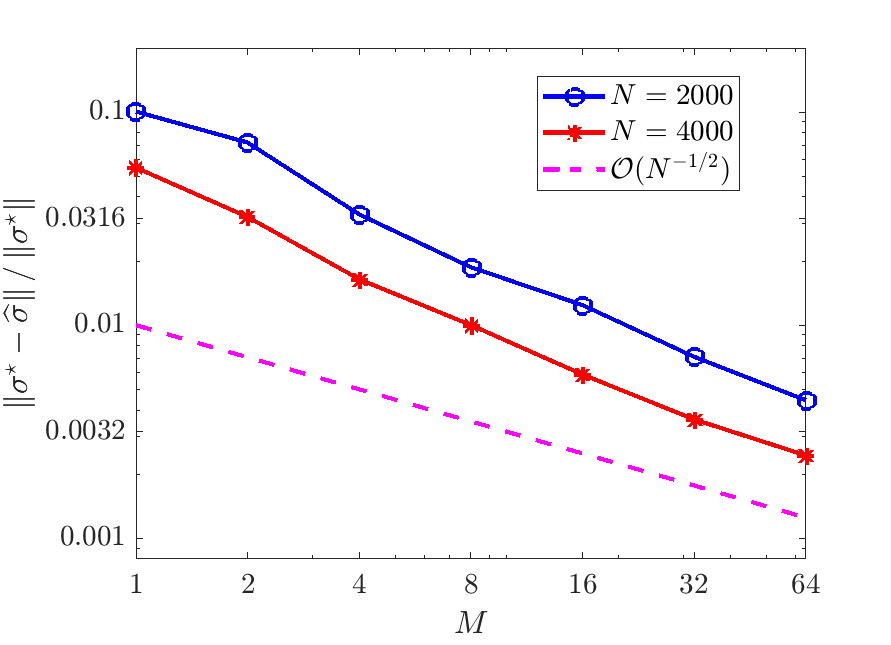} &
\hspace{-0.5cm}	\includegraphics[trim={0 0 25 15},clip,width=0.33\textwidth]{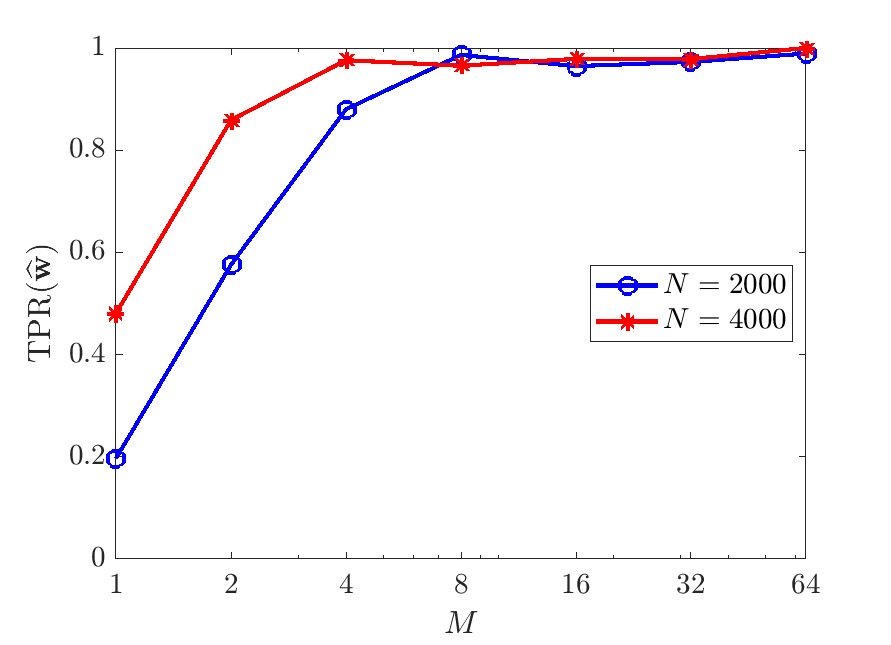}
\end{tabular}
\caption{Recovery of \eqref{fpmeanfield} in two spatial dimensions with $K^\star$ given by \eqref{logK} and $\sigma^\star=\frac{1}{\sqrt{4\pi}}$.}
\label{log2D_nu-1}
\end{figure}

With diffusivity $\sigma=\frac{1}{\sqrt{4\pi}}$, we obtain TPR$(\what)$ approximately greater than 0.95 for $NM\geq 16,000$ (Figure \ref{log2D_nu-1}, right), with an error trend in $\widehat{K}$ following an $\CalO(N^{-1/2})$ rate, and a trend in $\widehat{\sigma}$ of roughly $\CalO(N^{-2/3})$. Since convergence in $M$ for any fixed $N$ is not covered by the theorem above, this shows that combining multiple experiments may yield similar accuracy trends for moderately-sized particle systems. 

\section{Discussion}\label{sec:discussion}

We have developed a weak-form method for sparse identification of governing equations for interacting particle systems using the formalism of mean-field equations. In particular, we have investigating two lines of inquiry, (1) is the mean-field setting applicable for inference from medium-size batches of particles? And (2) can a low-cost, low-regularity density approximation such as a histogram be used to enforce weak-form agreement with the mean-field PDE? We have demonstrated on several examples that the answer is yes to both questions, despite the fact that the mean-field equations are only valid in the limit of infinitely many particles ($N\to \infty$). This framework is suitable for systems of several thousand particles in one and two spatial dimensions, and we have proved convergence in $N$ for the associated least-squares problem using simple histograms as approximate particle densities. In addition, the sparse regression approach allows one to identify the full system, including interaction potential $K$, local potential $V$, and diffusivity $\sigma$.\\

It was initially unclear whether the mean-field setting could be utilized for finite particle batches, hence this can be seen as a proof of concept, with the potential for many improvements and extensions. On the subject of density estimation, histograms lead to piecewise-constant approximations and resulting $\CalO(h)$ errors, hence choosing a density kernel $G$ to achieve high-accuracy quadrature without sacrificing the $\CalO(N)$ runtime of histogram computation seems prudent. The computational grid $\Cbf$ is also a free parameter, and may be optimized in tandem with a quadrature rule. The equally-spaced approach combined with the trapezoidal rule, as applied here, has several advantages, but may need adjustment for higher dimensions. Another obvious improvement would be to enforce convex constraints in the regression problem, such as lower bounds on diffusivity, or $K$ with long-range attraction depending on the distribution $\rho_{rr}\in \CalP([0,\infty))$ of pairwise distances (see \cite{lu2020learning} for further use $\rho_{rr}$). For extensions, the example system \eqref{cos2Deq} and resulting homogenization motivates further study of effective equations for systems with complex microstructure. In other fields this is described as {\it coarse-graining}. A related line of study is inference of 2nd-order particle systems, as explored in \cite{supekar2021learning}, which often lead to an infinite hierachy of mean-field equations. Our weak-form approach may provide a principled method for truncated and closing such hierarchies using particle data.

\section{Acknowledgements}
This research was supported in part by the NSF Mathematical Biology MODULUS grant 2054085, in part by the NSF/NIH Joint DMS/NIGMS Mathematical Biology Initiative grant R01GM126559, and in part by the NSF Computing and Communications Foundations grant 1815983. This work also utilized resources from the University of Colorado Boulder Research Computing Group, which is supported by the National Science Foundation (awards ACI-1532235 and ACI-1532236), the University of Colorado Boulder, and Colorado State University. The authors would also like to thank Prof.\ Vanja Duki\'c (University of Colorado at Boulder, Department of Applied Mathematics) for insightful discussions and helpful suggestions of references.

\bibliographystyle{plain}
\bibliography{researchCU.bib}

\begin{thebibliography}{10}

\bibitem{araujo2019mean}
Dyego Ara{\'u}jo, Roberto~I Oliveira, and Daniel Yukimura.
\newblock A mean-field limit for certain deep neural networks.
\newblock {\em arXiv preprint arXiv:1906.00193}, 2019.

\bibitem{bi2016motility}
Dapeng Bi, Xingbo Yang, M~Cristina Marchetti, and M~Lisa Manning.
\newblock Motility-driven glass and jamming transitions in biological tissues.
\newblock {\em Physical Review X}, 6(2):021011, 2016.

\bibitem{bibby1995martingale}
Bo~Martin Bibby and Michael S{\o}rensen.
\newblock Martingale estimation functions for discretely observed diffusion
  processes.
\newblock {\em Bernoulli}, pages 17--39, 1995.

\bibitem{bishwal2007parameter}
Jaya~PN Bishwal.
\newblock {\em Parameter estimation in stochastic differential equations}.
\newblock Springer, 2007.

\bibitem{bishwal2011estimation}
Jaya Prakash~Narayan Bishwal et~al.
\newblock Estimation in interacting diffusions: Continuous and discrete
  sampling.
\newblock {\em Applied Mathematics}, 2(9):1154--1158, 2011.

\bibitem{blondel2010continuous}
Vincent~D Blondel, Julien~M Hendrickx, and John~N Tsitsiklis.
\newblock Continuous-time average-preserving opinion dynamics with
  opinion-dependent communications.
\newblock {\em SIAM Journal on Control and Optimization}, 48(8):5214--5240,
  2010.

\bibitem{boers2016mean}
Niklas Boers and Peter Pickl.
\newblock On mean field limits for dynamical systems.
\newblock {\em Journal of Statistical Physics}, 164(1):1--16, 2016.

\bibitem{bolley2011stochastic}
Fran{\c{c}}ois Bolley, Jos{\'e}~A Canizo, and Jos{\'e}~A Carrillo.
\newblock Stochastic mean-field limit: non-lipschitz forces and swarming.
\newblock {\em Mathematical Models and Methods in Applied Sciences},
  21(11):2179--2210, 2011.

\bibitem{bongini2017inferring}
Mattia Bongini, Massimo Fornasier, Markus Hansen, and Mauro Maggioni.
\newblock Inferring interaction rules from observations of evolutive systems i:
  The variational approach.
\newblock {\em Mathematical Models and Methods in Applied Sciences},
  27(05):909--951, 2017.

\bibitem{boninsegna2018sparse}
Lorenzo Boninsegna, Feliks N{\"u}ske, and Cecilia Clementi.
\newblock Sparse learning of stochastic dynamical equations.
\newblock {\em The Journal of chemical physics}, 148(24):241723, 2018.

\bibitem{brunton2016discovering}
Steven~L Brunton, Joshua~L Proctor, and J~Nathan Kutz.
\newblock Discovering governing equations from data by sparse identification of
  nonlinear dynamical systems.
\newblock {\em Proceedings of the national academy of sciences},
  113(15):3932--3937, 2016.

\bibitem{callaham2021nonlinear}
Jared~L Callaham, J-C Loiseau, Georgios Rigas, and Steven~L Brunton.
\newblock Nonlinear stochastic modelling with {L}angevin regression.
\newblock {\em Proceedings of the Royal Society A}, 477(2250):20210092, 2021.

\bibitem{carrillo2019existence}
JA~Carrillo, MG~Delgadino, and FS~Patacchini.
\newblock Existence of ground states for aggregation-diffusion equations.
\newblock {\em Analysis and applications}, 17(03):393--423, 2019.

\bibitem{chen2021maximum}
Xiaohui Chen.
\newblock Maximum likelihood estimation of potential energy in interacting
  particle systems from single-trajectory data.
\newblock {\em Electronic Communications in Probability}, 26:1--13, 2021.

\bibitem{chen2021solving}
Xiaoli Chen, Liu Yang, Jinqiao Duan, and George~Em Karniadakis.
\newblock Solving inverse stochastic problems from discrete particle
  observations using the {F}okker--{P}lanck equation and physics-informed
  neural networks.
\newblock {\em SIAM Journal on Scientific Computing}, 43(3):B811--B830, 2021.

\bibitem{dolbeault2004optimal}
Jean Dolbeault and Beno{\^\i}t Perthame.
\newblock Optimal critical mass in the two dimensional {K}eller--{S}egel model
  in r2.
\newblock {\em Comptes Rendus Mathematique}, 339(9):611--616, 2004.

\bibitem{feng2021data}
Jinchao Feng, Yunxiang Ren, and Sui Tang.
\newblock Data-driven discovery of interacting particle systems using gaussian
  processes.
\newblock {\em arXiv preprint arXiv:2106.02735}, 2021.

\bibitem{fetecau2018zero}
Razvan~C Fetecau, Hui Huang, Daniel Messenger, and Weiran Sun.
\newblock Zero-diffusion limit for aggregation equations over bounded domains.
\newblock {\em arXiv preprint arXiv:1809.01763}, 2018.

\bibitem{fetecau2019propagation}
Razvan~C Fetecau, Hui Huang, and Weiran Sun.
\newblock Propagation of chaos for the {K}eller--{S}egel equation over bounded
  domains.
\newblock {\em Journal of Differential Equations}, 266(4):2142--2174, 2019.

\bibitem{fetecau2011swarm}
Razvan~C Fetecau, Yanghong Huang, and Theodore Kolokolnikov.
\newblock Swarm dynamics and equilibria for a nonlocal aggregation model.
\newblock {\em Nonlinearity}, 24(10):2681, 2011.

\bibitem{fetecau2017swarm}
Razvan~C Fetecau and Mitchell Kovacic.
\newblock Swarm equilibria in domains with boundaries.
\newblock {\em SIAM Journal on Applied Dynamical Systems}, 16(3):1260--1308,
  2017.

\bibitem{freedman1981histogram}
David Freedman and Persi Diaconis.
\newblock On the histogram as a density estimator: L2 theory.
\newblock {\em Zeitschrift f{\"u}r Wahrscheinlichkeitstheorie und verwandte
  Gebiete}, 57(4):453--476, 1981.

\bibitem{gkeka2020machine}
Paraskevi Gkeka, Gabriel Stoltz, Amir Barati~Farimani, Zineb Belkacemi, Michele
  Ceriotti, John~D Chodera, Aaron~R Dinner, Andrew~L Ferguson, Jean-Bernard
  Maillet, Herv{\'e} Minoux, et~al.
\newblock Machine learning force fields and coarse-grained variables in
  molecular dynamics: application to materials and biological systems.
\newblock {\em Journal of Chemical Theory and Computation}, 16(8):4757--4775,
  2020.

\bibitem{gomes2019parameter}
Susana~N Gomes, Andrew~M Stuart, and Marie-Therese Wolfram.
\newblock Parameter estimation for macroscopic pedestrian dynamics models from
  microscopic data.
\newblock {\em SIAM Journal on Applied Mathematics}, 79(4):1475--1500, 2019.

\bibitem{guo2021progress}
Jiawei Guo.
\newblock The progress of three astrophysics simulation methods: Monte-carlo,
  pic and mhd.
\newblock In {\em Journal of Physics: Conference Series}, volume 2012, page
  012136. IOP Publishing, 2021.

\bibitem{jabin2017mean}
Pierre-Emmanuel Jabin and Zhenfu Wang.
\newblock Mean field limit for stochastic particle systems.
\newblock In {\em Active Particles, Volume 1}, pages 379--402. Springer, 2017.

\bibitem{jang2020d}
Jun-Gi Jang and U~Kang.
\newblock D-tucker: Fast and memory-efficient tucker decomposition for dense
  tensors.
\newblock In {\em 2020 IEEE 36th International Conference on Data Engineering
  (ICDE)}, pages 1850--1853. IEEE, 2020.

\bibitem{kasonga1990maximum}
Raphael~A Kasonga.
\newblock Maximum likelihood theory for large interacting systems.
\newblock {\em SIAM Journal on Applied Mathematics}, 50(3):865--875, 1990.

\bibitem{keller1971model}
Evelyn~F Keller and Lee~A Segel.
\newblock Model for chemotaxis.
\newblock {\em Journal of theoretical biology}, 30(2):225--234, 1971.

\bibitem{LagergrenNardiniMichaelLavigneEtAl2020ProcRSocA}
John~H. Lagergren, John~T. Nardini, G.~Michael~Lavigne, Erica~M. Rutter, and
  Kevin~B. Flores.
\newblock Learning partial differential equations for biological transport
  models from noisy spatio-temporal data.
\newblock {\em Proc. R. Soc. A.}, 476(2234):20190800, February 2020.

\bibitem{lang2020learning}
Quanjun Lang and Fei Lu.
\newblock Learning interaction kernels in mean-field equations of 1st-order
  systems of interacting particles.
\newblock {\em arXiv preprint arXiv:2010.15694}, 2020.

\bibitem{lelievre2016partial}
Tony Lelievre and Gabriel Stoltz.
\newblock Partial differential equations and stochastic methods in molecular
  dynamics.
\newblock {\em Acta Numerica}, 25:681--880, 2016.

\bibitem{li2021extracting}
Yang Li and Jinqiao Duan.
\newblock Extracting governing laws from sample path data of non-gaussian
  stochastic dynamical systems.
\newblock {\em arXiv preprint arXiv:2107.10127}, 2021.

\bibitem{lo1988maximum}
Andrew~W Lo.
\newblock Maximum likelihood estimation of generalized it{\^o} processes with
  discretely sampled data.
\newblock {\em Econometric Theory}, 4(2):231--247, 1988.

\bibitem{lu2020learning}
Fei Lu, Mauro Maggioni, and Sui Tang.
\newblock Learning interaction kernels in heterogeneous systems of agents from
  multiple trajectories.
\newblock {\em J. Mach. Learn. Res.}, 22:32--1, 2021.

\bibitem{lukeman2010inferring}
Ryan Lukeman, Yue-Xian Li, and Leah Edelstein-Keshet.
\newblock Inferring individual rules from collective behavior.
\newblock {\em Proceedings of the National Academy of Sciences},
  107(28):12576--12580, 2010.

\bibitem{malik2018low}
Osman~Asif Malik and Stephen Becker.
\newblock Low-rank tucker decomposition of large tensors using tensorsketch.
\newblock {\em Advances in neural information processing systems},
  31:10096--10106, 2018.

\bibitem{meleard1996asymptotic}
Sylvie M{\'e}l{\'e}ard.
\newblock Asymptotic behaviour of some interacting particle systems;
  mckean-vlasov and boltzmann models.
\newblock In {\em Probabilistic models for nonlinear partial differential
  equations}, pages 42--95. Springer, 1996.

\bibitem{messenger2020weakpde}
Daniel~A Messenger and David~M Bortz.
\newblock Weak {SIND}y for partial differential equations.
\newblock {\em Journal of Computational Physics}, page 110525, 2021.

\bibitem{messenger2020weak}
Daniel~A Messenger and David~M Bortz.
\newblock Weak {SIND}y: Galerkin-based data-driven model selection.
\newblock {\em Multiscale Modeling \& Simulation}, 19(3):1474--1497, 2021.

\bibitem{messenger2020equilibria}
Daniel~A Messenger and Razvan~C Fetecau.
\newblock Equilibria of an aggregation model with linear diffusion in domains
  with boundaries.
\newblock {\em Mathematical Models and Methods in Applied Sciences},
  30(04):805--845, 2020.

\bibitem{milstein1994numerical}
Grigorii~Noikhovich Milstein.
\newblock {\em Numerical integration of stochastic differential equations},
  volume 313.
\newblock Springer Science \& Business Media, 1994.

\bibitem{nardini2021learning}
John~T Nardini, Ruth~E Baker, Matthew~J Simpson, and Kevin~B Flores.
\newblock Learning differential equation models from stochastic agent-based
  model simulations.
\newblock {\em Journal of the Royal Society Interface}, 18(176):20200987, 2021.

\bibitem{rudy2017data}
Samuel~H Rudy, Steven~L Brunton, Joshua~L Proctor, and J~Nathan Kutz.
\newblock Data-driven discovery of partial differential equations.
\newblock {\em Science Advances}, 3(4):e1602614, 2017.

\bibitem{sepulveda2013collective}
N{\'e}stor Sep{\'u}lveda, Laurence Petitjean, Olivier Cochet, Erwan
  Grasland-Mongrain, Pascal Silberzan, and Vincent Hakim.
\newblock Collective cell motion in an epithelial sheet can be quantitatively
  described by a stochastic interacting particle model.
\newblock {\em PLoS computational biology}, 9(3):e1002944, 2013.

\bibitem{sharrock2021parameter}
Louis Sharrock, Nikolas Kantas, Panos Parpas, and Grigorios~A Pavliotis.
\newblock Parameter estimation for the mckean-vlasov stochastic differential
  equation.
\newblock {\em arXiv preprint arXiv:2106.13751}, 2021.

\bibitem{sun2020low}
Yiming Sun, Yang Guo, Charlene Luo, Joel Tropp, and Madeleine Udell.
\newblock Low-rank tucker approximation of a tensor from streaming data.
\newblock {\em SIAM Journal on Mathematics of Data Science}, 2(4):1123--1150,
  2020.

\bibitem{supekar2021learning}
Rohit Supekar, Boya Song, Alasdair Hastewell, Alexander Mietke, and J{\"o}rn
  Dunkel.
\newblock Learning hydrodynamic equations for active matter from particle
  simulations and experiments.
\newblock {\em arXiv preprint arXiv:2101.06568}, 2021.

\bibitem{sznitman1991topics}
Alain-Sol Sznitman.
\newblock Topics in propagation of chaos.
\newblock In {\em Ecole d'{\'e}t{\'e} de probabilit{\'e}s de Saint-Flour
  XIX—1989}, pages 165--251. Springer, 1991.

\bibitem{van2015simulating}
Paul Van~Liedekerke, MM~Palm, N~Jagiella, and Dirk Drasdo.
\newblock Simulating tissue mechanics with agent-based models: concepts,
  perspectives and some novel results.
\newblock {\em Computational particle mechanics}, 2(4):401--444, 2015.

\bibitem{warren1992astrophysical}
Michael~S Warren and John~K Salmon.
\newblock Astrophysical n-body simulations using hierarchical tree data
  structures.
\newblock {\em Proceedings of Supercomputing}, 1992.

\bibitem{weinan2011principles}
E~Weinan.
\newblock {\em Principles of multiscale modeling}.
\newblock Cambridge University Press, 2011.

\end{thebibliography}

\appendix

\section{Notation \& Specifications for Examples}\label{app:specs}

{\small \begin{table}
\begin{center}
\begin{tabular}{|p{2.8cm}p{10cm}p{3.5cm}|}
\hline Variable & Definition & Domain \\ \hline
$K$ & pairwise interaction potential & $L^1_{loc}(\Rbb^d,\Rbb)$ \\ \hline
$V$ & local potential & $C(\Rbb^d,\Rbb)$ \\ \hline
$\sigma$ & diffusivity & $C(\Rbb^d,\Rbb^{d\times d})$ \\ \hline
$N$ & number of particles per experiment & $\{2,3,\dots\}$ \\ \hline
$d$ & dimension of latent space & $\Nbb$ \\ \hline
$T$ & final time & $(0,\infty)$ \\ \hline
$(\Omega,\CalB,\Pbb,(\CalF_t)_{t\geq 0})$ & filtererd probability space & \\  \hline
$(B^{(i)}_t)_{i=1}^N$ & independent $\Rbb^d$ Brownian motions on $(\Omega,\CalB,\Pbb,(\CalF_t)_{t\geq 0})$ & \\  \hline
$X^{(i)}_t$ & $i$th particle in the particle system \eqref{dXt} at time $t$ & $\Rbb^d$ \\ \hline
$\Xbf_t$ & $N$-particle system \eqref{dXt} at time $t$ & $\Rbb^{Nd}$ \\ \hline
$\mu^N_t$ & empirical measure & $\CalP(\Rbb^d)$\\ \hline
$F^N_t$ & distribution of the process $\Xbf_t$ in $\Rbb^{Nd}$ & \\ \hline
$X_t$ & mean-field process \eqref{mckeanvlasovSDE} at time $t$ & $\Rbb^{Nd}$ \\ \hline
$\mu_t$ & distribution of $X_t$ & $\CalP(\Rbb^d)$\\ \hline
$\tbf$ & $L$ discrete timepoints & $[0,T]$ \\ \hline
$\pmb{\Xbb}_\tbf$ & Collection of $M$ independent samples of $\Xbf_t$ at $\tbf$ & $\Rbb^{MLNd}$ \\ \hline
$\pmb{\Ybb}_\tbf$ & Sample of $\Xbf_{\tbf}$ corrupted with i.i.d.\ additive noise & $\Rbb^{MLNd}$ \\ \hline
$U_t$ & approximate density from particle positions & $\CalP(\Rbb^d)$ \\ \hline
$G$ & density kernel mapping $\mu^N_t$ to $U_t$ & $L^1(\Rbb^d\times\Rbb^d, \Rbb)$\\\hline
$\CalD$ & spatial support of $U_t$, $t\in[0,T]$ & compact subset of $\Rbb^d$ \\\hline
$\Cbf$ & discretization of $\CalD$ &  \\\hline
$\Ubf_t$ & discrete approximate density $U_t(\Cbf)$ &  \\\hline
$\lan \cdot,\cdot\ran_h$ & \raggedright semi-discrete inner product, trapezoidal rule over $\Cbf$ &  \\\hline
$\lan \cdot,\cdot\ran_{h,\Delta t}$ & fully-discrete inner product, trapezoidal rule  over $\Cbf\times \tbf$ &  \\\hline
$\Lbb_K$ & library of candidate interaction forces &  \\ \hline
$\Lbb_V$ & library of candidate local forces &  \\ \hline
$\Lbb_\sigma$ & library of candidate diffusivities &  \\ \hline
$\Lbb$ & $(\Lbb_K,\Lbb_V,\Lbb_\sigma)$ &  \\ \hline
$\Psi$ & set of $n$ test functions $(\psi_k)_{k=1}^n$ & \\ \hline
$\phi_{m,p}(v;\, \Delta)$ & test functions used in this work (equation \eqref{testfcn}) & \\  \hline
$\pmb{\lambda}$ & set of sparsity thresholds &  \\  \hline
$\CalL$ & loss function for sparsity thresholds (equation \eqref{lossfcn}) & \\ \hline
\end{tabular}
\end{center}
\caption{Notations used throughout.}
\end{table}}

\begin{table}
\begin{center}
\begin{tabular}{|l|l|}
\hline
Mean-field Term & Trial Function Library  \\\hline
$\nabla \cdot(U\nabla K*U)$ & $\nabla \cdot(U\nabla |x|^m*U)$, $m\in \{1, 2,3,4,5,6, 7\}$ \\ \hline
$\nabla \cdot(U\nabla V)$ & $\partial_{x_i}\left(U\cos(m x_1)\cos(n x_2)\right)$, $(m,n)\in\{0,1,2,3,4,5\}$, $i\in\{1,2\}$ \\\hline
$\frac{1}{2}\sum_{i=1}^d \frac{\partial^2(U\sigma\sigma^T)_{ij}}{\partial x_i\partial x_j}$ & $\Delta (U\cos(m x_1)\cos(n x_2))$, $(m,n)\in\{0,1,2,3,4,5\}$  \\ \hline
\end{tabular}
\end{center}
\caption{Trial function library for local 2D example (Section \ref{sec:cos2D}).}\label{coslib}
\end{table}

\begin{table}
\begin{center}
\begin{tabular}{|l|l|}
\hline
Mean-field Term & Trial Function Library  \\\hline
$\nabla \cdot(U\nabla K*U)$ & $\partial_x \cdot(U\partial_x  |x|^m*U)$, $m\in \{1,2,3,4,5,6,7\}$ \\ \hline
$\nabla \cdot(U\nabla V)$ & $\partial_x \left(U x^m \right)$, $m\in \{0,2,3,4,5,6,7,8\}$ \\ \hline
$\frac{1}{2}\sum_{i=1}^d \frac{\partial^2(U\sigma\sigma^T)_{ij}}{\partial x_i\partial x_j}$ & $\partial_{xx} (Ux^m)$, $m\in\{0,1,2,3,4,5,6,7,8\}$  \\ \hline
\end{tabular}
\end{center}
\caption{Trial function library for nonlocal 1D example (Section \ref{sec:qanr}).}\label{qanrlib}
\end{table}

\begin{table}
\begin{center}
\begin{tabular}{|l|l|}
\hline
Mean-field Term & Trial Function Library  \\\hline
$\nabla \cdot(U\nabla K*U)$ & $\begin{dcases} \nabla \cdot(U\nabla  |x|^m*U), & m\in \{2,3,4,5,6\} \\ \nabla \cdot(U\nabla  \left[|x|^{1/2}\right]_\delta*U) \\
\nabla \cdot(U\nabla  \left[|x|(\log|x|-1)\right]_\delta*U)\\
\nabla \cdot(U\nabla  \left[\log|x|\right]_\delta*U)\\
\end{dcases}$\\ \hline
$\nabla \cdot(U\nabla V)$ & $\nabla \left(U \nabla (x_1^mx_2^n) \right)$, $(m,n)\in\Nbb\times\Nbb, 1\leq m+n\leq 6$ \\ \hline
$\frac{1}{2}\sum_{i=1}^d \frac{\partial^2(U\sigma\sigma^T)_{ij}}{\partial x_i\partial x_j}$ & $\frac{\partial^2}{\partial x_i\partial x_j}(U\cos(m x_1)\cos(n x_2))$, $(m,n)\in\{0,1,2\}$,\,$(i,j)\in\{1,2\}$ \\ \hline
\end{tabular}
\end{center}
\caption{Trial function library for nonlocal 2D example (Section \ref{sec:log2D}). Interaction potentials $[\,K\,]_\delta$ indicate cutoff potentials of the form \eqref{logcutoff} with $\delta=0.01$ such that the resulting potential is Lipschitz.}\label{coslib}
\end{table}

\begin{table}
\begin{center}
\hspace{-2cm}\begin{tabular}{|c|c|c|c|c|c|c|c|c|c|c|c|}
\hline
Example & $m_x$ & $m_t$ & $p_x$ & $p_t$ & $s_x$ & $s_t$ &  size$(\Gbf)$ & $\kappa(\Gbf)$ & size$(\Ubf)$ & $(h,\Delta t)$ & Walltime \\ \hline
Local 2D & 31 & 16 & 5 & 3 & 10 & 5& 686$\times 85$ & $3.8\times 10^7$ & $128\times 128 \times 101$ & $(0.078,0.02)$& 9.7s\\\hline
Nonlocal 1D & 29 & 8 & 5 & 3 & 5 & 1  & 3368$\times 24$ & $8.7\times 10^8$ & $256\times 101$ & $(0.023,0.01)$ & 2.6s\\\hline
Nonlocal 2D & 25 & 8 & 5 & 3 & 8 & 1  & 6500$\times 53$ & $4.8\times 10^7$ & $128\times 128\times 81$ & $(0.047,0.1)$ & 8.5s\\\hline
\end{tabular}
\end{center}
\caption{Discretization parameters for examples. Note the condition number $\kappa(\Gbf)$ and walltime are listed for representative samples with 64,000 total particles.}
\label{specs}
\end{table}


\end{document}